\definecolor{lightblue}{RGB}{51, 153, 255}
\definecolor{lightred}{RGB}{255, 51, 51}
\title[Inverse Delayed RL]{Inverse Delayed Reinforcement Learning}
\author{
 \Name{Simon Sinong Zhan$^{*1}$} \Email{SinongZhan2028@u.northwestern.edu}\\
 \Name{Qingyuan Wu$^{*2}$} \Email{qingyuan.wu@soton.ac.uk}\\ 
 \Name{Zhian Ruan$^1$} \Email{zhianruan2025@u.northwestern.edu}\\
 \Name{Frank Yang$^1$} \Email{frankyang2024@u.northwestern.edu}\\
 \Name{Philip Wang$^1$} \Email{philipwang2025@u.northwestern.edu}\\
 \Name{Yixuan Wang$^1$} \Email{yixuanwang2024@u.northwestern.edu}\\
 \Name{Ruochen Jiao$^1$} \Email{RuochenJiao2024@u.northwestern.edu}\\
 \Name{Chao Huang$^2$} \Email{Chao.Huang@soton.ac.uk}\\ 
 \Name{Qi Zhu$^1$} \Email{qzhu@northwestern.edu}\\
 \addr $^1$  Department of Electrical and Computer Engineering, Northwestern University, USA\\
 \addr $^2$ School of Electronics and Computer Science, University of Southampton, UK\\
}
\begin{document}
\maketitle
\def\thefootnote{*}\footnotetext{These authors contributed equally to this work}

\begin{abstract}
Inverse Reinforcement Learning (IRL) has demonstrated effectiveness in a variety of imitation tasks. In this paper, we introduce an IRL framework designed to extract rewarding features from expert trajectories affected by delayed disturbances. Instead of relying on direct observations, our approach employs an efficient off-policy adversarial training framework to derive expert features and recover optimal policies from augmented delayed observations. Empirical evaluations in the MuJoCo environment under diverse delay settings validate the effectiveness of our method. Furthermore, we provide a theoretical analysis showing that recovering expert policies from augmented delayed observations outperforms using direct delayed observations.
\end{abstract}

\begin{keywords}
Imitation Learning; Inverse Reinforcement Learning; Delay in Cyber-Physical Systems  
\end{keywords}

\section{Introduction}
Reinforcement Learning (RL) has achieved remarkable success across diverse domains, including video and board games \citep{berner2019dota,silver2017mastering}, robotics \citep{kormushev2013reinforcement}, and safety-critical autonomous systems \citep{wang2023joint,wang2023enforcing,zhan2024state}. Despite these advancements, RL heavily relies on the quality of the reward function, which often demands significant domain expertise, labor, and time to design~\citep{russell1998learning}. To address this challenge, \cite{kalman1964linear} introduced the concept of the inverse problem of optimal control theory, providing a way to bypass explicit reward or cost function specification. With the integration of machine learning techniques, Imitation Learning (IL) has evolved into two main branches. Behavior Cloning (BC~\citep{torabi2018generative}) directly learns from expert demonstrations by aligning with the distribution of expert behaviors. In contrast, Inverse Reinforcement Learning (IRL~\citep{arora2021survey}) focuses on extracting reward functions from expert behavior to guide policy learning. These methods have shown significant promise in real-world applications, including autonomous driving~\citep{codevilla2018end} and legged locomotion~\citep{peng2020learning}. However, expanding IL to broader applications still requires attention to address real-world environmental challenges, such as hybrid dynamic~\citep{hempel2014inverse,su2024switching}, stochasticity~\citep{zhan2024model}, and delays~\citep{xue2020over}. In this paper, we consider IRL under the delay setting. Specifically, how can we extract rewarding features from expert trajectories when the provided trajectories are interfered with delays?

Delay disturbance is prevalent in nowadays timing-sensitive Cyber-Physical Systems applications~\citep{xue2017safe,feng2019taming,liu2023safety} including robotics~\citep{quadrotor_delay, arm_delay}, transportation systems~\citep{cao2020using}, and financial trading~\citep{hasbrouck2013low}, where feedback delays are inherent in actuation, transmission, and sensors processing~\citep{sun2021relectrode,sun2022microfluid}. Under the RL formulation, delays can be primarily categorized into three sections, namely observation delay, action delay, and reward delay~\citep{liotet2022delayed}. Unlike well-explored reward delay~\citep{han2022off,zhang2023multi}, observation delay and action delay disrupt the Markovian property of system dynamic, posing significant challenges to both RL and IRL methods. There are several methods to tackle the delayed RL problem. Augmentation-based approaches~\citep{altman_delay,katsikopoulos2003markov} reformulate the delayed RL problem back to the MDP by stacking the latest observed state with a sequence of actions that happened within the delay period transforming them into a new observation information state. Imitation-based approaches~\citep{liotet2022delayed,xie2023addressing,wu2024variational} formulate delayed RL into an imitation problem to the delay-free scenario through constructing explicit delay belief functions.

Despite the advancements in delayed RL, limited attention has been given to addressing delay in IRL. While some studies explore reward delays within the context of imitation learning~\citep{krishnan2016hirl,krishnan2019swirl}, few—if any—consider observation or action delays, where the expert trajectories consist solely of delayed observations and corresponding action sequences. Addressing delay in IRL has significant implications across various applications. For instance, in teleoperated robotics, such as remote robotic surgery or underwater exploration, automating these tasks using imitation learning requires adapting to signal delays not only during deployment but also within the expert demonstrations themselves. The difficulties are the following: The temporal misalignment between the perceived state and corresponding actions renders direct behavior cloning infeasible. Moreover, this misalignment, coupled with the non-Markovian nature of delayed observations, exacerbates the inherent ill-posedness of IRL. This ill-posedness arises from the fact that multiple reward functions can potentially explain the expert behavior observed in the trajectories. Thus, the reward functions extracted from the delayed observation state might not be optimal for recovering the expert performance. 

In this paper, we propose an off-policy framework for Inverse Delayed Reinforcement Learning (IDRL). By applying a state augmentation technique to both offline expert trajectories and online environmental interactions, our approach constructs a richer feature space that incorporates augmented state and action information. Using this augmented representation, we extract reward features that better capture the underlying dynamics. These reward signals are then integrated with state-of-the-art policy optimization methods, enabling our framework to achieve superior performance in recovering expert behavior compared to naive IRL baselines built on delayed observation states. Furthermore, we provide a theoretical analysis that justifies the use of augmented state representations over delayed observation states in the context of IRL, offering deeper insights into their effectiveness.

We first introduce related literature on Delayed RL and IRL in \sectionref{sec::related_work}, following a brief background introduction for problem formulation (\sectionref{sec::prelim}).
In \sectionref{sec::theoretical_analysis}, we present a comprehensive theoretical analysis of the performance difference using different reward shaping.
Inspired by the theoretical results, in \sectionref{sec::inverse_drl}, we specifically propose IDRL, an off-policy Inverse Delayed RL method.
In \sectionref{sec::experiment}, we show that our IDRL significantly outperforms various baselines over different MuJoCo benchmarks.

\section{Related Works}
\label{sec::related_work}
\paragraph{Delayed RL.}
Delayed signals within the RL setting can be categorized into three scenarios namely rewards delay, actions delay, and observations delay. Rewards delay has been studied extensively~\citep{han2022off,arjona2019rudder,zhang2023multi}, and in this study, we concentrate on observations and actions delay, which have also been proved to be the equivalent problem~\citep{katsikopoulos2003markov}. Early approaches apply RL techniques to the original state space. While it maintains high computational efficiency, the performance significantly deteriorates in delayed observation due to the absence of Markovian property. Subsequent improvements leveraged various predictive models like deterministic generators~\citep{learning_planning_delayed_feedback}, Gaussian models~\citep{chen2021delay}, and transformer architectures~\citep{learning_belief}. Additionally, there are also attempts building upon the world model with delay adaption~\citep{karamzade2024delay,valensi2024tree}. However, learning from the original state space cannot effectively compute approximation errors accumulated from observation delays, causing the performance to deteriorate with suboptimal solutions in delayed settings ~\citep{learning_belief}. The augmentation-based approach augments the state/action space with relevant past actions/states. This approach is notably more promising as it restores the Markovian property~\citep{altman_delay, delay_mdp, kim2023belief, wu2024boosting} but possesses inherent sample complexity/efficiency issues. Solving this issue, \cite{wu2024variational} proposes a novel framework to transfer the original delayed RL objective into a behavior cloning of the delay-free policy through variational inference techniques. In contrast to the number of delayed RL research, there are limited works in the delayed IRL setting.

\paragraph{Inverse RL.}
Early research in Inverse Optimal Control (IOC)~\citep{kalman1964linear} focused on recovering optimal control rule that maximizes margins between expert demonstrations and alternative policies, which evolves into machine-learning version Inverse Reinforcement Learning (IRL)\citep{ng2000algorithms}. Bayesian methodologies later explored varied reward priors - from Boltzmann distributions \citep{ramachandran2007bayesian, choi2011map, chan2021scalable} to Gaussian Processes \citep{levine2011nonlinear}. Concurrent development on statistical approaches expanded the field through multi-class classification \citep{klein2012inverse, brown2019extrapolating} and decision tree methods \citep{levine2010feature}. Entropy-based optimization leverages the Maximum Entropy(ME) principle \citep{shore1980axiomatic} to determine trajectory distributions from reward parameters. \cite{ziebart2008maximum,ziebart2010modeling} reformulated reward inference as a maximum likelihood problem using a linear combination of hand-crafted features. \citet{wulfmeier2015maximum} extended it to deep neural network representation, while \citet{finn2016guided} added importance sampling for model-free estimation. Inspired by Generative Adversarial Networks, the latest advances in IRL center on adversarial methods, where discriminator networks learn reward functions by distinguishing between expert and agent behaviors \citep{ho2016generative, fu2017learning}. There are extensions to solve sample efficiency \citep{kostrikov2018discriminator, blonde2019sample} and to solve stochasticity MDP issue \citep{zhan2024model}. As for IRL under delayed scenarios, there has been some attention on delayed rewards~\citep{krishnan2016hirl, krishnan2019swirl}. However, to our knowledge, few have considered delayed action and observations under IRL settings.

\section{Preliminaries}
\label{sec::prelim}

In delay-free IRL setting, we consider a standard MDP without reward function $\mathcal{M}'$ defined as a tuple $\langle \mathcal{S}, \mathcal{A}, \mathcal{T}, \gamma, \rho \rangle$, where $\mathcal{S}$ is the state space $s\in\mathcal{S}$, $\mathcal{A}$ is the action space $a\in\mathcal{A}$, $\mathcal{T}: \mathcal{S} \times \mathcal{A} \times \mathcal{S} \rightarrow [0,1]$ is the transition probability function, $\gamma \in (0,1)$ is the discount factor, and $\rho_0$ is the initial state distribution. The discounted visitation distribution of trajectory $\tau$ with policy $\pi$ is given by:
\begin{equation}
\label{eq::normal_visitation_prob}
    p(\tau)=\rho_0\prod_{t=0}^{T-1}\gamma^t\mathcal{T}(s_{t+1}\vert s_t,a_t)\pi(a_t\vert s_t),
\end{equation}
where $T$ is the horizon.
Given an MDP $\mathcal{M}'$ without reward and expert trajectories collected in a data buffer $D_{exp}= \{\tau_1, ..., \tau_n\}$ where $\tau_i$ represents individual trajectories collected using expert demonstration policy $\pi^E: \mathcal{S} \rightarrow \mathcal{A}$, IRL infers reward function $R_\theta: \mathcal{S} \times \mathcal{A} \rightarrow \mathbb{R}$, where $\theta$ is the reward parameter. Maximizing the entropy of distribution over paths subject to the feature constraints from observation ~\citep{ziebart2008maximum,ziebart2010modeling}, the optimal reward parameters are obtained by
\begin{equation}
\label{eq::irl_optimization}
    \theta^* = \arg\max_{\theta} \sum_{\mathcal{D}_{exp}} \log p(\tau|\theta).
\end{equation}
Under delayed RL settings, we consider MDPs with an observation delay between the action taken and when its state transition and reward are observed, termed delayed MDPs. Assuming under a constant observation delay $\Delta$, a delayed MDP $\mathcal{M}_{\Delta}$ inherits the Markov property based on the augmentation approaches ~\citep{altman_delay, delay_mdp}. It can be formulated as a tuple $\langle \mathcal{X}, \mathcal{A}, \mathcal{T}_\Delta, R_\Delta, \gamma, \rho_\Delta \rangle$. The augmented state space is defined as $\mathcal{X}:= \mathcal{S} \times \mathcal{A}^\Delta$, where an augmented state $x_t=\{s_{t-\Delta}, a_{t-\Delta},\cdots, a_{t-1}\} \in \mathcal{X}$. The delayed transition function is defined as:
\begin{equation}
\label{eq::delay_transition}
    \mathcal{T}_\Delta(x_{t+1}|x_t, a_t):=\mathcal{T}(s_{t-\Delta+1}|s_{t-\Delta}, a_{t-\Delta})\delta_{a_t}(a'_t)\prod_{i=1}^{\Delta-1}\delta_{a_{t-i}}(a'_{t-i}),
\end{equation}
where $\delta$ is the Dirac distribution. The delayed reward function is defined as $R_\Delta(x_t, a_t):= \mathop{\mathbb{E}}_{s_t\sim b(\cdot|x_t)}\left[R(s_t, a_t)\right]$ where $b$ is the belief function defined as:
\begin{equation}
\label{eq::belief_function}
    b(s_t|x_t):=\int_{\mathcal{S}^\Delta}\prod_{i=0}^{\Delta-1}\mathcal{T}(s_{t-\Delta+i+1}|s_{t-\Delta+i}, a_{t-\Delta+i})\mathrm{d}{s_{t-\Delta+i+1}}.
\end{equation} 
And the initial augmented state distribution is defined as $\rho_\Delta =\rho_0\prod_{i=1}^{\Delta}\delta_{a_{-i}}$. Correspondingly, we can define the trajectory visitation probability in the delayed MDP $\mathcal{M}_\Delta$ with policy $\pi_\Delta$ as:
\begin{equation}
\label{eq::delayed_visitation_prob}
    p(\tau_\Delta)=\rho_\Delta\prod_{t=0}^{T-1}\gamma^t\mathcal{T}_\Delta(x_{t+1}\vert x_t,a_t)\pi_\Delta(a_t\vert x_t).
\end{equation}
Under the delayed IRL setting, expert trajectories exhibit temporal misalignment, where the observed sequence follows the pattern \((s_{t-\Delta}, a_t, s_{t-\Delta+1}, \dots)\). This misalignment raises a fundamental question: \textbf{what kind of representation should be used in reward shaping to recover the expert policy?} Specifically, the policy and reward can be conditioned on the delayed observation state \(s_{t-\Delta}\), which directly corresponds to the provided trajectories, or on an augmented state representation \(x_t\) that accounts for the delay and incorporates additional context. The design of state representation is critical, as it directly affects the accuracy and robustness of the recovered policy. In the next section, we provide a theoretical analysis to address this question, offering insights into the trade-offs and advantages of each approach.

\section{Theoretical Analysis}
\label{sec::theoretical_analysis}
In this section, we analyze the difference in optimal performance when the same IRL algorithm is applied using either the delayed observation state or the augmented state. It is important to note that the reward functions in these two cases are also defined differently depending on different inputs. To quantify this performance difference, we first examine the discrepancy between the recovered reward functions under each state representation. We then extend this analysis to evaluate the impact on the value function, assuming the same policy optimization method is used. We assume that learned reward functions and transition dynamic functions satisfy the Lipschitz Continuity (LC) property, which is a common assumption that appears in RL setting~\citep{rachelson2010locality}. And reward functions are bounded by a maximum value $R_{\max}$. 
\begin{definition}[Lipschitz Continuous Reward Function~\citep{rachelson2010locality}]
\label{def::reward_func_lipschitz}
A reward function $R$ is $L_\mathcal{R}$-Lipschitz Continuous, if, $\forall (s_1, a_1), (s_2, a_2) \in \mathcal{S} \times \mathcal{A}$, it satisfies
\begin{equation*}
    d_\mathcal{R}(R(s_1, a_1) - R(s_2, a_2)) \leq L_\mathcal{R} (d_{\mathcal{S}}(s_1, s_2) + d_{\mathcal{A}}(a_1, a_2)).
\end{equation*}
\end{definition}

\begin{definition}[Time Lipschitz Continuous Dynamic~\citep{metelli2020control}]
\label{def::dynamic_func_lipschitz}
A dynamic is $L_\mathcal{T}$-Time Lipschitz Continuous, if, $\forall (s_1, a_1), (s_2, a_2) \in \mathcal{S} \times \mathcal{A}$, it satisfies
\begin{equation*}
    W_1(\mathcal{T}(\cdot|s, a) || \delta_s) \leq L_\mathcal{T}.
\end{equation*}
\end{definition}
where \textit{Euclidean distance} $d$ is adopted to describe distance in a deterministic space (e.g., $d_\mathcal{S}$ for state space $\mathcal{S}$, $d_\mathcal{A}$ for action space $\mathcal{A}$ and $d_\mathcal{R}$ for reward space $\mathcal{R}$), and $L1$-\textit{Wasserstein distance}~\citep{villani2009optimal}, denoted as $W_1$, is used in a probabilistic space. From the above assumptions, we can infer the Lipschitz continuity on belief function. Detailed proof can be found in \appendixref{proof::belief_func_lipschitz}.
\begin{lemma}[Time Lipschitz Continuous Belief]
\label{lemma::belief_func_lipschitz}
Given a $L_\mathcal{R}$-Time Lipschitz Continuous Dynamic, the belief $b$ is $L_\mathcal{R}$-Time Lipschitz Continuous, $\forall x_t \in \mathcal{X}$, satisfying
\begin{equation*}
    W_1(b(\cdot|x_t) || \delta_{s_{t-\Delta}}) \leq \Delta L_\mathcal{T}.
\end{equation*}
\end{lemma}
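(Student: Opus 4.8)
The plan is to express the belief as the law of the state reached by propagating the point mass $\delta_{s_{t-\Delta}}$ forward through $\Delta$ one-step transition kernels (with the action window encoded in $x_t$ held fixed), and then to telescope with the triangle inequality for $W_1$, bounding each one-step displacement by the Time-Lipschitz constant $L_\mathcal{T}$.

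Concretely, fix $x_t=\{s_{t-\Delta},a_{t-\Delta},\dots,a_{t-1}\}$, set $b_0:=\delta_{s_{t-\Delta}}$, and for $0\le k\le\Delta-1$ define the intermediate belief $b_{k+1}(\,\cdot\,):=\int_{\mathcal{S}}\mathcal{T}(\,\cdot\mid s,a_{t-\Delta+k})\,\mathrm{d}b_k(s)$, i.e. the pushforward of $b_k$ under the transition kernel with action $a_{t-\Delta+k}$. Chaining these definitions reproduces the integral in~(\ref{eq::belief_function}), so $b_\Delta=b(\,\cdot\mid x_t)$. The triangle inequality for $W_1$ then gives
\[
W_1\bigl(b(\,\cdot\mid x_t)\,\|\,\delta_{s_{t-\Delta}}\bigr)=W_1(b_\Delta\,\|\,b_0)\le\sum_{k=0}^{\Delta-1}W_1(b_{k+1}\,\|\,b_k),
\]
so it suffices to show that each summand is at most $L_\mathcal{T}$.

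For the per-step bound I would invoke the general fact that a Markov kernel which displaces every Dirac mass by at most $L_\mathcal{T}$ in $W_1$ displaces any distribution by at most $L_\mathcal{T}$. Given $\mu$ and the kernel $\mathcal{T}(\,\cdot\mid\cdot,a)$, for each $s$ the product $\delta_s\otimes\mathcal{T}(\,\cdot\mid s,a)$ is the (trivial, hence optimal) coupling of $\delta_s$ and $\mathcal{T}(\,\cdot\mid s,a)$, whose transport cost is $W_1\bigl(\delta_s\,\|\,\mathcal{T}(\,\cdot\mid s,a)\bigr)\le L_\mathcal{T}$ by Definition~\ref{def::dynamic_func_lipschitz}; mixing these couplings over $s\sim\mu$ yields a valid coupling of $\mu$ with its pushforward $\mu\mathcal{T}$ of cost $\int W_1\bigl(\delta_s\,\|\,\mathcal{T}(\,\cdot\mid s,a)\bigr)\,\mathrm{d}\mu(s)\le L_\mathcal{T}$. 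Applying this with $\mu=b_k$ and $a=a_{t-\Delta+k}$ gives $W_1(b_{k+1}\,\|\,b_k)\le L_\mathcal{T}$, and summing the $\Delta$ terms yields $W_1(b(\,\cdot\mid x_t)\,\|\,\delta_{s_{t-\Delta}})\le\Delta L_\mathcal{T}$.

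The main obstacle — indeed the only step with real content — is the gluing argument: one has to verify that integrating the $s$-indexed family of couplings against $\mu$ produces a genuine coupling of $\mu$ and $\mu\mathcal{T}$ and that its cost is the $\mu$-average of the per-state $W_1$ costs. This is a standard disintegration computation (a measurable selection of couplings suffices, and here the couplings are explicit, so measurability is immediate), but it is where the probabilistic structure genuinely enters; the telescoping and the uniform bound of Definition~\ref{def::dynamic_func_lipschitz} are mechanical. It is also worth noting that the Dirac factors $\delta_{a_t}(a'_t)\prod_{i}\delta_{a_{t-i}}(a'_{t-i})$ in $\mathcal{T}_\Delta$ of~(\ref{eq::delay_transition}) play no role here, since the belief marginalizes over states only with the action window held fixed.
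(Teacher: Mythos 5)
Your proof is correct and follows essentially the same route as the paper: introduce the intermediate beliefs obtained by applying the one-step transition kernel successively, telescope with the triangle inequality for $W_1$, and bound each consecutive distance by $L_\mathcal{T}$ via Definition~\ref{def::dynamic_func_lipschitz}. The only difference is that you make explicit the gluing-of-couplings argument behind the per-step bound $W_1(b_{k+1}\,\|\,b_k)\le L_\mathcal{T}$, a step the paper's proof asserts without detail.
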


Next, we extend the continuity bounds to the learned reward functions defining on delayed observation state and augmented state respectively. Note that we assume both reward functions are recovered using the same IRL algorithm and are parameterized by MLP with ReLU activation, which satisfies the Lipschitz continuous assumption~\citep{virmaux2018lipschitz}. Details of the IRL algorithm and implementation are presented in \sectionref{sec::inverse_drl}, and the detailed proof is in \appendixref{proof:reward_bound}.

\begin{lemma}[Reward Delayed Difference Upper Bound]
\label{lemma:reward_bound}
Given a $L_\mathcal{R}$-Time Lipschitz Continuous Dynamic and $L_\mathcal{R}$-Lipschitz Continuous Reward function, $\forall x_t \in \mathcal{X}$, the upper bound of reward delayed difference is as follows:
\begin{equation*}
    d_{\mathcal{R}}\left(R_\Delta(x_t,a_t) - R(s_{t-\Delta}, a_{t})\right)\leq \Delta L_\mathcal{R} L_\mathcal{T}.
\end{equation*}
\end{lemma}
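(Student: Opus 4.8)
The plan is to recognize the left-hand side as the gap between two expectations of the \emph{same} integrand $R(\cdot,a_t)$, taken against the belief $b(\cdot\mid x_t)$ on one side and the Dirac mass $\delta_{s_{t-\Delta}}$ on the other, and then to control that gap by the $W_1$ distance between these two distributions via Kantorovich--Rubinstein duality, finishing by substituting the belief bound already established in Lemma~\ref{lemma::belief_func_lipschitz}.

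First I would use the definition $R_\Delta(x_t,a_t)=\mathbb{E}_{s_t\sim b(\cdot\mid x_t)}[R(s_t,a_t)]$ together with the trivial identity $R(s_{t-\Delta},a_t)=\mathbb{E}_{s\sim\delta_{s_{t-\Delta}}}[R(s,a_t)]$ to rewrite the difference as
\[
R_\Delta(x_t,a_t)-R(s_{t-\Delta},a_t)=\int_{\mathcal S}R(s,a_t)\,b(\mathrm{d}s\mid x_t)-\int_{\mathcal S}R(s,a_t)\,\delta_{s_{t-\Delta}}(\mathrm{d}s).
\]
Next, holding $a_t$ fixed, Definition~\ref{def::reward_func_lipschitz} with $a_1=a_2=a_t$ (so the action term vanishes) shows that $s\mapsto R(s,a_t)$ is $L_\mathcal{R}$-Lipschitz with respect to $d_\mathcal{S}$, uniformly in $a_t$. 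Applying the dual characterization of the $1$-Wasserstein distance, namely $\lvert\mathbb{E}_\mu f-\mathbb{E}_\nu f\rvert\le L\,W_1(\mu\|\nu)$ for every $L$-Lipschitz $f$, with $\mu=b(\cdot\mid x_t)$, $\nu=\delta_{s_{t-\Delta}}$, and $f=R(\cdot,a_t)$, yields
\[
d_\mathcal{R}\!\bigl(R_\Delta(x_t,a_t)-R(s_{t-\Delta},a_t)\bigr)\le L_\mathcal{R}\,W_1\!\bigl(b(\cdot\mid x_t)\,\big\|\,\delta_{s_{t-\Delta}}\bigr).
\]
Finally I would plug in $W_1(b(\cdot\mid x_t)\,\|\,\delta_{s_{t-\Delta}})\le\Delta L_\mathcal{T}$ from Lemma~\ref{lemma::belief_func_lipschitz}, which immediately gives $d_\mathcal{R}(R_\Delta(x_t,a_t)-R(s_{t-\Delta},a_t))\le\Delta L_\mathcal{R}L_\mathcal{T}$ and makes the bound uniform over $x_t\in\mathcal X$.

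The argument is essentially assembly, so I do not expect a serious obstacle; the only points that need a little care are (i) getting the direction and the scaling of the Kantorovich--Rubinstein step right --- one must use that $W_1$ upper-bounds the integral discrepancy for test functions of Lipschitz norm at most $L_\mathcal{R}$, so the prefactor is $L_\mathcal{R}$ rather than $1$ --- and (ii) confirming that $R(\cdot,a_t)$ keeps Lipschitz constant $L_\mathcal{R}$ for every frozen action $a_t$, which is exactly what Definition~\ref{def::reward_func_lipschitz} provides. Notably, all the ``delay stacking'' over the $\Delta$ one-step transitions has already been absorbed into Lemma~\ref{lemma::belief_func_lipschitz}, so no telescoping or induction over the delay horizon is required at this stage.
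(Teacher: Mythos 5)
Your proposal is correct and follows essentially the same route as the paper's proof: both reduce the difference to $L_\mathcal{R}\,W_1\bigl(b(\cdot\mid x_t)\,\|\,\delta_{s_{t-\Delta}}\bigr)$ using the Lipschitz property of $R$ in the state argument, and then invoke Lemma~\ref{lemma::belief_func_lipschitz}. The only cosmetic difference is that the paper passes through the intermediate quantity $L_\mathcal{R}\,\mathbb{E}_{s_t\sim b(\cdot\mid x_t)}\left[d_\mathcal{S}(s_t,s_{t-\Delta})\right]$ before identifying it with the Wasserstein distance to the Dirac mass, whereas you invoke Kantorovich--Rubinstein duality directly.
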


With the difference bound on the reward function, we seek to extend this bound to value functions corresponding to different policies $\pi$ and $\pi_\Delta$, which directly reflect the performance difference. We define the value function of $\pi_\Delta$ as follows:
\begin{equation}
\label{eq::value_delayed}
    V^{\pi_\Delta}(x_t) = \mathop{\mathbb{E}}_{
    x_{t+1} \sim \mathcal{T}_{\Delta}(\cdot|x_{t}, a_{t})\atop
    a_{t} \sim {\pi_\Delta}(\cdot|x_{t})
    }\left[R_\Delta(x_t,a_t)+\gamma V^{\pi_\Delta}(x_{t+1})\right].
\end{equation}
And the definition of the value function corresponding with $\pi$ is the following. 
\begin{equation}
    V^\pi(x_t) = \mathop{\mathbb{E}}_{
    x_{t+1} \sim \mathcal{T}_{\Delta}(\cdot|x_{t}, a_{t})\atop
    a_{t} \sim \pi(\cdot|s_{t-\Delta})
    }\left[R(s_{t-\Delta},a_t)+\gamma V^\pi(x_{t+1})\right].
\end{equation}

\begin{proposition}[Performance Difference Upper Bound]
\label{prop::performance_difference}
    Given a $L_\mathcal{R}$-Time Lipschitz Continuous Dynamic, $L_\mathcal{R}$-Lipschitz Continuous Reward function, and policies $\pi$ and $\pi_\Delta$, $\forall x_t \in \mathcal{X}$, the upper bound of performance difference is as follows:
    \begin{equation*}
        \Vert V^{\pi_\Delta}(x_t) - V^\pi(x_t)\Vert \leq \frac{1}{1-\gamma} \left[R_{\max} + \Delta L_\mathcal{R} L_\mathcal{T}\right].
    \end{equation*}
\end{proposition}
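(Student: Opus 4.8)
The plan is to sidestep the recursive definitions and instead unroll both value functions into their discounted-return forms, then compare the two series term by term. Fixing an augmented state $x_t$ and iterating the two Bellman equations, one obtains $V^{\pi_\Delta}(x_t)=\mathbb{E}^{\pi_\Delta}\big[\sum_{k\ge 0}\gamma^k R_\Delta(x_{t+k},a_{t+k})\big]$ and $V^{\pi}(x_t)=\mathbb{E}^{\pi}\big[\sum_{k\ge 0}\gamma^k R(s_{t+k-\Delta},a_{t+k})\big]$, where $\mathbb{E}^{\pi_\Delta}$ and $\mathbb{E}^{\pi}$ denote expectations over the trajectories launched from $x_t$ in $\mathcal{M}_\Delta$ under $\pi_\Delta$ and under $\pi$ respectively; both series converge absolutely because $\gamma\in(0,1)$ and the rewards are bounded by $R_{\max}$, which also licenses interchanging the sum with the expectation. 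Subtracting gives $V^{\pi_\Delta}(x_t)-V^{\pi}(x_t)=\sum_{k\ge 0}\gamma^k\epsilon_k$ with $\epsilon_k:=\mathbb{E}^{\pi_\Delta}[R_\Delta(x_{t+k},a_{t+k})]-\mathbb{E}^{\pi}[R(s_{t+k-\Delta},a_{t+k})]$, so the proposition reduces to showing $|\epsilon_k|\le R_{\max}+\Delta L_\mathcal{R}L_\mathcal{T}$ for every $k$, after which the triangle inequality together with $\sum_{k\ge 0}\gamma^k=\tfrac{1}{1-\gamma}$ gives the (two-sided) claim.

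To control $\epsilon_k$ I would add and subtract the base reward $R$ evaluated along the \emph{same} $\pi_\Delta$-trajectory, writing $\epsilon_k=\mathbb{E}^{\pi_\Delta}\big[R_\Delta(x_{t+k},a_{t+k})-R(s_{t+k-\Delta},a_{t+k})\big]+\big(\mathbb{E}^{\pi_\Delta}[R(s_{t+k-\Delta},a_{t+k})]-\mathbb{E}^{\pi}[R(s_{t+k-\Delta},a_{t+k})]\big)$. The first summand is handled by \lemmaref{lemma:reward_bound}: that lemma gives $|R_\Delta(x_t,a_t)-R(s_{t-\Delta},a_t)|\le \Delta L_\mathcal{R}L_\mathcal{T}$ for \emph{every} augmented state $x_t\in\mathcal{X}$ (with $s_{t-\Delta}$ its delayed-state component) and every action, so this pointwise bound survives the $\pi_\Delta$-expectation. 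The second summand is the difference of two expectations of the bounded reward $R$ taken under two different trajectory laws, hence at most $R_{\max}$ in magnitude. Adding the two contributions gives $|\epsilon_k|\le R_{\max}+\Delta L_\mathcal{R}L_\mathcal{T}$, and multiplying by $\gamma^k$ and summing yields $\|V^{\pi_\Delta}(x_t)-V^{\pi}(x_t)\|\le\tfrac{1}{1-\gamma}\big(R_{\max}+\Delta L_\mathcal{R}L_\mathcal{T}\big)$.

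The step I expect to be the genuine obstacle is the second summand. As soon as $\pi_\Delta$ and $\pi$ choose different actions — which generically happens already at $k=1$ — the two trajectories diverge, so there is no Lipschitz-continuity or belief-function handle relating $s_{t+k-\Delta}$ along the $\pi_\Delta$-process to $s_{t+k-\Delta}$ along the $\pi$-process, and one is forced onto the coarse uniform bound $R_{\max}$; this is precisely the delay-independent term of the final estimate, whereas the $\Delta L_\mathcal{R}L_\mathcal{T}$ term is the genuine price of delay, extracted cleanly by \lemmaref{lemma:reward_bound}. Two smaller points still need care: \lemmaref{lemma:reward_bound} must be applied in its $\forall x_t$ form so its bound passes through the random trajectory, and the $R_{\max}$ bound on the second summand uses the convention that $R$ ranges in $[0,R_{\max}]$ (a symmetric bound $|R|\le R_{\max}$ would instead give $2R_{\max}$), so I would state the boundedness convention explicitly in the write-up.
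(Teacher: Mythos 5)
Your proof is correct and reaches the stated bound, but it takes a genuinely different route from the paper's. The paper stays at the one-step Bellman level: it adds and subtracts the hybrid quantity $\mathbb{E}_{a_t\sim\pi,\,x_{t+1}\sim\mathcal{T}_\Delta}\bigl[\mathbb{E}_{s_t\sim b(\cdot|x_t)}[R(s_t,a_t)]+\gamma V^{\pi_\Delta}(x_{t+1})\bigr]$, which splits the difference into (i) a policy-mismatch term $V^{\pi_\Delta}(x_t)-\mathbb{E}_{a_t\sim\pi}\bigl[Q^{\pi_\Delta}(x_t,a_t)\bigr]$, (ii) the reward-delay term controlled by Lemma~\ref{lemma:reward_bound}, and (iii) $\gamma$ times the same value difference at $x_{t+1}$, then closes the recursion to harvest the $1/(1-\gamma)$ factor. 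You instead unroll both value functions into discounted sums and compare per time step, so the delay cost $\Delta L_\mathcal{R}L_\mathcal{T}$ again comes from Lemma~\ref{lemma:reward_bound} applied pointwise along the $\pi_\Delta$-trajectory, while the mismatch between the $\pi_\Delta$- and $\pi$-trajectory laws is absorbed into the crude per-step bound $R_{\max}$ before the geometric summation. The ingredients are identical (Lemma~\ref{lemma:reward_bound}, boundedness of $R$, geometric series), but your version is actually tighter-argued at the policy-mismatch step: in the paper's decomposition, term (i) is a difference of expectations of $Q^{\pi_\Delta}$ under two action distributions, whose naive bound is the span of $Q^{\pi_\Delta}$, i.e.\ of order $R_{\max}/(1-\gamma)$ rather than $R_{\max}$, and the paper's last inequality is asserted without explaining why (i) contributes only $R_{\max}$; your term-by-term comparison replaces that quantity by a difference of single-step reward expectations, which is bounded by $R_{\max}$ immediately under the convention $R\in[0,R_{\max}]$ — a convention you rightly make explicit, since with the symmetric convention $|R|\le R_{\max}$ both arguments would give $2R_{\max}$ in that slot. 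In short, your unrolled argument costs a little more bookkeeping over trajectory laws but supplies the justification the paper's shorter recursion leaves implicit.
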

Detailed proof can be found in \appendixref{proof::performance_difference}. From Prop.~\ref{prop::performance_difference}, we provide a theoretical insight to choose augmented state $x$ instead of delayed observation state to recover value function. 

\section{Off-Policy Inverse Delayed RL}
In this section, we present the framework of our off-policy Inverse Delayed Reinforcement Learning (IDRL) approach. We begin by introducing the overall structure of the framework and provide a detailed explanation of the adversarial formulation for the reward function, along with the underlying intuition and adaption to the off-policy framework. Following this, we delve into the specifics of the algorithm, including the augmentation of expert trajectories with delayed observations and the policy optimization techniques employed. The overall algorithmic framework can be found in \algorithmref{algo:idrl}. 
\label{sec::inverse_drl}

\begin{algorithm*}[h]
\caption{Inverse Delayed RL}
\label{algo:idrl}
\begin{algorithmic}[1]
    \STATE Obtain expert buffer $\mathcal{D}_{exp}$.
    \STATE Initialize policy $\pi^\psi_\Delta$, discriminator $D_{\theta}$, and buffers $\mathcal{D}_{env}$. 
    \FOR{step $t$ in \{1, \dots, N\}}
    \STATE Interact with real environments and add augmented state-action pair $(x_t,a_t,x_{t+1})$ to $\mathcal{D}_{env}$.
    \STATE Sample delayed observation state-action batch $(s_{t-\Delta}, a_t, s_{t-\Delta+1},\cdots)$ from $\mathcal{D}_{exp}$.
    \STATE Augment delayed state and action sequences into augmented state action batch $(x_t, a_t, x_{t+1})$.
    \STATE Train $D_{\theta}$ via ~\equationref{eq::disc_loss} to classify expert samples from $\mathcal{D}_{exp}$ and samples from $\mathcal{D}_{env}$.
    \STATE Sample new augmented state action batch from $\mathcal{D}_{env}$.
    \STATE Freeze $\theta$ and calculate rewards of the new augmented state action batch using \equationref{eq::reward_design}.
    \STATE Conduct policy optimization procedure \algorithmref{algo:policy_optimization} with calculated rewards to update $\psi$. 
    \ENDFOR
\end{algorithmic}
\end{algorithm*}

\paragraph{Adversarial Formulation.} 
Generative Adversarial Networks (GANs)~\citep{goodfellow2020generative} inspire our adversarial framework, where a binary discriminator \(D_\theta(x_t, a_t)\) is trained to distinguish between augmented state-action samples from expert demonstrations and those generated by the imitator policy \(\pi_\Delta\), where $D_\theta$ has the following form:
\begin{equation}
    D_\theta(x,a)=\frac{\exp(R_\theta(x,a))}{\exp(R_\theta(x,a))+\pi_\Delta(a\vert x)},
\end{equation}
where $R_\theta$ is a multilayer perceptron (MLP) parameterized by $\theta$ and can be interpreted as the reward used with little modification introduced in the next paragraph. In this context, the imitator policy \(\pi_\Delta\) serves as a generator, improving itself to "fool" the discriminator by making its generated samples indistinguishable from expert samples. The discriminator is trained using the following cross-entropy loss, designed to classify samples as either coming from the expert or the imitation policy:
\begin{equation*}
    \mathcal{L}_{disc} = -\mathbb{E}_{\mathcal{D}_{exp}} \left[\log D_\theta(x, a)\right] - \underbrace{\mathbb{E}_{\pi_\Delta} \left[\log(1 - D_\theta(x, a))\right]}_A.
\end{equation*}
The proof of state-action occupancy match between the policy induced from the above adversarial formulation and expert policy have been shown by \cite{ho2016generative} under the on-policy fashion. The proof sketch should be similar to our modified adversarial formulation with an extension to the augmented state. In the following, we elaborate on the extension to off-policy setting and additional loss terms introduced to stabilize GAN training.
To enable an off-policy fashion, importance sampling must be applied to the second term \(A\), resulting in $\mathbb{E}_{\pi_\Delta} \left[\frac{p_{\pi^\omega_\Delta}(x, a)}{p_{\pi_\Delta}(x, a)} \log(1 - D_\theta(x, a))\right]$. However, estimating this density ratio is computationally challenging. Additionally, omitting this term has been observed to improve the algorithm's performance in practice, potentially due to reduced gradient variance during updates~\citep{neal2001annealed}. Despite this improvement, training using cross-entropy loss alone often results in instability due to the complex interaction between the discriminator and the generator policy, which is also updated every iteration. To address these issues, we incorporate additional regularization terms: a gradient penalty \(\mathcal{L}_{grad}\) and an entropy regularization term \(\mathcal{L}_{entropy}\). These modifications help stabilize training by preventing excessively large gradient updates per training epoch~\citep{nagarajan2017gradient,arjovsky2017wasserstein}. The final loss function for the discriminator is defined as:
\begin{equation}
\label{eq::disc_loss}
    \mathcal{L}_{disc} = -\mathbb{E}_{\mathcal{D}_{exp}} \left[\log D_\theta(x, a)\right] - \mathbb{E}_{\mathcal{D}_{gen}} \left[\log(1 - D_\theta(x, a))\right] + \mathcal{L}_{grad} + \mathcal{L}_{entropy}.
\end{equation}

\paragraph{Policy Optimization.}
In this section, we introduce the reward function derived from the discriminator $D$ and the policy optimization method used to improve policy in each iteration. To extract the reward signal from the discriminator for each policy update, we use the following formula:
\begin{equation}
\label{eq::reward_design}
    \hat{R}_\theta(x,a) = \log(D_\theta(x,a)+\delta) - \log(1-D_\theta(x,a)+\delta),
\end{equation}
where $\delta$ is a marginal constant to prevent numerical error in computation. After derivation $\hat{R}_\theta$ resembles policy entropy regularized reward for soft policy update, which can be used to satisfy the delayed RL objectives $\max\mathbb{E}_{\tau_\Delta\sim p(\tau_\Delta)}\left[\sum_{t=0}^{T-1}\gamma^t R_\theta(\tau_\Delta)-H(\pi_\Delta)\right]$ ~\citep{haarnoja2018soft}. To update policy $\pi_\Delta$ for each iteration, we also apply the augmented approach introduced in \sectionref{sec::related_work}. However, using state augmentation for both reward learning and policy optimization is sample inefficient. Thus, we apply the auxiliary delayed RL approach, which learns a value function for
short delays and uses bootstrapping and policy
improvement techniques to adjust it for long delays~\citep{wu2024boosting}. The detailed version of the algorithm can be found in \appendixref{appendix::algorithm}.

\section{Experiment}
\label{sec::experiment}
In this section, we evaluate the performance of our Off-Policy Inverse Delayed RL framework. We aim to demonstrate the capability of our method to recover expert behaviors under delayed settings. All experiments are conducted on the MuJoCo benchmarks~\citep{mujoco}. 
All the \textbf{expert trajectories} are collected by an expert agent trained with VDPO~\citep{wu2024variational} under MuJoCo environments with \textbf{5, 10, and 25 delay steps}.
We compare our approach with the on-policy algorithms AIRL \citep{fu2017learning}, behavior cloning~\citep{torabi2018generative}, and the off-policy method Discriminator Actor-Critic (DAC) \citep{kostrikov2018discriminator} based on delayed observation states. For policy optimization to IRL approaches, we use Proximal Policy Optimization (PPO) \citep{schulman2017proximal} for AIRL, and Soft Actor-Critic (SAC) \citep{haarnoja2018soft} for DAC. All implementations of PPO and SAC are referenced from the \texttt{Clean RL} library~\citep{huang2022cleanrl}. Each algorithm is trained with 100k environmental steps and evaluated each 1k steps across 5 different seeds for \texttt{InvertedPendulum-v4}. For \texttt{Hopper-v4}, \texttt{HalfCheetah-v4}, \texttt{Walker2d-v4}, and \texttt{Ant-v4}, AIRL is trained with 10M steps and evaluated every 100k steps across 5 different seeds, but DAC and our algorithm are trained with 1M environmental steps and evaluated every 10k steps across 5 different seeds. We conduct the aforementioned series of experiments under \textbf{various numbers of expert trajectories ranging from 10 to 1000}.  All the experiments are run on the Desktop equipped with RTX 4090 and Core-i9 13900K. Training graphs are provided in \appendixref{appendix::training_curves}. Across different scenarios, we showcase our method's efficacy in two dimensions.
\begin{itemize}
    \item \textbf{Performance Superiority}: 
    Our method consistently outperforms baseline approaches in recovering expert behaviors across various environments, even under diverse delay-length settings. When expert demonstrations are sufficient, our method achieves near-complete recovery of expert performance, whereas most baselines fail to learn meaningful policies.
    \item \textbf{Robust Utilization of Limited Expert Demonstrations}: 
    Even with limited expert demonstrations, our method demonstrates the ability to recover a reasonable policy, outperforming baselines that struggle to reproduce expert behaviors in most environments.
\end{itemize}

\subsection{Impact of Varying Delays}

We investigate the effect of varying delays (5, 10, and 25 delay steps) on performance, using 1000 expert demonstration trajectories. Detailed learning curves in \appendixref{appendix::training_curves} support our findings. As shown in \tableref{tab:delay}, our IDRL framework consistently outperforms baseline methods (AIRL, DAC, and BC) across all tested MuJoCo environments. Notably, as delays increase, the performance of the expert policy deteriorates, leading to noisier expert trajectories and a corresponding decline in baseline performance. While baseline methods exhibit significant degradation or fail entirely in certain environments, IDRL maintains near-expert performance across all delay conditions, demonstrating remarkable robustness. This resilience is particularly evident in complex environments like \texttt{Ant-v4} and \texttt{Walker2d-v4}, where IDRL consistently achieves superior performance, whereas DAC and AIRL fail to adapt. BC can recover part of the expert behaviors, but still a large margin below ours. These advantages are even more notable in the low dimensional task (\texttt{InvertedPendulum-v4}) and medium dimensional tasks (\texttt{Hopper-v4} and \texttt{HalfCheetah-v4}). IDRL’s robustness stems from its augmented state representation, which enriches the feature space to better capture delayed dependencies, combined with advanced policy optimization techniques that mitigate the adverse effects of delays. These results empirically validate our theoretical analysis, highlighting the effectiveness of leveraging augmented state representations over direct delayed observations in handling delay-affected environments. We also observe randomness in the performance of some algorithms under certain scenarios, likely due to their inability to extract meaningful reward features from expert demonstrations. Additionally, BC exhibits inconsistent performance trends in some environments. Since BC directly replicates the action distribution of expert demonstrations, any noise present in the expert trajectories—potentially introduced during data collection—can severely degrade the imitator's performance.

\begin{table}[tb]
\centering
\small
\setlength{\tabcolsep}{6pt} 
\renewcommand{\arraystretch}{1.2} 
\caption{Performance comparison across environments and algorithms with 1000 expert demonstration trajectories under varying delay steps from 5 to 25. Results are mean \(\pm\) standard deviation. Best performances are highlighted in \textcolor{blue}{blue}. We omit \texttt{InvertedPendulum-v4} under 25 delays, since the expert policy degrades to performance near random policy.}
\scalebox{0.8}{
\begin{tabular}{@{}lccrrrrr@{}}
\toprule
\textbf{Task} & \textbf{Delay} & \textbf{Expert} & \textbf{BC} & \textbf{AIRL} & \textbf{DAC} & \textbf{IDRL (Ours)} \\
\midrule
\multirow{2}{*}{\texttt{InvertedPendulum-v4}} 
& 5  & 974.29\scriptsize{$\pm$157.44}   & 15.27\scriptsize{$\pm$2.11}     & 28.93\scriptsize{$\pm$5.28}     & 27.80\scriptsize{$\pm$20.28}    & \textcolor{blue}{1000.00\scriptsize{$\pm$0.00}} \\ 
& 10 & 681.11\scriptsize{$\pm$462.73}   & 21.06\scriptsize{$\pm$6.16}     & 28.53\scriptsize{$\pm$1.59}     & 23.00\scriptsize{$\pm$7.72}     & \textcolor{blue}{867.87\scriptsize{$\pm$186.87}} \\ 
\midrule
\multirow{3}{*}{\texttt{Hopper-v4}} 
& 5  & 3738.91\scriptsize{$\pm$34.63}   & 176.67\scriptsize{$\pm$43.35}   & 203.26\scriptsize{$\pm$113.29}  & 516.88\scriptsize{$\pm$364.13}  & \textcolor{blue}{3569.99\scriptsize{$\pm$44.33}} \\ 
& 10 & 3492.25\scriptsize{$\pm$239.45}  & 14.15\scriptsize{$\pm$4.46}    & 182.52\scriptsize{$\pm$50.31}   & 120.28\scriptsize{$\pm$60.35}   & \textcolor{blue}{3321.84\scriptsize{$\pm$50.74}} \\ 
& 25 & 2107.44\scriptsize{$\pm$1399.19} & 101.32\scriptsize{$\pm$50.67}   & 182.64\scriptsize{$\pm$11.02}   & 96.96\scriptsize{$\pm$15.06}    & \textcolor{blue}{1814.18\scriptsize{$\pm$756.36}} \\ 
\midrule
\multirow{3}{*}{\texttt{HalfCheetah-v4}} 
& 5  & 5451.92\scriptsize{$\pm$239.91}  & 2384.60\scriptsize{$\pm$563.00} & 0.05\scriptsize{$\pm$0.11} & -220.04\scriptsize{$\pm$285.61} & \textcolor{blue}{4561.01\scriptsize{$\pm$313.91}} \\ 
& 10 & 4986.07\scriptsize{$\pm$852.61}  & 793.87\scriptsize{$\pm$973.06} & 0.05\scriptsize{$\pm$0.12} & -234.68\scriptsize{$\pm$85.08}  & \textcolor{blue}{5061.02\scriptsize{$\pm$154.63}} \\ 
& 25 & 4088.53\scriptsize{$\pm$1600.44} & 1087.04\scriptsize{$\pm$319.38} & 0.05\scriptsize{$\pm$0.13} & -225.55\scriptsize{$\pm$146.12} & \textcolor{blue}{3256.81\scriptsize{$\pm$693.51}} \\ 
\midrule
\multirow{3}{*}{\texttt{Walker2d-v4}} 
& 5  & 4124.08\scriptsize{$\pm$1289.46} & 1039.87\scriptsize{$\pm$389.39} & 146.64\scriptsize{$\pm$45.33}   & 812.51\scriptsize{$\pm$176.26}  & \textcolor{blue}{4424.19\scriptsize{$\pm$138.03}} \\ 
& 10 & 4491.65\scriptsize{$\pm$610.81}  & 763.85\scriptsize{$\pm$767.61} & 136.87\scriptsize{$\pm$99.16} & 315.09\scriptsize{$\pm$436.99} & \textcolor{blue}{4283.64\scriptsize{$\pm$105.36}} \\ 
& 25 & 1955.69\scriptsize{$\pm$1458.62} & 604.07\scriptsize{$\pm$277.71}  & 115.31\scriptsize{$\pm$27.66}   & 60.91\scriptsize{$\pm$72.40}    & \textcolor{blue}{1437.88\scriptsize{$\pm$506.97}} \\ 
\midrule
\multirow{3}{*}{\texttt{Ant-v4}} 
& 5  & 5281.73\scriptsize{$\pm$1627.50} & 761.11\scriptsize{$\pm$107.30}  & 1003.40\scriptsize{$\pm$2.09}  & -52.27\scriptsize{$\pm$12.55} & \textcolor{blue}{5764.42\scriptsize{$\pm$71.72}} \\ 
& 10 & 3618.59\scriptsize{$\pm$868.75}  & 799.43\scriptsize{$\pm$138.88} & 1004.32\scriptsize{$\pm$1.10}  & -62.64\scriptsize{$\pm$6.27}  & \textcolor{blue}{3949.62\scriptsize{$\pm$31.93}} \\ 
& 25 & 3432.42\scriptsize{$\pm$580.22}  & 698.95\scriptsize{$\pm$20.66}   & 1003.21\scriptsize{$\pm$3.04}  & -40.43\scriptsize{$\pm$27.07} & \textcolor{blue}{3024.53\scriptsize{$\pm$150.83}} \\ 
\bottomrule
\label{tab:delay}
\end{tabular}
}
\end{table}

\subsection{Quantity of Expert Demonstrations}
We analyze the impact of the quantity of expert demonstrations on training performance under a moderate 10 delay steps, with results summarized in Table~\ref{tab:exp_demo}. Across all methods, an overall increasing trend in return value is observed as the number of expert trajectories increases (10, 100, and 1000). In \texttt{InvertedPendulum-v4}, IDRL consistently performs at the expert level with performance increases according to the increase in quantity of expert demonstrations, demonstrating its resilience and efficiency regardless of the number of expert trajectories, while all the other baselines fail to recover meaningful expert behaviors. In \texttt{Hopper-v4}, IDRL similarly scales to expert-level performance as the quantity of demonstrations increases. Though there are notable gaps with different available quantity of expert demonstrations, our method still outperforms all baselines even with fewer demonstrations. We can observe a similar trend in \texttt{Walker2d-v4}, where our method also possesses a significant performance margin compared to all the other baselines. While BC initially outperforms IDRL with limited demonstrations (10 and 100 trajectories) in \texttt{HalfCheetah-v4}, IDRL demonstrates superior scalability as more expert data becomes available. With 1000 trajectories, IDRL significantly surpasses all baselines in \texttt{HalfCheetah-v4}, highlighting the effectiveness of its augmented state representation in leveraging larger datasets. In \texttt{Ant-v4}, BC and our method have competitive performance when demonstrations are limited, but a significant margin when more expert demonstrations become available. These results emphasize that the augmented states employed by IDRL not only enhance scalability towards expert-level performance but also ensure robust learning under delay-affected settings, particularly as the availability of expert data increases.

\begin{table}[tb]
\centering
\small
\setlength{\tabcolsep}{6pt} 
\renewcommand{\arraystretch}{1.2} 
\caption{Performance comparison across different environments and algorithms with a 10 delay steps under varying quantity of expert trajectories from 10 to 1000. Results are mean \(\pm\) standard deviation. Best performances are highlighted in \textcolor{blue}{blue}.}
\scalebox{0.8}{
\begin{tabular}{@{}lccrrrrr@{}}
\toprule
\textbf{Task} & \#\textbf{Traj} & \textbf{Expert} & \textbf{BC} & \textbf{AIRL} & \textbf{DAC} & \textbf{IDRL (Ours)} \\
\midrule
\multirow{3}{*}{\texttt{InvertedPendulum-v4}} 
& 10   & 406.40\scriptsize{$\pm$484.67} & 25.77\scriptsize{$\pm$3.39}    & 29.13\scriptsize{$\pm$2.88}    & 21.47\scriptsize{$\pm$4.22}    & \textcolor{blue}{934.07\scriptsize{$\pm$93.24}} \\ 
& 100  & 673.29\scriptsize{$\pm$465.53} & 23.38\scriptsize{$\pm$4.51}    & 28.73\scriptsize{$\pm$4.07}    & 27.07\scriptsize{$\pm$5.01}    & \textcolor{blue}{802.13\scriptsize{$\pm$161.59}} \\ 
& 1000 & 681.11\scriptsize{$\pm$462.73} & 21.06\scriptsize{$\pm$6.16}    & 28.53\scriptsize{$\pm$1.59}    & 23.00\scriptsize{$\pm$7.72}    & \textcolor{blue}{867.87\scriptsize{$\pm$186.87}} \\ 
\midrule
\multirow{3}{*}{\texttt{Hopper-v4}} 
& 10   & 3567.45\scriptsize{$\pm$64.08}   & 149.31\scriptsize{$\pm$21.43}   & 198.56\scriptsize{$\pm$59.59}   & 114.83\scriptsize{$\pm$89.47}   & \textcolor{blue}{1008.50\scriptsize{$\pm$12.30}} \\ 
& 100  & 3497.54\scriptsize{$\pm$193.82}  & 125.04\scriptsize{$\pm$48.81}   & 188.51\scriptsize{$\pm$64.77}   & 99.21\scriptsize{$\pm$36.76}    & \textcolor{blue}{1715.82\scriptsize{$\pm$1006.63}} \\ 
& 1000 & 3492.25\scriptsize{$\pm$239.45}  & 14.15\scriptsize{$\pm$4.46}    & 182.52\scriptsize{$\pm$50.31}   & 120.28\scriptsize{$\pm$60.35}   & \textcolor{blue}{3321.84\scriptsize{$\pm$50.74}} \\ 
\midrule
\multirow{3}{*}{\texttt{HalfCheetah-v4}} 
& 10   & 5171.72\scriptsize{$\pm$580.66}  & -58.58\scriptsize{$\pm$257.86} & 0.05\scriptsize{$\pm$0.13} & -197.25\scriptsize{$\pm$209.73} & \textcolor{blue}{41.28\scriptsize{$\pm$70.15}} \\ 
& 100  & 4919.62\scriptsize{$\pm$865.51}  & -17.68\scriptsize{$\pm$218.00} & 0.05\scriptsize{$\pm$0.12} & -198.83\scriptsize{$\pm$67.36}  & \textcolor{blue}{-10.68\scriptsize{$\pm$6.06}} \\ 
& 1000 & 4986.07\scriptsize{$\pm$852.61}  & 793.87\scriptsize{$\pm$973.06} & 0.05\scriptsize{$\pm$0.12} & -234.68\scriptsize{$\pm$85.08}  & \textcolor{blue}{5061.02\scriptsize{$\pm$154.63}} \\ 
\midrule
\multirow{3}{*}{\texttt{Walker2d-v4}} 
& 10   & 4578.05\scriptsize{$\pm$31.78} & 142.05\scriptsize{$\pm$122.64}  & 145.65\scriptsize{$\pm$110.57}  & 342.81\scriptsize{$\pm$359.79}  & \textcolor{blue}{1015.10\scriptsize{$\pm$76.16}} \\ 
& 100  & 4449.56\scriptsize{$\pm$723.44}  & 90.02\scriptsize{$\pm$107.01} & 139.09\scriptsize{$\pm$103.83}  & 389.67\scriptsize{$\pm$469.74}  & \textcolor{blue}{1146.64\scriptsize{$\pm$1002.86}} \\ 
& 1000 & 4491.65\scriptsize{$\pm$610.81} & 763.85\scriptsize{$\pm$767.61} & 136.87\scriptsize{$\pm$99.16}  & 315.09\scriptsize{$\pm$436.99}  & \textcolor{blue}{4283.64\scriptsize{$\pm$105.36}} \\ 
\midrule
\multirow{3}{*}{\texttt{Ant-v4}} 
& 10   & 3187.90\scriptsize{$\pm$1263.76} & 758.24\scriptsize{$\pm$367.63} & 1005.22\scriptsize{$\pm$0.63}  & -46.57\scriptsize{$\pm$21.93}  & \textcolor{blue}{932.69\scriptsize{$\pm$3.28}} \\ 
& 100  & 3598.60\scriptsize{$\pm$825.72}  & 848.39\scriptsize{$\pm$216.35}  & 1003.04\scriptsize{$\pm$1.78}  & -42.47\scriptsize{$\pm$12.48}  & \textcolor{blue}{920.06\scriptsize{$\pm$7.74}} \\ 
& 1000 & 3618.59\scriptsize{$\pm$868.75}  & 799.43\scriptsize{$\pm$138.88} & 1004.32\scriptsize{$\pm$1.10}  & -62.64\scriptsize{$\pm$6.27}   & \textcolor{blue}{3949.62\scriptsize{$\pm$31.93}} \\ 
\bottomrule
\label{tab:exp_demo}
\end{tabular}
}
\end{table}

\vspace{-0.2cm}

\section{Conclusion}
\label{sec::conclusion}

In this work, we introduced an Inverse Delayed Reinforcement Learning (IDRL) framework that effectively addresses the challenges of learning from expert demonstrations under delayed settings. By leveraging augmented state representations and advanced policy optimization techniques, our method demonstrated superior performance across a range of environments and delay conditions, consistently outperforming baseline methods. Our empirical results validated the theoretical advantages of using augmented states over direct delayed observations, highlighting the robustness and scalability of IDRL, particularly when faced with varying delays and quantities of expert data. While our approach significantly advances state-of-the-art imitation learning with delays, several avenues for future research remain open. First, extending IDRL to more complex, high-dimensional environments, such as those involving real-world robotics, would further test its robustness and generalizability. Additionally, exploring methods to reduce the dependency on large quantities of expert demonstrations, such as leveraging transfer learning or combining imitation learning with self-supervised approaches, could broaden the applicability of IDRL in data-scarce scenarios. Finally, integrating IDRL with online adaptation mechanisms to dynamically handle non-stationary delays during deployment represents an exciting direction for future work.

\acks{We thank a bunch of people.}
\clearpage
\bibliography{citation}

\begin{thebibliography}{72}
\providecommand{\natexlab}[1]{#1}
\providecommand{\url}[1]{\texttt{#1}}
\expandafter\ifx\csname urlstyle\endcsname\relax
  \providecommand{\doi}[1]{doi: #1}\else
  \providecommand{\doi}{doi: \begingroup \urlstyle{rm}\Url}\fi

\bibitem[Altman and Nain(1992)]{altman_delay}
Eitan Altman and Philippe Nain.
\newblock Closed-loop control with delayed information.
\newblock \emph{ACM sigmetrics performance evaluation review}, 20\penalty0 (1):\penalty0 193--204, 1992.

\bibitem[Arjona-Medina et~al.(2019)Arjona-Medina, Gillhofer, Widrich, Unterthiner, Brandstetter, and Hochreiter]{arjona2019rudder}
Jose~A Arjona-Medina, Michael Gillhofer, Michael Widrich, Thomas Unterthiner, Johannes Brandstetter, and Sepp Hochreiter.
\newblock Rudder: Return decomposition for delayed rewards.
\newblock \emph{Advances in Neural Information Processing Systems}, 32, 2019.

\bibitem[Arjovsky et~al.(2017)Arjovsky, Chintala, and Bottou]{arjovsky2017wasserstein}
Martin Arjovsky, Soumith Chintala, and L{\'e}on Bottou.
\newblock Wasserstein generative adversarial networks.
\newblock In \emph{International conference on machine learning}, pages 214--223. PMLR, 2017.

\bibitem[Arora and Doshi(2021)]{arora2021survey}
Saurabh Arora and Prashant Doshi.
\newblock A survey of inverse reinforcement learning: Challenges, methods and progress.
\newblock \emph{Artificial Intelligence}, 297:\penalty0 103500, 2021.

\bibitem[Berner et~al.(2019)Berner, Brockman, Chan, Cheung, Dkebiak, Dennison, Farhi, Fischer, Hashme, Hesse, et~al.]{berner2019dota}
Christopher Berner, Greg Brockman, Brooke Chan, Vicki Cheung, Przemyslaw Dkebiak, Christy Dennison, David Farhi, Quirin Fischer, Shariq Hashme, Chris Hesse, et~al.
\newblock Dota 2 with large scale deep reinforcement learning.
\newblock \emph{arXiv preprint arXiv:1912.06680}, 2019.

\bibitem[Blond{\'e} and Kalousis(2019)]{blonde2019sample}
Lionel Blond{\'e} and Alexandros Kalousis.
\newblock Sample-efficient imitation learning via generative adversarial nets.
\newblock In \emph{The 22nd International Conference on Artificial Intelligence and Statistics}, pages 3138--3148. PMLR, 2019.

\bibitem[Brown et~al.(2019)Brown, Goo, Nagarajan, and Niekum]{brown2019extrapolating}
Daniel Brown, Wonjoon Goo, Prabhat Nagarajan, and Scott Niekum.
\newblock Extrapolating beyond suboptimal demonstrations via inverse reinforcement learning from observations.
\newblock In \emph{International conference on machine learning}, pages 783--792. PMLR, 2019.

\bibitem[Cao et~al.(2020)Cao, Guo, Song, Gao, Chen, Zhang, and Zhang]{cao2020using}
Zhiguang Cao, Hongliang Guo, Wen Song, Kaizhou Gao, Zhenghua Chen, Le~Zhang, and Xuexi Zhang.
\newblock Using reinforcement learning to minimize the probability of delay occurrence in transportation.
\newblock \emph{IEEE transactions on vehicular technology}, 69\penalty0 (3):\penalty0 2424--2436, 2020.

\bibitem[Chan and van~der Schaar(2021)]{chan2021scalable}
Alex~J Chan and Mihaela van~der Schaar.
\newblock Scalable bayesian inverse reinforcement learning.
\newblock \emph{arXiv preprint arXiv:2102.06483}, 2021.

\bibitem[Chen et~al.(2021)Chen, Xu, Li, and Zhao]{chen2021delay}
Baiming Chen, Mengdi Xu, Liang Li, and Ding Zhao.
\newblock Delay-aware model-based reinforcement learning for continuous control.
\newblock \emph{Neurocomputing}, 450:\penalty0 119--128, 2021.

\bibitem[Choi and Kim(2011)]{choi2011map}
Jaedeug Choi and Kee-Eung Kim.
\newblock Map inference for bayesian inverse reinforcement learning.
\newblock \emph{Advances in neural information processing systems}, 24, 2011.

\bibitem[Codevilla et~al.(2018)Codevilla, M{\"u}ller, L{\'o}pez, Koltun, and Dosovitskiy]{codevilla2018end}
Felipe Codevilla, Matthias M{\"u}ller, Antonio L{\'o}pez, Vladlen Koltun, and Alexey Dosovitskiy.
\newblock End-to-end driving via conditional imitation learning.
\newblock In \emph{2018 IEEE international conference on robotics and automation (ICRA)}, pages 4693--4700. IEEE, 2018.

\bibitem[Feng et~al.(2019)Feng, Chen, Zhan, Fr{\"a}nzle, and Xue]{feng2019taming}
Shenghua Feng, Mingshuai Chen, Naijun Zhan, Martin Fr{\"a}nzle, and Bai Xue.
\newblock Taming delays in dynamical systems: Unbounded verification of delay differential equations.
\newblock In \emph{International Conference on Computer Aided Verification}, pages 650--669. Springer, 2019.

\bibitem[Finn et~al.(2016)Finn, Levine, and Abbeel]{finn2016guided}
Chelsea Finn, Sergey Levine, and Pieter Abbeel.
\newblock Guided cost learning: Deep inverse optimal control via policy optimization.
\newblock In \emph{International conference on machine learning}, pages 49--58. PMLR, 2016.

\bibitem[Fu et~al.(2017)Fu, Luo, and Levine]{fu2017learning}
Justin Fu, Katie Luo, and Sergey Levine.
\newblock Learning robust rewards with adversarial inverse reinforcement learning.
\newblock \emph{arXiv preprint arXiv:1710.11248}, 2017.

\bibitem[Goodfellow et~al.(2020)Goodfellow, Pouget-Abadie, Mirza, Xu, Warde-Farley, Ozair, Courville, and Bengio]{goodfellow2020generative}
Ian Goodfellow, Jean Pouget-Abadie, Mehdi Mirza, Bing Xu, David Warde-Farley, Sherjil Ozair, Aaron Courville, and Yoshua Bengio.
\newblock Generative adversarial networks.
\newblock \emph{Communications of the ACM}, 63\penalty0 (11):\penalty0 139--144, 2020.

\bibitem[Haarnoja et~al.(2018)Haarnoja, Zhou, Abbeel, and Levine]{haarnoja2018soft}
Tuomas Haarnoja, Aurick Zhou, Pieter Abbeel, and Sergey Levine.
\newblock Soft actor-critic: Off-policy maximum entropy deep reinforcement learning with a stochastic actor.
\newblock In \emph{International conference on machine learning}, pages 1861--1870. PMLR, 2018.

\bibitem[Han et~al.(2022)Han, Ren, Wu, Zhou, and Peng]{han2022off}
Beining Han, Zhizhou Ren, Zuofan Wu, Yuan Zhou, and Jian Peng.
\newblock Off-policy reinforcement learning with delayed rewards.
\newblock In \emph{International Conference on Machine Learning}, pages 8280--8303. PMLR, 2022.

\bibitem[Hasbrouck and Saar(2013)]{hasbrouck2013low}
Joel Hasbrouck and Gideon Saar.
\newblock Low-latency trading.
\newblock \emph{Journal of Financial Markets}, 16\penalty0 (4):\penalty0 646--679, 2013.

\bibitem[Hempel et~al.(2014)Hempel, Goulart, and Lygeros]{hempel2014inverse}
Andreas~B Hempel, Paul~J Goulart, and John Lygeros.
\newblock Inverse parametric optimization with an application to hybrid system control.
\newblock \emph{IEEE Transactions on automatic control}, 60\penalty0 (4):\penalty0 1064--1069, 2014.

\bibitem[Ho and Ermon(2016)]{ho2016generative}
Jonathan Ho and Stefano Ermon.
\newblock Generative adversarial imitation learning.
\newblock \emph{Advances in neural information processing systems}, 29, 2016.

\bibitem[Huang et~al.(2022)Huang, Dossa, Ye, Braga, Chakraborty, Mehta, and Ara{\~A}{\v{s}}jo]{huang2022cleanrl}
Shengyi Huang, Rousslan Fernand~Julien Dossa, Chang Ye, Jeff Braga, Dipam Chakraborty, Kinal Mehta, and Jo{\~A}{\c{G}}o~GM Ara{\~A}{\v{s}}jo.
\newblock Cleanrl: High-quality single-file implementations of deep reinforcement learning algorithms.
\newblock \emph{Journal of Machine Learning Research}, 23\penalty0 (274):\penalty0 1--18, 2022.

\bibitem[Hwangbo et~al.(2017)Hwangbo, Sa, Siegwart, and Hutter]{quadrotor_delay}
Jemin Hwangbo, Inkyu Sa, Roland Siegwart, and Marco Hutter.
\newblock Control of a quadrotor with reinforcement learning.
\newblock \emph{IEEE Robotics and Automation Letters}, 2\penalty0 (4):\penalty0 2096--2103, 2017.

\bibitem[Kalman(1964)]{kalman1964linear}
Rudolf~Emil Kalman.
\newblock When is a linear control system optimal?
\newblock 1964.

\bibitem[Karamzade et~al.(2024)Karamzade, Kim, Kalsi, and Fox]{karamzade2024delay}
Armin Karamzade, Kyungmin Kim, Montek Kalsi, and Roy Fox.
\newblock Reinforcement learning from delayed observations via world models.
\newblock \emph{Reinforcement Learning Journal}, 2024.

\bibitem[Katsikopoulos and Engelbrecht(2003{\natexlab{a}})]{delay_mdp}
Konstantinos~V Katsikopoulos and Sascha~E Engelbrecht.
\newblock Markov decision processes with delays and asynchronous cost collection.
\newblock \emph{IEEE transactions on automatic control}, 48\penalty0 (4):\penalty0 568--574, 2003{\natexlab{a}}.

\bibitem[Katsikopoulos and Engelbrecht(2003{\natexlab{b}})]{katsikopoulos2003markov}
Konstantinos~V Katsikopoulos and Sascha~E Engelbrecht.
\newblock Markov decision processes with delays and asynchronous cost collection.
\newblock \emph{IEEE transactions on automatic control}, 48\penalty0 (4):\penalty0 568--574, 2003{\natexlab{b}}.

\bibitem[Kim et~al.(2023)Kim, Kim, Kang, Baek, and Han]{kim2023belief}
Jangwon Kim, Hangyeol Kim, Jiwook Kang, Jongchan Baek, and Soohee Han.
\newblock Belief projection-based reinforcement learning for environments with delayed feedback.
\newblock In \emph{Thirty-seventh Conference on Neural Information Processing Systems}, 2023.

\bibitem[Klein et~al.(2012)Klein, Geist, Piot, and Pietquin]{klein2012inverse}
Edouard Klein, Matthieu Geist, Bilal Piot, and Olivier Pietquin.
\newblock Inverse reinforcement learning through structured classification.
\newblock \emph{Advances in neural information processing systems}, 25, 2012.

\bibitem[Kormushev et~al.(2013)Kormushev, Calinon, and Caldwell]{kormushev2013reinforcement}
Petar Kormushev, Sylvain Calinon, and Darwin~G Caldwell.
\newblock Reinforcement learning in robotics: Applications and real-world challenges.
\newblock \emph{Robotics}, 2\penalty0 (3):\penalty0 122--148, 2013.

\bibitem[Kostrikov et~al.(2018)Kostrikov, Agrawal, Dwibedi, Levine, and Tompson]{kostrikov2018discriminator}
Ilya Kostrikov, Kumar~Krishna Agrawal, Debidatta Dwibedi, Sergey Levine, and Jonathan Tompson.
\newblock Discriminator-actor-critic: Addressing sample inefficiency and reward bias in adversarial imitation learning.
\newblock \emph{arXiv preprint arXiv:1809.02925}, 2018.

\bibitem[Krishnan et~al.(2016)Krishnan, Garg, Liaw, Miller, Pokorny, and Goldberg]{krishnan2016hirl}
Sanjay Krishnan, Animesh Garg, Richard Liaw, Lauren Miller, Florian~T Pokorny, and Ken Goldberg.
\newblock Hirl: Hierarchical inverse reinforcement learning for long-horizon tasks with delayed rewards.
\newblock \emph{arXiv preprint arXiv:1604.06508}, 2016.

\bibitem[Krishnan et~al.(2019)Krishnan, Garg, Liaw, Thananjeyan, Miller, Pokorny, and Goldberg]{krishnan2019swirl}
Sanjay Krishnan, Animesh Garg, Richard Liaw, Brijen Thananjeyan, Lauren Miller, Florian~T Pokorny, and Ken Goldberg.
\newblock Swirl: A sequential windowed inverse reinforcement learning algorithm for robot tasks with delayed rewards.
\newblock \emph{The international journal of robotics research}, 38\penalty0 (2-3):\penalty0 126--145, 2019.

\bibitem[Levine et~al.(2010)Levine, Popovic, and Koltun]{levine2010feature}
Sergey Levine, Zoran Popovic, and Vladlen Koltun.
\newblock Feature construction for inverse reinforcement learning.
\newblock \emph{Advances in neural information processing systems}, 23, 2010.

\bibitem[Levine et~al.(2011)Levine, Popovic, and Koltun]{levine2011nonlinear}
Sergey Levine, Zoran Popovic, and Vladlen Koltun.
\newblock Nonlinear inverse reinforcement learning with gaussian processes.
\newblock \emph{Advances in neural information processing systems}, 24, 2011.

\bibitem[Liotet et~al.(2021)Liotet, Venneri, and Restelli]{learning_belief}
Pierre Liotet, Erick Venneri, and Marcello Restelli.
\newblock Learning a belief representation for delayed reinforcement learning.
\newblock In \emph{2021 International Joint Conference on Neural Networks (IJCNN)}, pages 1--8. IEEE, 2021.

\bibitem[Liotet et~al.(2022)Liotet, Maran, Bisi, and Restelli]{liotet2022delayed}
Pierre Liotet, Davide Maran, Lorenzo Bisi, and Marcello Restelli.
\newblock Delayed reinforcement learning by imitation.
\newblock In \emph{International Conference on Machine Learning}, pages 13528--13556. PMLR, 2022.

\bibitem[Liu et~al.(2023)Liu, Bai, Jiao, and Zhan]{liu2023safety}
Wenyou Liu, Yunjun Bai, Li~Jiao, and Naijun Zhan.
\newblock Safety guarantee for time-delay systems with disturbances.
\newblock \emph{Science China Information Sciences}, 66\penalty0 (3):\penalty0 132102, 2023.

\bibitem[Mahmood et~al.(2018)Mahmood, Korenkevych, Komer, and Bergstra]{arm_delay}
A~Rupam Mahmood, Dmytro Korenkevych, Brent~J Komer, and James Bergstra.
\newblock Setting up a reinforcement learning task with a real-world robot.
\newblock In \emph{2018 IEEE/RSJ International Conference on Intelligent Robots and Systems (IROS)}, pages 4635--4640. IEEE, 2018.

\bibitem[Metelli et~al.(2020)Metelli, Mazzolini, Bisi, Sabbioni, and Restelli]{metelli2020control}
Alberto~Maria Metelli, Flavio Mazzolini, Lorenzo Bisi, Luca Sabbioni, and Marcello Restelli.
\newblock Control frequency adaptation via action persistence in batch reinforcement learning.
\newblock In \emph{International Conference on Machine Learning}, pages 6862--6873. PMLR, 2020.

\bibitem[Nagarajan and Kolter(2017)]{nagarajan2017gradient}
Vaishnavh Nagarajan and J~Zico Kolter.
\newblock Gradient descent gan optimization is locally stable.
\newblock \emph{Advances in neural information processing systems}, 30, 2017.

\bibitem[Neal(2001)]{neal2001annealed}
Radford~M Neal.
\newblock Annealed importance sampling.
\newblock \emph{Statistics and computing}, 11:\penalty0 125--139, 2001.

\bibitem[Ng et~al.(2000)Ng, Russell, et~al.]{ng2000algorithms}
Andrew~Y Ng, Stuart Russell, et~al.
\newblock Algorithms for inverse reinforcement learning.
\newblock In \emph{Icml}, volume~1, page~2, 2000.

\bibitem[Peng et~al.(2020)Peng, Coumans, Zhang, Lee, Tan, and Levine]{peng2020learning}
Xue~Bin Peng, Erwin Coumans, Tingnan Zhang, Tsang-Wei Lee, Jie Tan, and Sergey Levine.
\newblock Learning agile robotic locomotion skills by imitating animals.
\newblock \emph{arXiv preprint arXiv:2004.00784}, 2020.

\bibitem[Rachelson and Lagoudakis(2010)]{rachelson2010locality}
Emmanuel Rachelson and Michail~G Lagoudakis.
\newblock On the locality of action domination in sequential decision making.
\newblock 2010.

\bibitem[Ramachandran and Amir(2007)]{ramachandran2007bayesian}
Deepak Ramachandran and Eyal Amir.
\newblock Bayesian inverse reinforcement learning.
\newblock In \emph{IJCAI}, volume~7, pages 2586--2591, 2007.

\bibitem[Russell(1998)]{russell1998learning}
Stuart Russell.
\newblock Learning agents for uncertain environments.
\newblock In \emph{Proceedings of the eleventh annual conference on Computational learning theory}, pages 101--103, 1998.

\bibitem[Schulman et~al.(2017)Schulman, Wolski, Dhariwal, Radford, and Klimov]{schulman2017proximal}
John Schulman, Filip Wolski, Prafulla Dhariwal, Alec Radford, and Oleg Klimov.
\newblock Proximal policy optimization algorithms.
\newblock \emph{arXiv preprint arXiv:1707.06347}, 2017.

\bibitem[Shore and Johnson(1980)]{shore1980axiomatic}
John Shore and Rodney Johnson.
\newblock Axiomatic derivation of the principle of maximum entropy and the principle of minimum cross-entropy.
\newblock \emph{IEEE Transactions on information theory}, 26\penalty0 (1):\penalty0 26--37, 1980.

\bibitem[Silver et~al.(2017)Silver, Schrittwieser, Simonyan, Antonoglou, Huang, Guez, Hubert, Baker, Lai, Bolton, et~al.]{silver2017mastering}
David Silver, Julian Schrittwieser, Karen Simonyan, Ioannis Antonoglou, Aja Huang, Arthur Guez, Thomas Hubert, Lucas Baker, Matthew Lai, Adrian Bolton, et~al.
\newblock Mastering the game of go without human knowledge.
\newblock \emph{nature}, 550\penalty0 (7676):\penalty0 354--359, 2017.

\bibitem[Su et~al.(2024)Su, Feng, Zhan, and Zhan]{su2024switching}
Han Su, Shenghua Feng, Sinong Zhan, and Naijun Zhan.
\newblock Switching controller synthesis for hybrid systems against stl formulas.
\newblock In \emph{International Symposium on Formal Methods}, pages 229--247. Springer, 2024.

\bibitem[Sun et~al.(2021)Sun, Chen, Zhan, Han, Tian, Wang, and Yang]{sun2021relectrode}
Wei Sun, Yanjun Chen, Simon Zhan, Teng Han, Feng Tian, Hongan Wang, and Xing-Dong Yang.
\newblock Relectrode: A reconfigurable electrode for multi-purpose sensing based on microfluidics.
\newblock In \emph{Proceedings of the 2021 CHI Conference on Human Factors in Computing Systems}, pages 1--12, 2021.

\bibitem[Sun et~al.(2022)Sun, Chen, Chen, Zhang, Zhan, Li, Wu, Han, Mi, Wang, et~al.]{sun2022microfluid}
Wei Sun, Yuwen Chen, Yanjun Chen, Xiaopeng Zhang, Simon Zhan, Yixin Li, Jiecheng Wu, Teng Han, Haipeng Mi, Jingxian Wang, et~al.
\newblock Microfluid: A multi-chip rfid tag for interaction sensing based on microfluidic switches.
\newblock \emph{Proceedings of the ACM on Interactive, Mobile, Wearable and Ubiquitous Technologies}, 6\penalty0 (3):\penalty0 1--23, 2022.

\bibitem[Todorov et~al.(2012)Todorov, Erez, and Tassa]{mujoco}
Emanuel Todorov, Tom Erez, and Yuval Tassa.
\newblock Mujoco: A physics engine for model-based control.
\newblock In \emph{2012 IEEE/RSJ international conference on intelligent robots and systems}, pages 5026--5033. IEEE, 2012.

\bibitem[Torabi et~al.(2018)Torabi, Warnell, and Stone]{torabi2018generative}
Faraz Torabi, Garrett Warnell, and Peter Stone.
\newblock Generative adversarial imitation from observation.
\newblock \emph{arXiv preprint arXiv:1807.06158}, 2018.

\bibitem[Valensi et~al.(2024)Valensi, Derman, Mannor, and Dalal]{valensi2024tree}
David Valensi, Esther Derman, Shie Mannor, and Gal Dalal.
\newblock Tree search-based policy optimization under stochastic execution delay.
\newblock \emph{arXiv preprint arXiv:2404.05440}, 2024.

\bibitem[Villani et~al.(2009)]{villani2009optimal}
C{\'e}dric Villani et~al.
\newblock \emph{Optimal transport: old and new}, volume 338.
\newblock Springer, 2009.

\bibitem[Virmaux and Scaman(2018)]{virmaux2018lipschitz}
Aladin Virmaux and Kevin Scaman.
\newblock Lipschitz regularity of deep neural networks: analysis and efficient estimation.
\newblock \emph{Advances in Neural Information Processing Systems}, 31, 2018.

\bibitem[Walsh et~al.(2009)Walsh, Nouri, Li, and Littman]{learning_planning_delayed_feedback}
Thomas~J Walsh, Ali Nouri, Lihong Li, and Michael~L Littman.
\newblock Learning and planning in environments with delayed feedback.
\newblock \emph{Autonomous Agents and Multi-Agent Systems}, 18:\penalty0 83--105, 2009.

\bibitem[Wang et~al.(2023{\natexlab{a}})Wang, Zhan, Wang, Huang, Wang, Yang, and Zhu]{wang2023joint}
Yixuan Wang, Simon Zhan, Zhilu Wang, Chao Huang, Zhaoran Wang, Zhuoran Yang, and Qi~Zhu.
\newblock Joint differentiable optimization and verification for certified reinforcement learning.
\newblock In \emph{Proceedings of the ACM/IEEE 14th International Conference on Cyber-Physical Systems (with CPS-IoT Week 2023)}, pages 132--141, 2023{\natexlab{a}}.

\bibitem[Wang et~al.(2023{\natexlab{b}})Wang, Zhan, Jiao, Wang, Jin, Yang, Wang, Huang, and Zhu]{wang2023enforcing}
Yixuan Wang, Simon~Sinong Zhan, Ruochen Jiao, Zhilu Wang, Wanxin Jin, Zhuoran Yang, Zhaoran Wang, Chao Huang, and Qi~Zhu.
\newblock Enforcing hard constraints with soft barriers: Safe reinforcement learning in unknown stochastic environments.
\newblock In \emph{International Conference on Machine Learning}, pages 36593--36604. PMLR, 2023{\natexlab{b}}.

\bibitem[Wu et~al.(2024{\natexlab{a}})Wu, Zhan, Wang, Lin, Lv, Zhu, and Huang]{wu2024boosting}
Qingyuan Wu, Simon~Sinong Zhan, Yixuan Wang, Chung-Wei Lin, Chen Lv, Qi~Zhu, and Chao Huang.
\newblock Boosting long-delayed reinforcement learning with auxiliary short-delayed task.
\newblock \emph{arXiv preprint arXiv:2402.03141}, 2024{\natexlab{a}}.

\bibitem[Wu et~al.(2024{\natexlab{b}})Wu, Zhan, Wang, Wang, Lin, Lv, Zhu, and Huang]{wu2024variational}
Qingyuan Wu, Simon~Sinong Zhan, Yixuan Wang, Yuhui Wang, Chung-Wei Lin, Chen Lv, Qi~Zhu, and Chao Huang.
\newblock Variational delayed policy optimization.
\newblock \emph{arXiv preprint arXiv:2405.14226}, 2024{\natexlab{b}}.

\bibitem[Wulfmeier et~al.(2015)Wulfmeier, Ondruska, and Posner]{wulfmeier2015maximum}
Markus Wulfmeier, Peter Ondruska, and Ingmar Posner.
\newblock Maximum entropy deep inverse reinforcement learning.
\newblock \emph{arXiv preprint arXiv:1507.04888}, 2015.

\bibitem[Xie et~al.(2023)Xie, Xia, Yu, Wang, and Chang]{xie2023addressing}
Minzhi Xie, Bo~Xia, Yalou Yu, Xueqian Wang, and Yongzhe Chang.
\newblock Addressing delays in reinforcement learning via delayed adversarial imitation learning.
\newblock In \emph{International Conference on Artificial Neural Networks}, pages 271--282. Springer, 2023.

\bibitem[Xue et~al.(2017)Xue, Mosaad, Fr{\"a}nzle, Chen, Li, and Zhan]{xue2017safe}
Bai Xue, Peter~Nazier Mosaad, Martin Fr{\"a}nzle, Mingshuai Chen, Yangjia Li, and Naijun Zhan.
\newblock Safe over-and under-approximation of reachable sets for delay differential equations.
\newblock In \emph{Formal Modeling and Analysis of Timed Systems: 15th International Conference, FORMATS 2017, Berlin, Germany, September 5--7, 2017, Proceedings 15}, pages 281--299. Springer, 2017.

\bibitem[Xue et~al.(2020)Xue, Wang, Feng, and Zhan]{xue2020over}
Bai Xue, Qiuye Wang, Shenghua Feng, and Naijun Zhan.
\newblock Over-and underapproximating reach sets for perturbed delay differential equations.
\newblock \emph{IEEE Transactions on Automatic Control}, 66\penalty0 (1):\penalty0 283--290, 2020.

\bibitem[Zhan et~al.(2024{\natexlab{a}})Zhan, Wu, Wang, Wang, Jiao, Huang, and Zhu]{zhan2024model}
S.~S. Zhan, Q.~Wu, P.~Wang, Y.~Wang, R.~Jiao, C.~Huang, and Q.~Zhu.
\newblock Model-based reward shaping for adversarial inverse reinforcement learning in stochastic environments.
\newblock \emph{arXiv preprint arXiv:2410.03847}, 2024{\natexlab{a}}.

\bibitem[Zhan et~al.(2024{\natexlab{b}})Zhan, Wang, Wu, Jiao, Huang, and Zhu]{zhan2024state}
Sinong Zhan, Yixuan Wang, Qingyuan Wu, Ruochen Jiao, Chao Huang, and Qi~Zhu.
\newblock State-wise safe reinforcement learning with pixel observations.
\newblock In \emph{6th Annual Learning for Dynamics \& Control Conference}, pages 1187--1201. PMLR, 2024{\natexlab{b}}.

\bibitem[Zhang et~al.(2023)Zhang, Zhang, Gu, and Li]{zhang2023multi}
Yuyang Zhang, Runyu Zhang, Yuantao Gu, and Na~Li.
\newblock Multi-agent reinforcement learning with reward delays.
\newblock In \emph{Learning for Dynamics and Control Conference}, pages 692--704. PMLR, 2023.

\bibitem[Ziebart et~al.(2008)Ziebart, Maas, Bagnell, Dey, et~al.]{ziebart2008maximum}
Brian~D Ziebart, Andrew~L Maas, J~Andrew Bagnell, Anind~K Dey, et~al.
\newblock Maximum entropy inverse reinforcement learning.
\newblock In \emph{Aaai}, volume~8, pages 1433--1438. Chicago, IL, USA, 2008.

\bibitem[Ziebart et~al.(2010)Ziebart, Bagnell, and Dey]{ziebart2010modeling}
Brian~D Ziebart, J~Andrew Bagnell, and Anind~K Dey.
\newblock Modeling interaction via the principle of maximum causal entropy.
\newblock 2010.

\end{thebibliography}
\newpage
\appendix
\section{Proof}
\begin{lemma}[Time Lipschitz Continuous Belief]
Given a $L_\mathcal{R}$-Time Lipschitz Continuous Dynamic, the belief $b$ is $L_\mathcal{R}$-Time Lipschitz Continuous, $\forall x_t \in \mathcal{X}$, satisfying
$$
\begin{aligned}
    W_1(b(\cdot|x_t) || \delta_{s_{t-\Delta}}) \leq \Delta L_\mathcal{T}.
\end{aligned}
$$
\end{lemma}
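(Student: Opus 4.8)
The goal is to bound the $1$-Wasserstein distance between the belief $b(\cdot|x_t)$ and the Dirac at the most recent observed state $\delta_{s_{t-\Delta}}$. The plan is to exploit the recursive structure of the belief in \equationref{eq::belief_function}: the belief is obtained by propagating $s_{t-\Delta}$ forward through the true dynamics $\mathcal{T}$ exactly $\Delta$ times. I would introduce the intermediate belief distributions $b_k := b(s_{t-\Delta+k}\,|\,x_t)$ for $k = 0, 1, \dots, \Delta$, where $b_0 = \delta_{s_{t-\Delta}}$ and $b_\Delta = b(\cdot|x_t)$, with each $b_{k+1}$ the pushforward of $b_k$ under one application of $\mathcal{T}(\cdot\,|\,\cdot\,, a_{t-\Delta+k})$. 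The strategy is then a telescoping/triangle-inequality argument over these $\Delta$ one-step transitions.

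The key steps, in order, would be: (i) Write $W_1(b_\Delta \,||\, \delta_{s_{t-\Delta}}) = W_1(b_\Delta \,||\, b_0)$ and apply the triangle inequality for $W_1$ to get $W_1(b_\Delta\,||\,b_0) \le \sum_{k=0}^{\Delta-1} W_1(b_{k+1}\,||\,b_k)$. (ii) Bound each term $W_1(b_{k+1}\,||\,b_k)$ by $L_\mathcal{T}$. For this I would use the Time-Lipschitz assumption (\definitionref{def::dynamic_func_lipschitz}), which says $W_1(\mathcal{T}(\cdot|s,a)\,||\,\delta_s) \le L_\mathcal{T}$ for every $(s,a)$, together with the fact that applying the same stochastic kernel to two coupled distributions does not increase $W_1$ (the data-processing / non-expansiveness property of $W_1$ under a shared Markov kernel) — more precisely, since $b_{k+1}$ is the pushforward of $b_k$ under the kernel $\mathcal{T}(\cdot|\cdot,a_{t-\Delta+k})$, one can build a coupling of $b_{k+1}$ and $b_k$ by first coupling each sample $s \sim b_k$ with itself and then transporting one copy through $\mathcal{T}(\cdot|s,a_{t-\Delta+k})$, giving $W_1(b_{k+1}\,||\,b_k) \le \mathbb{E}_{s\sim b_k}\big[W_1(\mathcal{T}(\cdot|s,a_{t-\Delta+k})\,||\,\delta_s)\big] \le L_\mathcal{T}$. (iii) Sum the $\Delta$ identical bounds to conclude $W_1(b(\cdot|x_t)\,||\,\delta_{s_{t-\Delta}}) \le \Delta L_\mathcal{T}$.

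The main obstacle I anticipate is making step (ii) rigorous, specifically the claim that pushing two distributions through the same Markov kernel is non-expansive in $W_1$ and the particular coupling that pairs $\delta_{s_{t-\Delta}}$'s propagation with $b_k$'s propagation. One has to be careful that the ``action'' arguments $a_{t-\Delta+k}$ appearing in the belief are fixed constants contained in the augmented state $x_t$ (not random), so the kernel applied at step $k$ is a genuine fixed kernel and the Time-Lipschitz bound applies uniformly. I would state the non-expansiveness as a small auxiliary fact (gluing lemma for couplings) and then the rest is the clean telescoping sum; the $1/(1-\gamma)$-type subtleties do not appear here since there is no discounting inside the belief, just a finite product of $\Delta$ transitions.

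\emph{Remark on notation.} The statement as written uses $L_\mathcal{R}$ in the hypothesis (``$L_\mathcal{R}$-Time Lipschitz Continuous Dynamic'') but $L_\mathcal{T}$ in the conclusion; I read this as a typo and treat the dynamics' Time-Lipschitz constant consistently as $L_\mathcal{T}$ throughout, matching \definitionref{def::dynamic_func_lipschitz}.
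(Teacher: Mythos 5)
Your proposal is correct and follows essentially the same route as the paper: a triangle-inequality/telescoping decomposition of $W_1(b(\cdot|x_t)\,\|\,\delta_{s_{t-\Delta}})$ into $\Delta$ consecutive one-step belief differences, each bounded by $L_\mathcal{T}$ via the Time-Lipschitz dynamics. The only difference is that you make explicit the coupling (non-expansiveness) argument justifying the per-step bound $W_1(b_{k+1}\,\|\,b_k)\le L_\mathcal{T}$, which the paper's proof leaves implicit, and you correctly flag the $L_\mathcal{R}$/$L_\mathcal{T}$ typo in the statement.
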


\begin{proof}
\label{proof::belief_func_lipschitz}
We dentoe that $b^{(\Delta)}(\cdot|x_t) = b(\cdot|x_t)$ and $b^{(1)}(\cdot|x_t) = \mathcal{T}(\cdot|s_{t-\Delta}, a_{t-\Delta})$.
$$
    \begin{aligned}
        W_1(b(\cdot|x_t) || \delta_{s_{t-\Delta}}) &\leq W_1(b^{(\Delta)}(\cdot|x_t) || b^{(\Delta-1)}(\cdot|x_t)) + W_1(b^{(\Delta-1)}(\cdot|x_t) || \delta_{s_{t-\Delta}})\\
        & \leq L_\mathcal{T} + W_1(b^{(\Delta-1)}(\cdot|x_t) || b^{(\Delta-2)}(\cdot|x_t)) + W_1(b^{(\Delta-2)}(\cdot|x_t) || \delta_{s_{t-\Delta}})\\
        & \cdots\\
        & \leq (\Delta - 1)L_\mathcal{T} + W_1(b^{(1)}(\cdot|x_t) || \delta_{s_{t-\Delta}})\\
        & \leq \Delta L_\mathcal{T}\\
    \end{aligned}
$$  
\end{proof}

\begin{lemma}[Reward Delayed Difference Upper Bound]
Given a $L_\mathcal{R}$-Time Lipschitz Continuous Dynamic and $L_\mathcal{R}$-Lipschitz Continuous Reward function, $\forall x_t \in \mathcal{X}$, the upper bound of reward delayed difference is as follows:
\begin{equation*}
    d_{\mathcal{R}}\left(\mathop{\mathbb{E}}_{s_t\sim b(\cdot|x_t)}\left[\mathcal{R}(s_t, a_t)\right] - \mathcal{R}(s_{t-\Delta}, a_{t})\right)\leq \Delta L_\mathcal{R} L_\mathcal{T}
\end{equation*}
\end{lemma}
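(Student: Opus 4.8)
The plan is to bound the reward difference by pushing the expectation inside and exploiting the Lipschitz structure of both the reward and the belief. The quantity $d_{\mathcal{R}}\left(\mathop{\mathbb{E}}_{s_t\sim b(\cdot|x_t)}[\mathcal{R}(s_t,a_t)] - \mathcal{R}(s_{t-\Delta},a_t)\right)$ compares the belief-averaged reward at the true current state against the reward evaluated at the last observed state $s_{t-\Delta}$, holding the action $a_t$ fixed. Since $\mathcal{R}(s_{t-\Delta},a_t)$ can be written as $\mathop{\mathbb{E}}_{s\sim \delta_{s_{t-\Delta}}}[\mathcal{R}(s,a_t)]$, the difference is a difference of expectations of the same function $s\mapsto \mathcal{R}(s,a_t)$ under the two distributions $b(\cdot|x_t)$ and $\delta_{s_{t-\Delta}}$.

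The key steps, in order, are: (i) rewrite $\mathcal{R}(s_{t-\Delta},a_t)$ as an expectation under the Dirac $\delta_{s_{t-\Delta}}$, so that the left-hand side becomes $d_{\mathcal{R}}\big(\mathop{\mathbb{E}}_{s\sim b(\cdot|x_t)}[\mathcal{R}(s,a_t)] - \mathop{\mathbb{E}}_{s\sim\delta_{s_{t-\Delta}}}[\mathcal{R}(s,a_t)]\big)$; (ii) observe that by Definition~\ref{def::reward_func_lipschitz}, for fixed $a_t$ the map $s\mapsto \mathcal{R}(s,a_t)$ is $L_\mathcal{R}$-Lipschitz in $s$ (the action term drops out); (iii) apply the Kantorovich--Rubinstein dual characterization of the $1$-Wasserstein distance, which states that for any $L$-Lipschitz function $f$, $|\mathop{\mathbb{E}}_{\mu}[f] - \mathop{\mathbb{E}}_{\nu}[f]| \leq L\, W_1(\mu\|\nu)$; this yields the bound $L_\mathcal{R}\, W_1(b(\cdot|x_t)\|\delta_{s_{t-\Delta}})$; (iv) invoke Lemma~\ref{lemma::belief_func_lipschitz}, which gives $W_1(b(\cdot|x_t)\|\delta_{s_{t-\Delta}}) \leq \Delta L_\mathcal{T}$, and combine to conclude $d_{\mathcal{R}}(\cdot) \leq \Delta L_\mathcal{R} L_\mathcal{T}$.

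The main obstacle — really the only subtlety — is step (iii): correctly invoking the Kantorovich--Rubinstein duality to move from a Wasserstein distance between distributions to a bound on the difference of expectations of a Lipschitz function. One must be careful that $d_{\mathcal{R}}$ (the Euclidean/absolute-value distance on the reward space) is exactly the quantity the duality controls, and that the Lipschitz constant of $s\mapsto\mathcal{R}(s,a_t)$ is indeed $L_\mathcal{R}$ and not something larger once the action argument is frozen — this follows directly from Definition~\ref{def::reward_func_lipschitz} by setting $a_1=a_2=a_t$. Everything else is bookkeeping: the Dirac rewriting in step (i) is trivial, and Lemma~\ref{lemma::belief_func_lipschitz} is already established earlier in the paper, so step (iv) is immediate. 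I would also note in passing that Jensen's inequality gives an alternative route for step (iii) — $d_{\mathcal{R}}(\mathop{\mathbb{E}}_b[\mathcal{R}(s,a_t)] - \mathcal{R}(s_{t-\Delta},a_t)) \leq \mathop{\mathbb{E}}_b[d_{\mathcal{R}}(\mathcal{R}(s,a_t)-\mathcal{R}(s_{t-\Delta},a_t))] \leq L_\mathcal{R}\mathop{\mathbb{E}}_b[d_\mathcal{S}(s,s_{t-\Delta})]$ — but one still needs to relate $\mathop{\mathbb{E}}_b[d_\mathcal{S}(s,s_{t-\Delta})]$ back to $W_1(b\|\delta_{s_{t-\Delta}})$, which in fact holds with equality since one marginal is a Dirac, so both routes land in the same place.
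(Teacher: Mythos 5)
Your proposal is correct and essentially matches the paper's proof: both reduce the claim to the $L_\mathcal{R}$-Lipschitzness of $s \mapsto \mathcal{R}(s,a_t)$, a bound of the form $L_\mathcal{R}\,W_1(b(\cdot|x_t)\,\|\,\delta_{s_{t-\Delta}})$, and then Lemma~\ref{lemma::belief_func_lipschitz}. The paper's middle step is exactly the Jensen-plus-expected-distance route you mention in passing (bounding by $L_\mathcal{R}\mathop{\mathbb{E}}_{s_t\sim b}[d_\mathcal{S}(s_t,s_{t-\Delta})]$ and identifying this with $W_1$ against the Dirac), which is equivalent to your Kantorovich--Rubinstein argument.
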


\begin{proof}
\label{proof:reward_bound}
We substitute the delayed reward function $R_\Delta(x_t,a_t)$ to the belief version $\mathop{\mathcal{E}}_{s_t\sim b(\cdot\vert x_t)}[R(s_t,a_t)]$
$$
    \begin{aligned}
    d_{\mathcal{R}}\left(\mathop{\mathbb{E}}_{s_t\sim b(\cdot|x_t)}\left[\mathcal{R}(s_t, a_t)\right] - \mathcal{R}(s_{t-\Delta}, a_{t})\right)&\leq L_\mathcal{R} \mathop{\mathbb{E}}_{s_{t}\sim b(\cdot|x_{t})} \left[d_{\mathcal{S}}(s_t, s_{t-\Delta})\right]\\ 
    &\leq L_\mathcal{R} W_1(b(\cdot|x_{t})||\delta_{s_{t-\Delta}})\\ 
    &\leq \Delta L_\mathcal{R} L_\mathcal{T}.\\ 
\end{aligned}
$$
\end{proof}

\begin{proposition}[Performance Difference Upper Bound]
Given a $L_\mathcal{R}$-Time Lipschitz Continuous Dynamic, $L_\mathcal{R}$-Lipschitz Continuous Reward function, and policies $\pi$ and $\pi_\Delta$, $\forall x_t \in \mathcal{X}$, the upper bound of performance difference is as follows:
\begin{equation*}
    \Vert V^{\pi_\Delta}(x_t) - V^\pi(x_t)\Vert \leq \frac{1}{1-\gamma} \left[R_{\max} + \Delta L_\mathcal{R} L_\mathcal{T}\right]
\end{equation*}
\end{proposition}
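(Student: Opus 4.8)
The plan is to split the value gap into a piece caused by the \emph{reward mismatch} ($R_\Delta(x_t,a_t)$ versus $R(s_{t-\Delta},a_t)$) and a piece caused by the \emph{policy mismatch} ($\pi_\Delta(\cdot\mid x_t)$ versus $\pi(\cdot\mid s_{t-\Delta})$), and to bound each with the $\tfrac{1}{1-\gamma}$ horizon factor absorbing the geometric sum over time. To decouple the two effects I would introduce a hybrid value function $\hat V^{\pi}$ on the augmented MDP that runs the delay-free policy $\pi$ but evaluates the \emph{delayed} reward, i.e.\ the fixed point of $\hat V^{\pi}(x_t)=\mathbb{E}_{x_{t+1}\sim\mathcal{T}_\Delta(\cdot\mid x_t,a_t),\,a_t\sim\pi(\cdot\mid s_{t-\Delta})}\!\left[R_\Delta(x_t,a_t)+\gamma\hat V^{\pi}(x_{t+1})\right]$, and then apply the triangle inequality $\Vert V^{\pi_\Delta}(x_t)-V^{\pi}(x_t)\Vert\le\Vert V^{\pi_\Delta}(x_t)-\hat V^{\pi}(x_t)\Vert+\Vert \hat V^{\pi}(x_t)-V^{\pi}(x_t)\Vert$.

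For the second term I would use that $\hat V^{\pi}$ and $V^{\pi}$ are generated by the \emph{same} policy and the \emph{same} kernel $\mathcal{T}_\Delta$ and differ only in the per-step reward. Writing each as a discounted sum of rewards along the common trajectory law induced by $\pi$ (equivalently, iterating the two Bellman recursions and cancelling the matched policy and transition terms), every time step contributes at most $d_\mathcal{R}\!\left(R_\Delta(x_{t+k},a_{t+k})-R(s_{t+k-\Delta},a_{t+k})\right)\le\Delta L_\mathcal{R}L_\mathcal{T}$ by Lemma~\ref{lemma:reward_bound}; crucially this bound is pointwise in $(x,a)$, hence survives the expectation, so $\Vert \hat V^{\pi}(x_t)-V^{\pi}(x_t)\Vert\le\sum_{k\ge0}\gamma^{k}\Delta L_\mathcal{R}L_\mathcal{T}=\tfrac{\Delta L_\mathcal{R}L_\mathcal{T}}{1-\gamma}$.

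For the first term, $V^{\pi_\Delta}$ and $\hat V^{\pi}$ use the \emph{same} reward $R_\Delta$, which is bounded by $R_{\max}$ because $R_\Delta(x,a)=\mathbb{E}_{s\sim b(\cdot\mid x)}[R(s,a)]$ inherits the bound on $R$; consequently both value functions take values in a common interval of width at most $\tfrac{R_{\max}}{1-\gamma}$, so $\Vert V^{\pi_\Delta}(x_t)-\hat V^{\pi}(x_t)\Vert\le\tfrac{R_{\max}}{1-\gamma}$ regardless of how far apart $\pi_\Delta$ and $\pi$ are. Summing the two bounds gives $\Vert V^{\pi_\Delta}(x_t)-V^{\pi}(x_t)\Vert\le\tfrac{1}{1-\gamma}\!\left[R_{\max}+\Delta L_\mathcal{R}L_\mathcal{T}\right]$, as claimed.

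The step I expect to be the real obstacle is the policy-mismatch term: since $\pi$ is conditioned on the delayed observation $s_{t-\Delta}$ while $\pi_\Delta$ is conditioned on the augmented state $x_t$, the two are not close in any natural metric, so no Lipschitz-type refinement is available and one is forced back onto the crude uniform bound $R_{\max}$ — this is precisely the irreducible term that does not vanish as $\Delta\to0$ or as the Lipschitz constants shrink, and it is what the proposition is meant to expose as the cost of using the delayed representation. A minor secondary point is reconciling the infinite geometric sums above with the finite horizon $T$ in \equationref{eq::delayed_visitation_prob}; this causes no trouble since truncating to horizon $T$ only drops nonnegative tail contributions and can only tighten the bounds.
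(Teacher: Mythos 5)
Your proof is correct and reaches the stated constant, but it is not the paper's argument. The paper performs a one-step Bellman decomposition: it adds and subtracts the backup $\mathbb{E}_{a_t\sim\pi}\left[R_\Delta(x_t,a_t)+\gamma V^{\pi_\Delta}(x_{t+1})\right]$, splitting the gap into a policy-mismatch term $V^{\pi_\Delta}(x_t)-\mathbb{E}_{a_t\sim\pi}\left[Q^{\pi_\Delta}(x_t,a_t)\right]$, a per-step reward-mismatch term controlled by Lemma~\ref{lemma:reward_bound}, and a discounted copy of the same gap at $x_{t+1}$, which is then solved as a geometric recursion. You instead decouple the two effects globally through the hybrid value function $\hat V^{\pi}$ (policy $\pi$, delayed reward $R_\Delta$): the reward-mismatch half is a discounted sum of pointwise differences each at most $\Delta L_\mathcal{R} L_\mathcal{T}$, and the policy-mismatch half is absorbed by the crude range bound $R_{\max}/(1-\gamma)$ valid for any two value functions built from the same bounded reward, with no comparison between $\pi$ and $\pi_\Delta$ needed. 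The two routes produce the same final expression, but yours is arguably better justified at the delicate spot: in the paper's recursion the policy-mismatch term is a difference of two action-averages of $Q^{\pi_\Delta}$, whose natural per-step bound is the span of $Q^{\pi_\Delta}$ (order $R_{\max}/(1-\gamma)$) rather than $R_{\max}$, whereas your global bound sidesteps any per-step claim. One caveat applies equally to both arguments: the width-$R_{\max}/(1-\gamma)$ interval (and hence the proposition's constant) presumes rewards lying in an interval of length $R_{\max}$, e.g.\ $[0,R_{\max}]$; under a two-sided bound $|R|\le R_{\max}$ a factor $2$ appears. Your closing remark about finite versus infinite horizon is immaterial, as the value functions are defined as fixed points of the discounted Bellman equations.
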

\begin{proof}
\label{proof::performance_difference}
$$
\begin{aligned}
    &V^{\pi_\Delta}(x_t) - V^\pi(x_t)\\
    =& V^{\pi_\Delta}(x_t) 
    - \mathop{\mathbb{E}}_{
    x_{t+1} \sim \mathcal{T}_{\Delta}(\cdot|x_{t}, a_{t})\atop
    a_{t} \sim \pi(\cdot|x_t)
    }\left[\mathop{\mathbb{E}}_{s_{t} \sim b(\cdot|x_{t})}\left[\mathcal{R}(s_t,a_t)\right]+\gamma V^{\pi_\Delta}(x_{t+1})\right]\\
    &+ \mathop{\mathbb{E}}_{
    x_{t+1} \sim \mathcal{T}_{\Delta}(\cdot|x_{t}, a_{t})\atop
    a_{t} \sim \pi(\cdot|x_t)
    }\left[\mathop{\mathbb{E}}_{s_{t} \sim b(\cdot|x_{t})}\left[\mathcal{R}(s_t,a_t)\right]+\gamma V^{\pi_\Delta}(x_{t+1})\right] 
    - V^\pi(x_t)\\
    =& V^{\pi_\Delta}(x_t) 
    - \mathop{\mathbb{E}}_{
    a_{t} \sim \pi(\cdot|x_t)
    }\left[Q^{\pi_\Delta}(x_t, a_t)\right]\\
    &+
    \mathop{\mathbb{E}}_{
    x_{t+1} \sim \mathcal{T}_{\Delta}(\cdot|x_{t}, a_{t})\atop
    a_{t} \sim \pi(\cdot|x_t)
    }\left[\mathop{\mathbb{E}}_{s_{t} \sim b(\cdot|x_{t})}\left[\mathcal{R}(s_t,a_t)\right] - \mathcal{R}(s_t,a_t)\right]
    + 
    \gamma\mathop{\mathbb{E}}_{
    x_{t+1} \sim \mathcal{T}_{\Delta}(\cdot|x_{t}, a_{t})\atop
    a_{t} \sim \pi(\cdot|x_t)
    }\left[V^{\pi_\Delta}(x_{t+1}) - V^\pi(x_{t+1})\right]\\
\end{aligned}
$$
Therefore,
$$
\begin{aligned}
    &|| V^{\pi_\Delta}(x_t) - V^\pi(x_t) ||\\
    \leq& ||V^{\pi_\Delta}(x_t) 
    - \mathop{\mathbb{E}}_{
    a_{t} \sim \pi(\cdot|x_t)
    }\left[Q^{\pi_\Delta}(x_t, a_t)\right]||\\
    &+
    ||\mathop{\mathbb{E}}_{
    x_{t+1} \sim \mathcal{T}_{\Delta}(\cdot|x_{t}, a_{t})\atop
    a_{t} \sim \pi(\cdot|x_t)
    }\left[\mathop{\mathbb{E}}_{s_{t} \sim b(\cdot|x_{t})}\left[\mathcal{R}(s_t,a_t)\right] - \mathcal{R}(s_t,a_t)\right]||\\
    &+ 
    \gamma||\mathop{\mathbb{E}}_{
    x_{t+1} \sim \mathcal{T}_{\Delta}(\cdot|x_{t}, a_{t})\atop
    a_{t} \sim \pi(\cdot|x_t)
    }\left[V^{\pi_\Delta}(x_{t+1}) - V^\pi(x_{t+1})\right]||\\
    \leq&
    \frac{1}{1-\gamma} \left[\mathcal{R}_{\max} + \Delta L_\mathcal{R} L_\mathcal{T}\right]
    \\
\end{aligned}
$$
\end{proof}

\clearpage
\section{Learning Curves}
\label{appendix::training_curves}

\begin{figure}[h]
    \centering
    \centerline{
        \subfigure[Delay=5, $\#$Traj=10]{\includegraphics[width=0.33\linewidth]{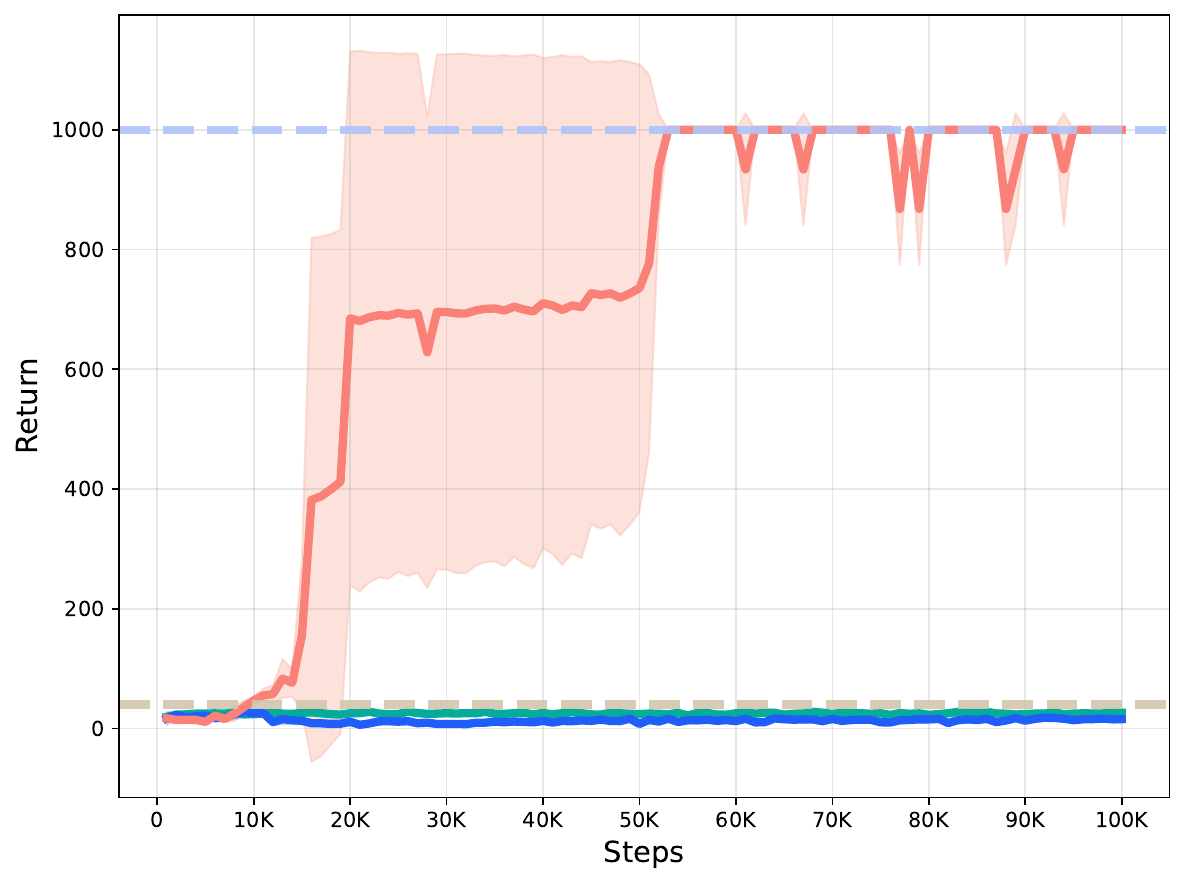}}
        \subfigure[Delay=5, $\#$Traj=100]{\includegraphics[width=0.33\linewidth]{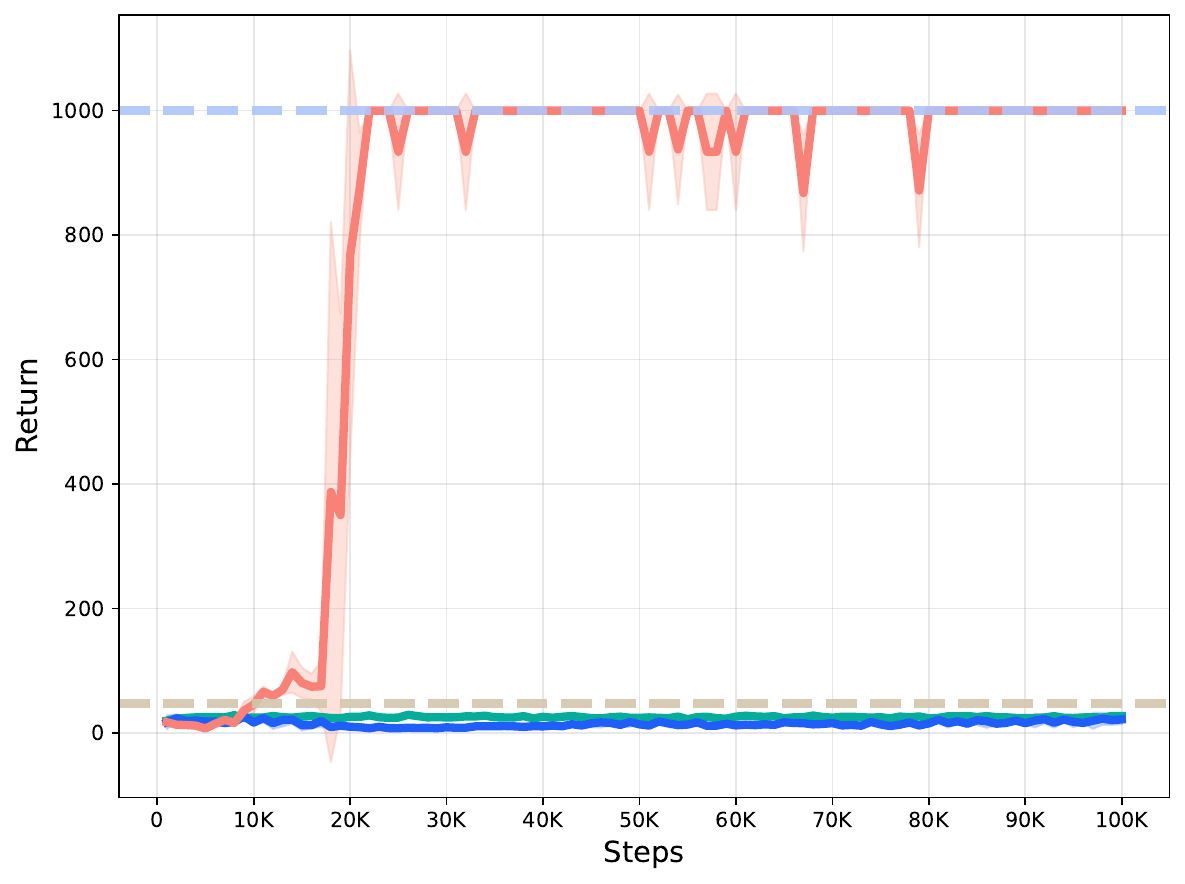}}
        \subfigure[Delay=5, $\#$Traj=1000]{\includegraphics[width=0.33\linewidth]{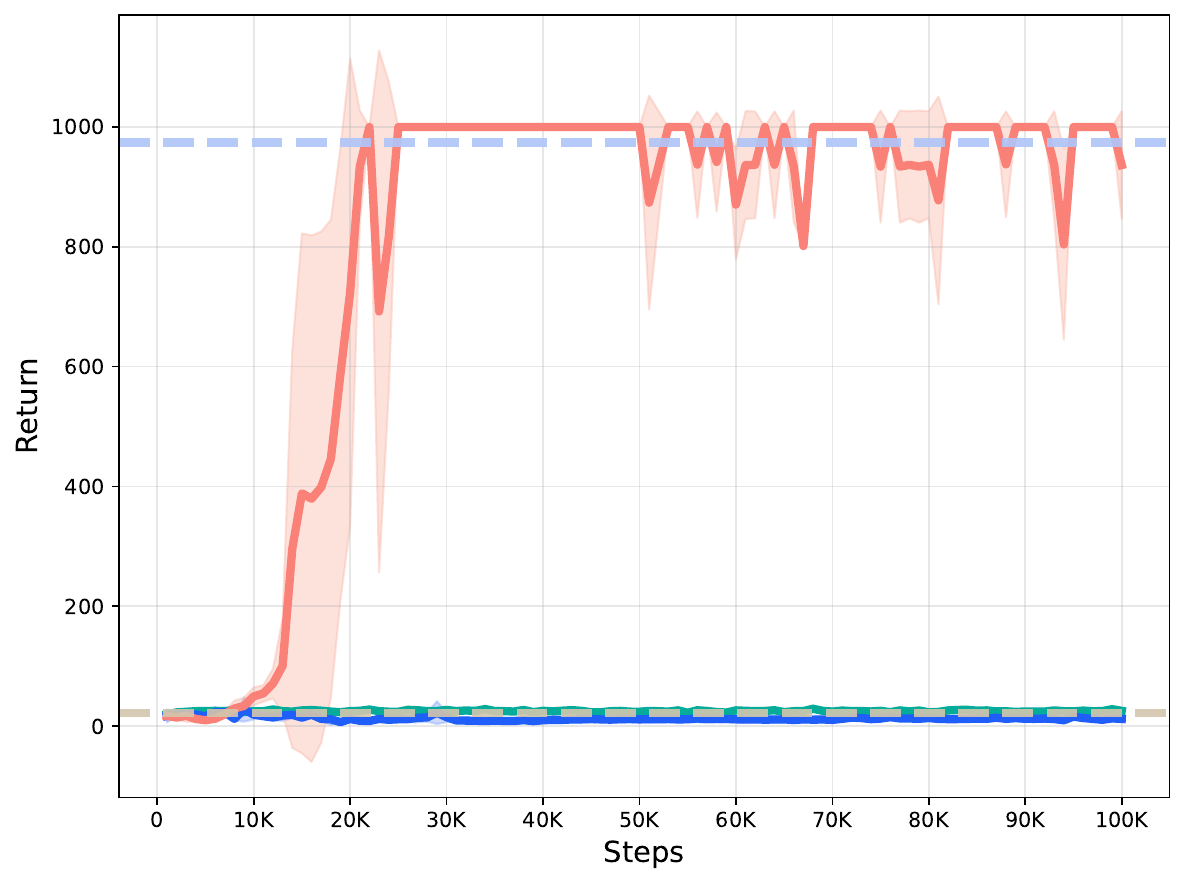}}
    }
    \centerline{
        \subfigure[Delay=10, $\#$Traj=10]{\includegraphics[width=0.33\linewidth]{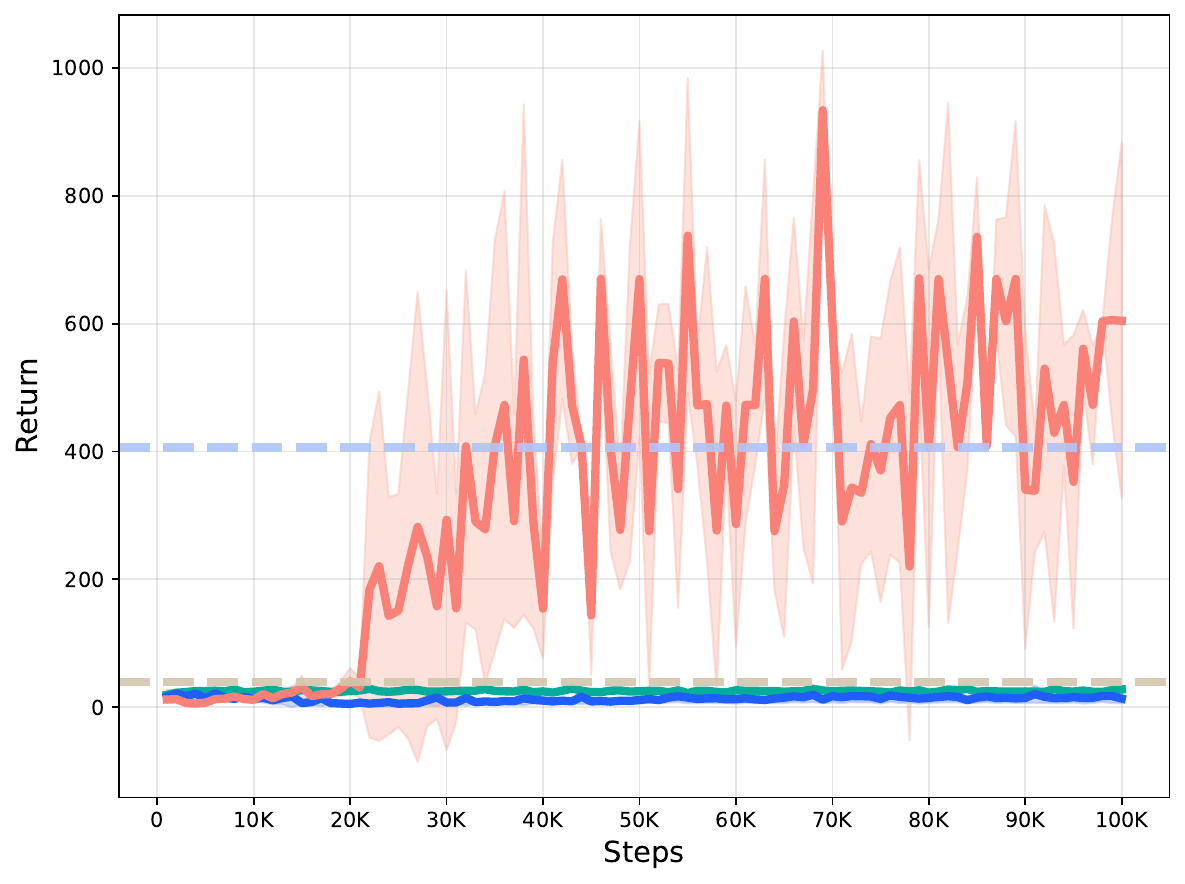}}
        \subfigure[Delay=10, $\#$Traj=100]{\includegraphics[width=0.33\linewidth]{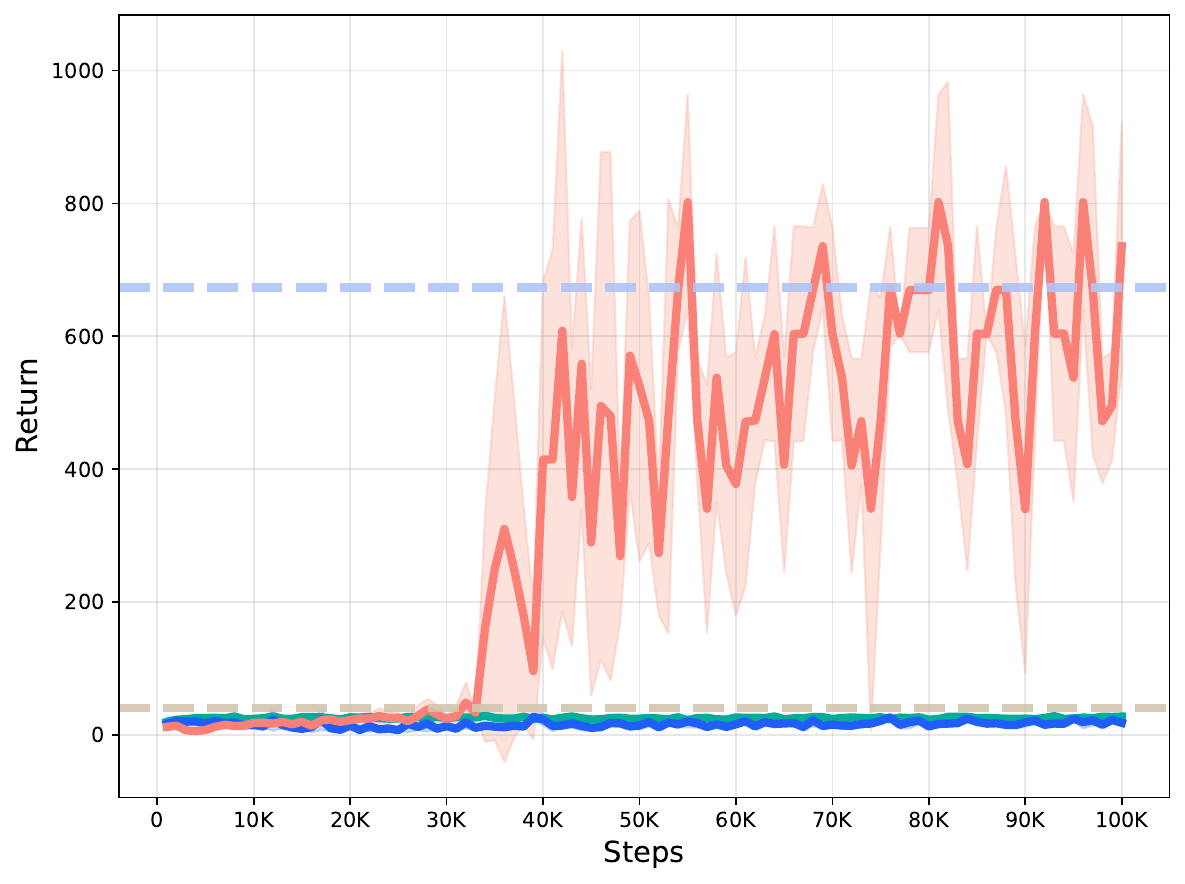}}
        \subfigure[Delay=10, $\#$Traj=1000]{\includegraphics[width=0.33\linewidth]{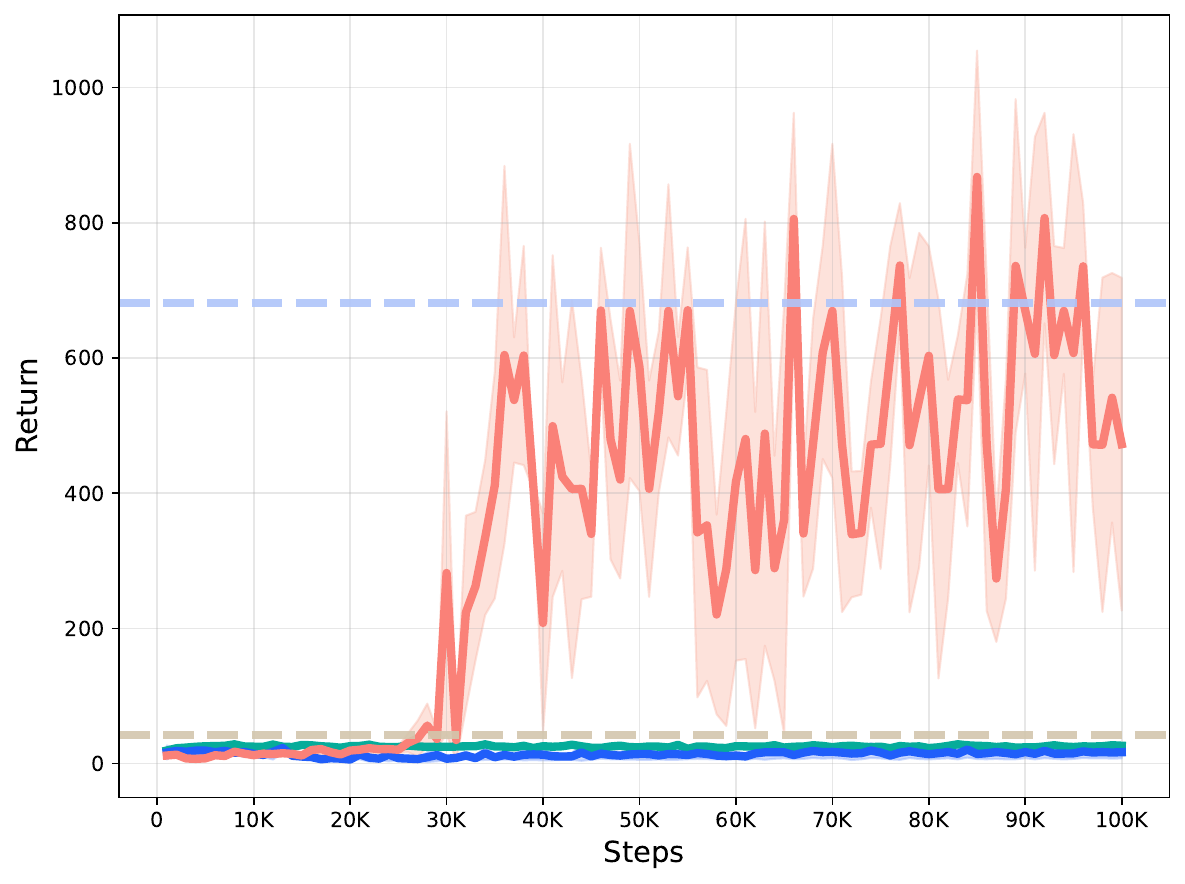}}
    }
    \centerline{
        \includegraphics[width=0.8\linewidth]{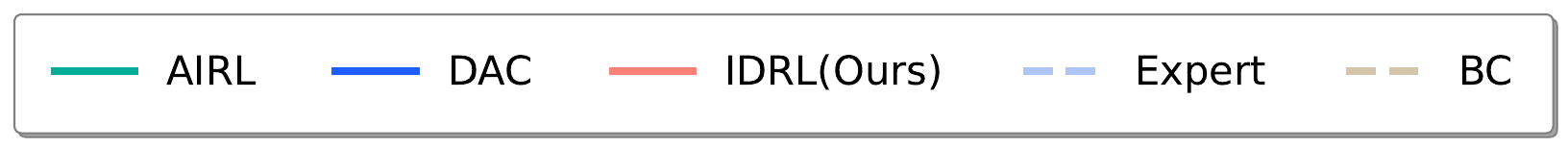}
    }
    \caption{Learning Curves on \texttt{InvertedPendulum-v4} with different delays and quantities of expert demonstrations.}
    \label{fig:InvertedPendulum}
\end{figure}

\begin{figure}[h]
    \centering
    \centerline{
        \subfigure[Delay=5, $\#$Traj=10]{\includegraphics[width=0.33\linewidth]{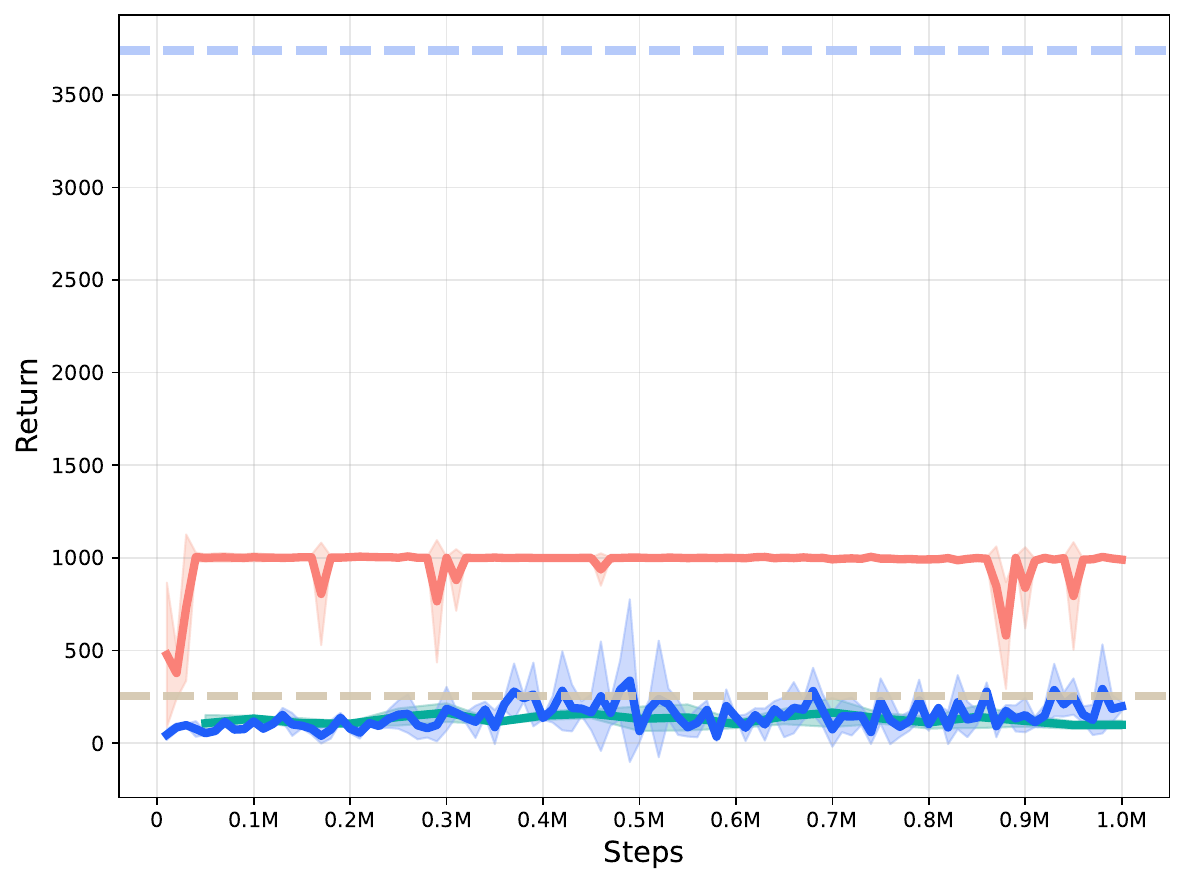}}
        \subfigure[Delay=5, $\#$Traj=100]{\includegraphics[width=0.33\linewidth]{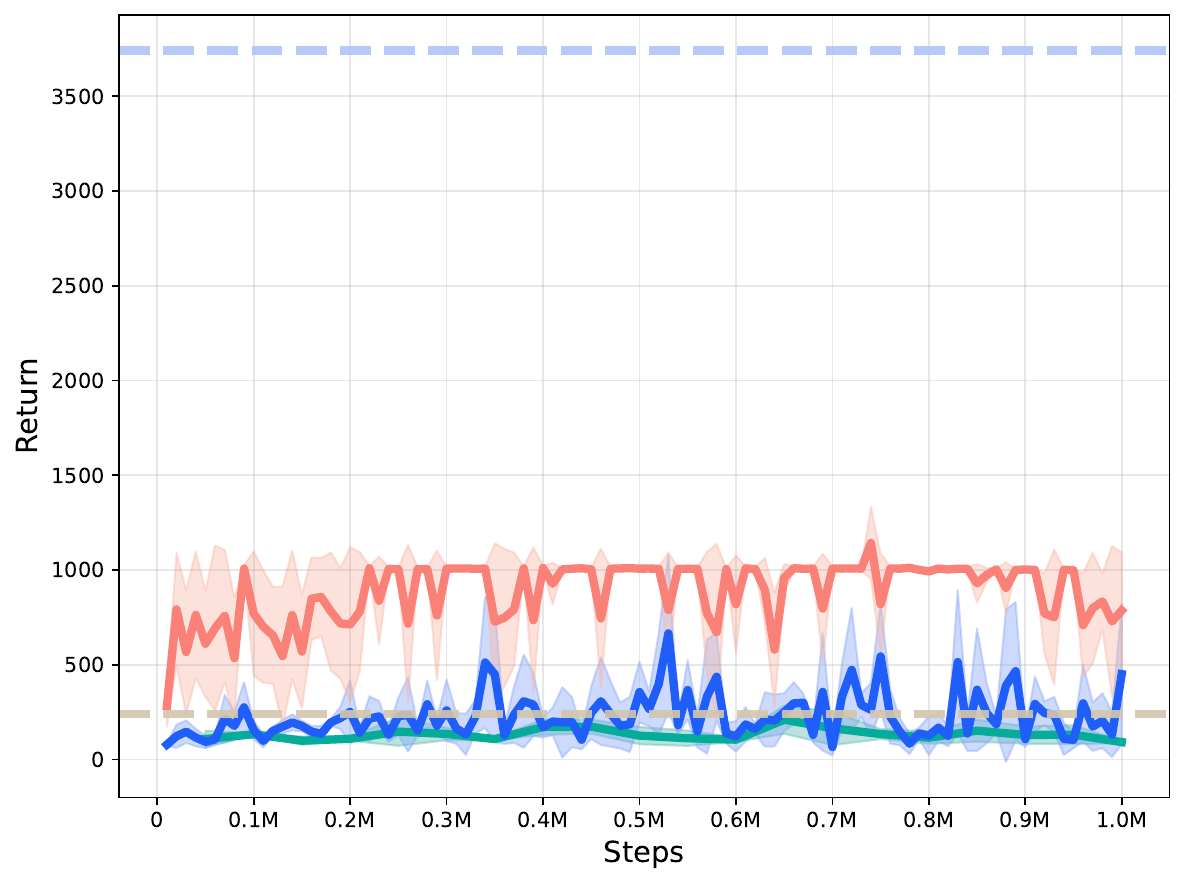}}
        \subfigure[Delay=5, $\#$Traj=1000]{\includegraphics[width=0.33\linewidth]{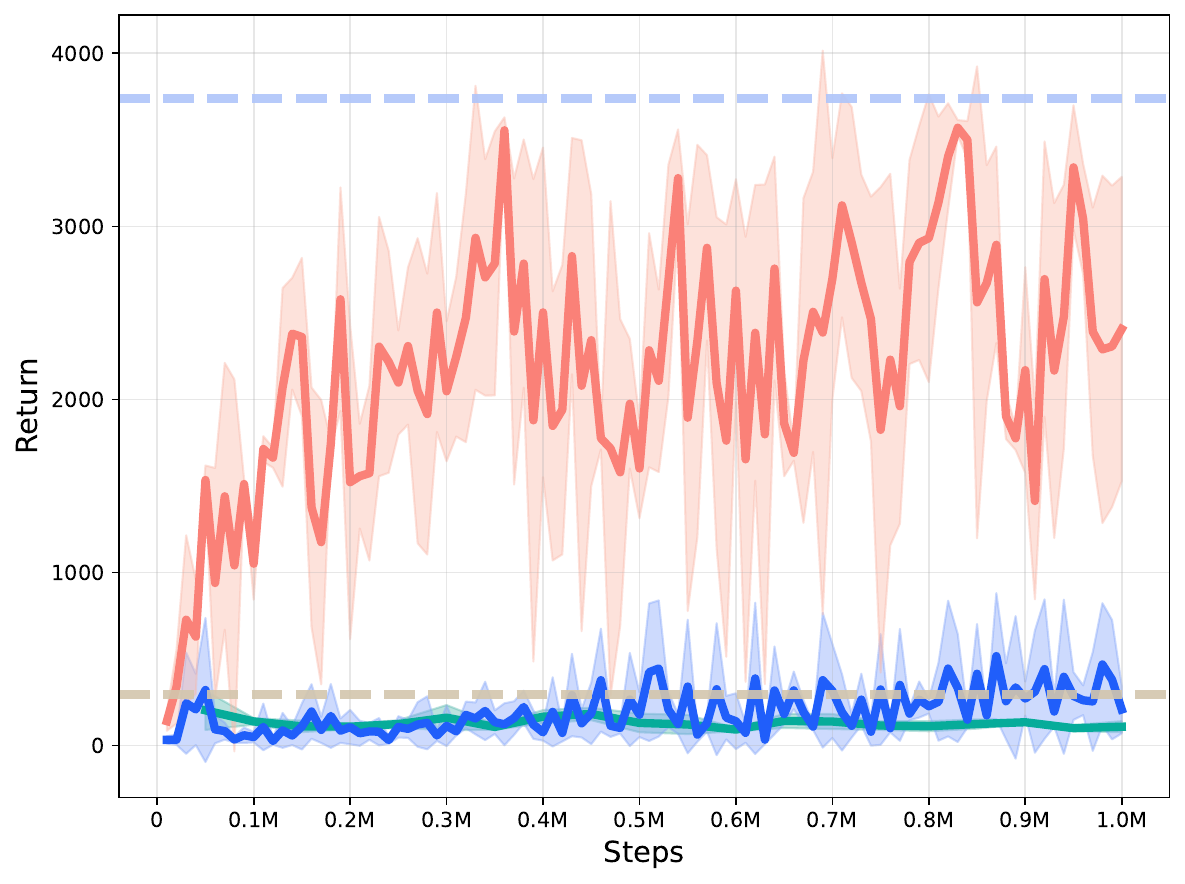}}
    }
    \centerline{
        \subfigure[Delay=10, $\#$Traj=10]{\includegraphics[width=0.33\linewidth]{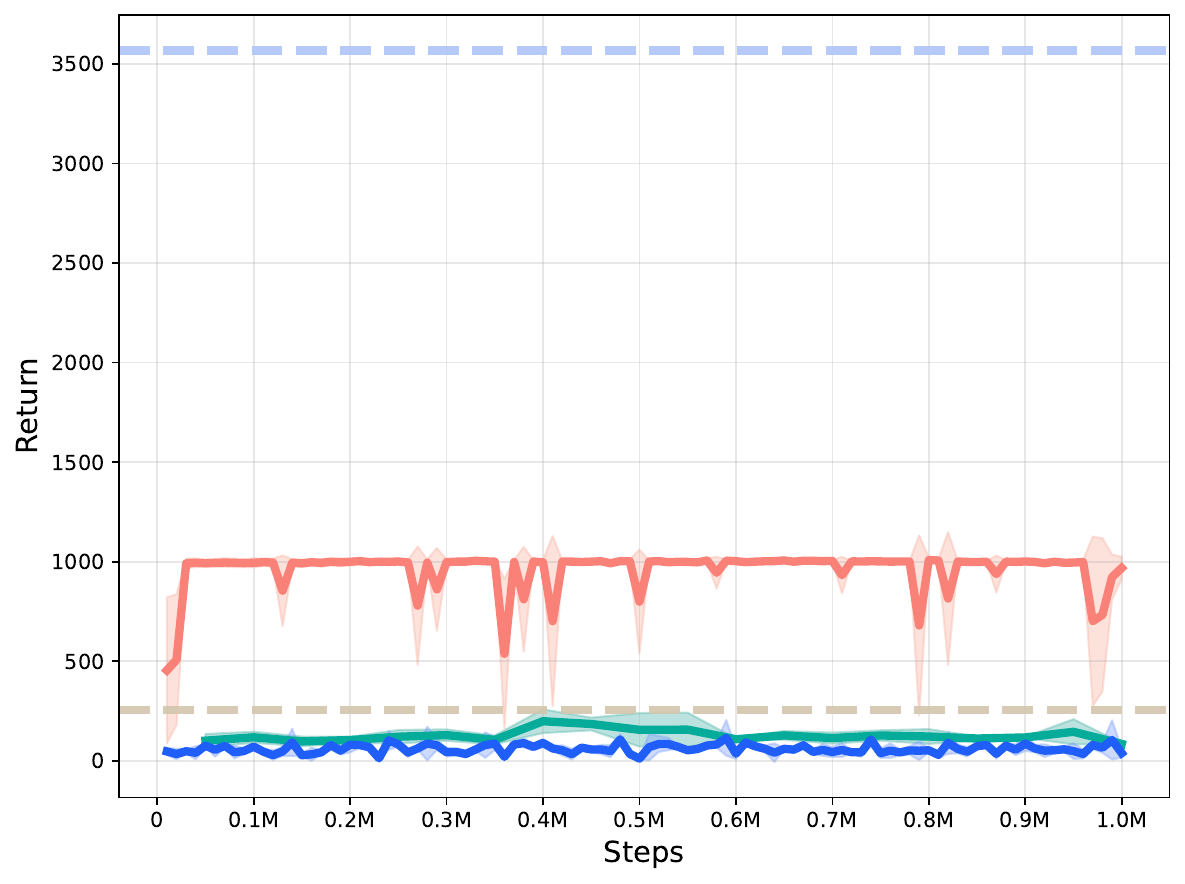}}
        \subfigure[Delay=10, $\#$Traj=100]{\includegraphics[width=0.33\linewidth]{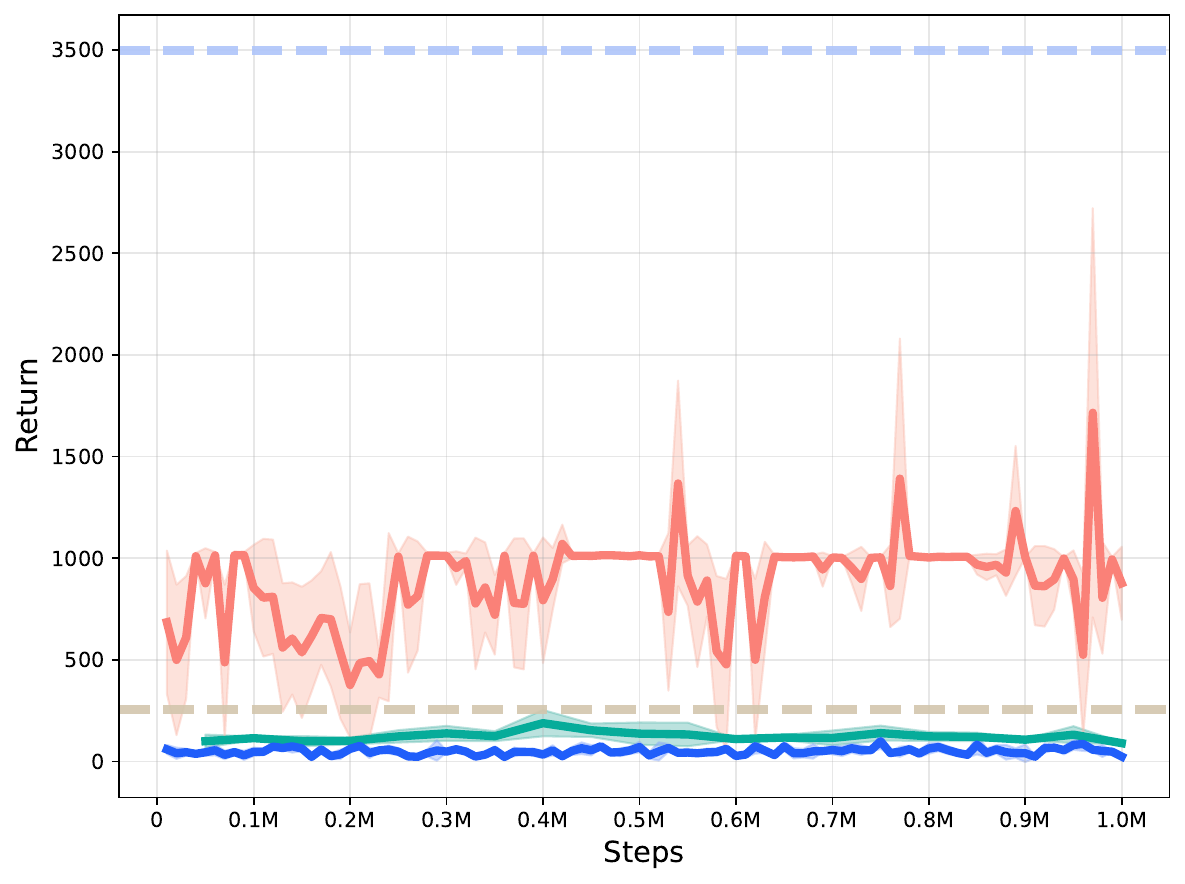}}
        \subfigure[Delay=10, $\#$Traj=1000]{\includegraphics[width=0.33\linewidth]{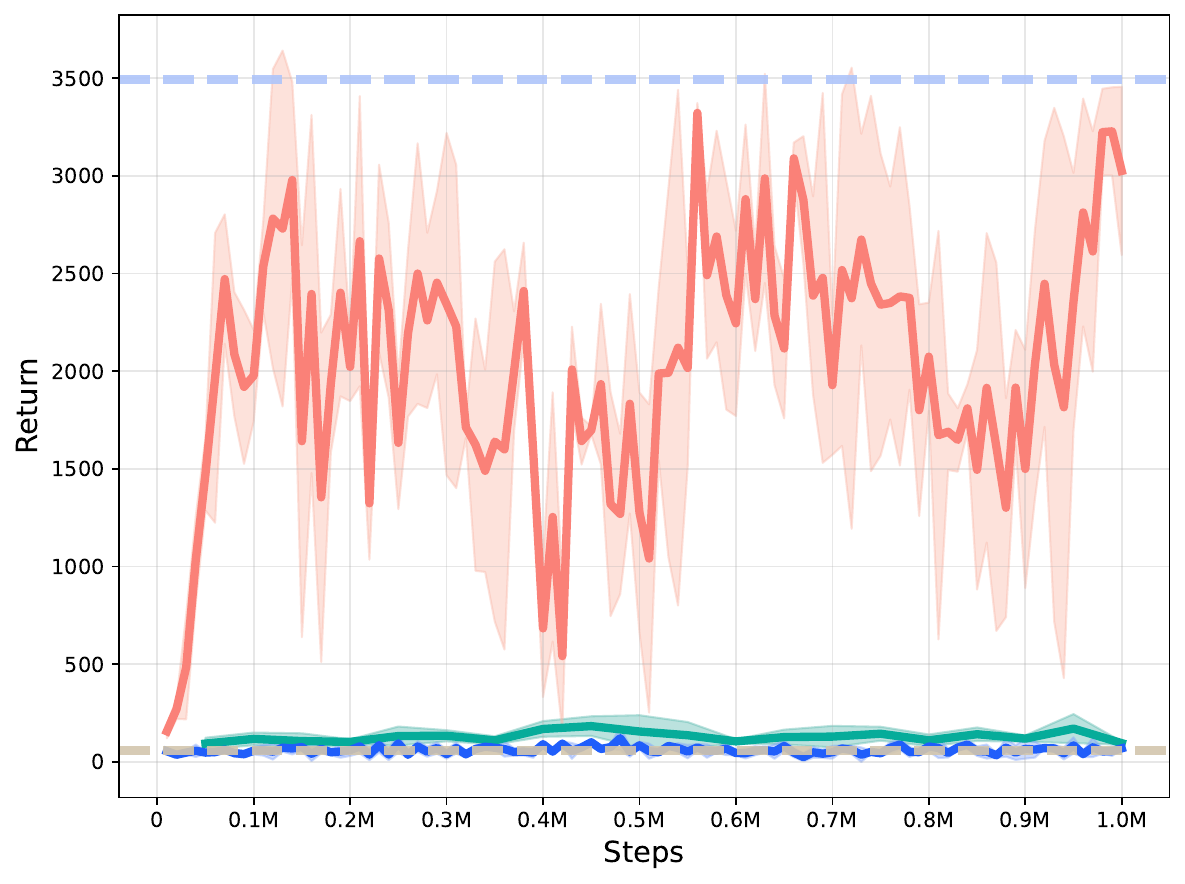}}
    }
    \centerline{
        \subfigure[Delay=25, $\#$Traj=10]{\includegraphics[width=0.33\linewidth]{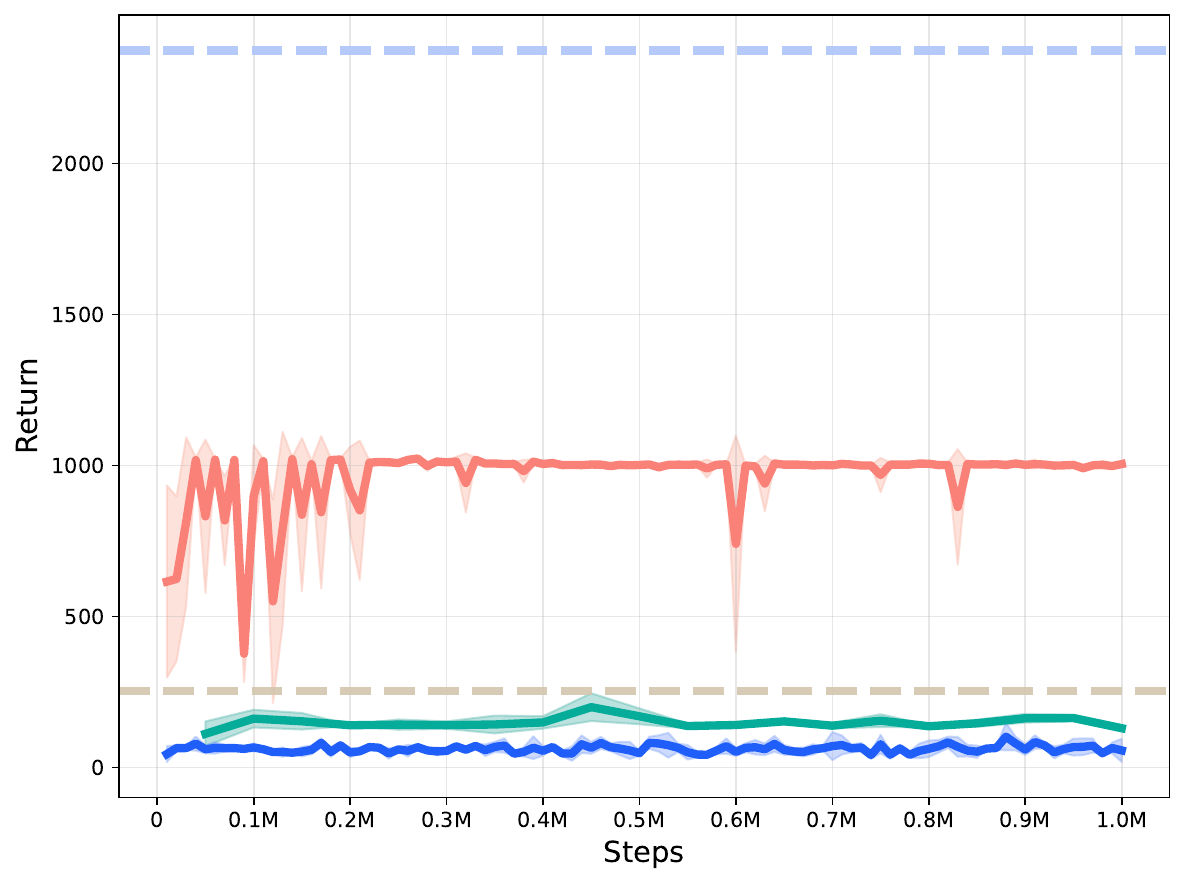}}
        \subfigure[Delay=25, $\#$Traj=100]{\includegraphics[width=0.33\linewidth]{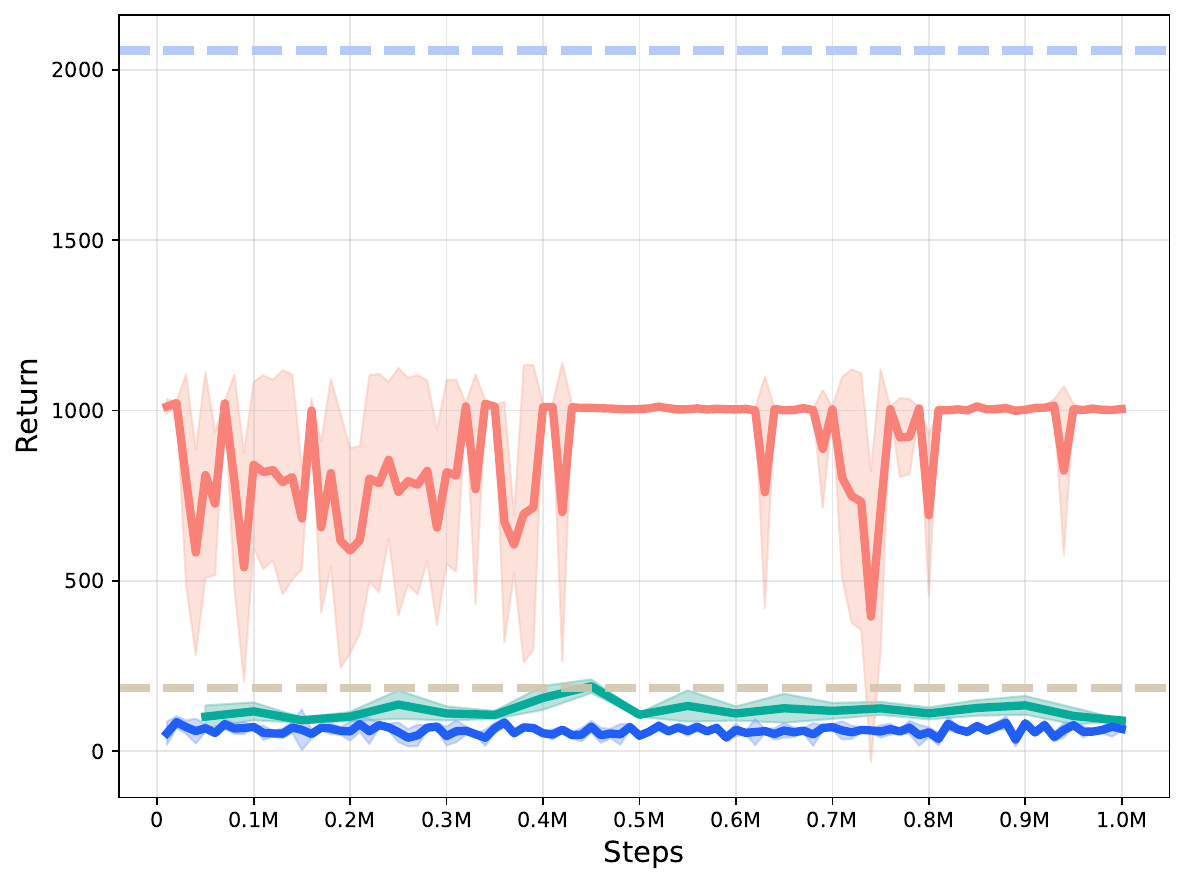}}
        \subfigure[Delay=25, $\#$Traj=1000]{\includegraphics[width=0.33\linewidth]{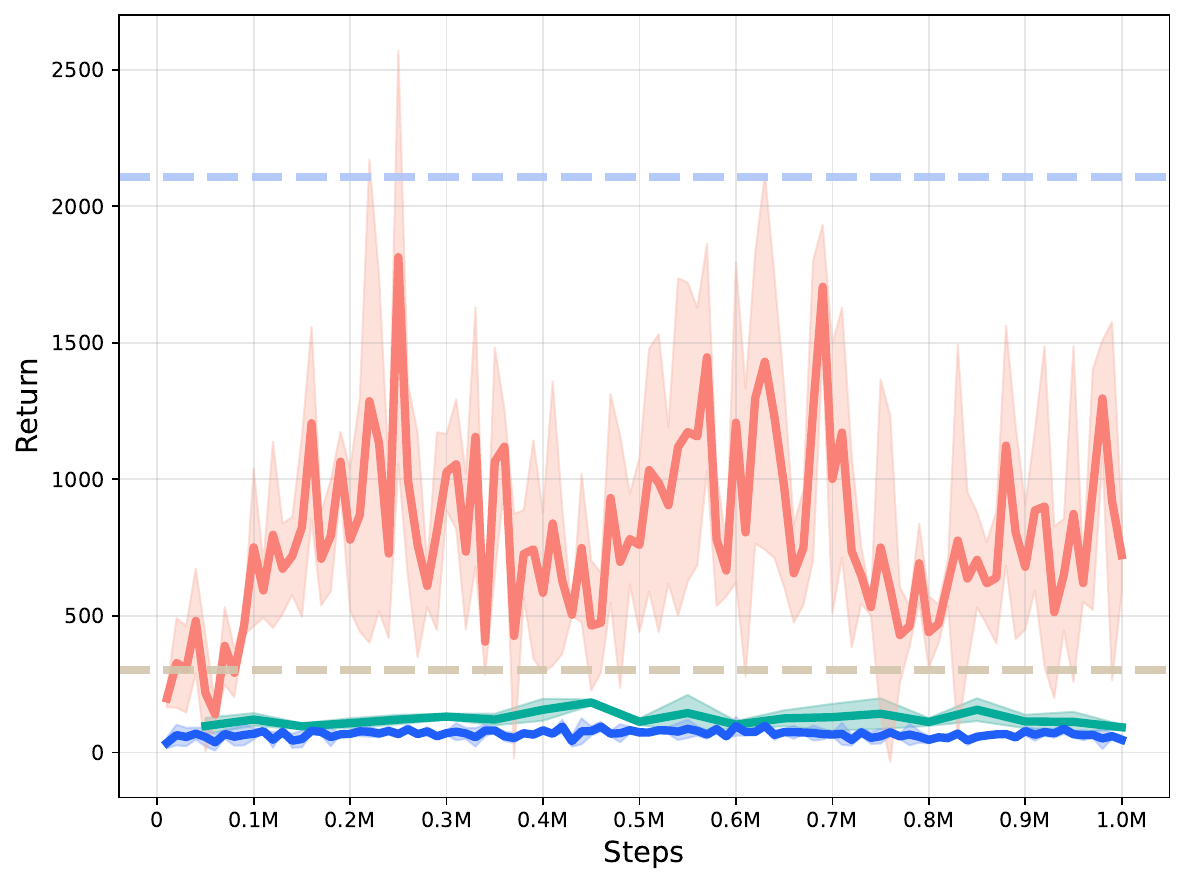}}
    }
    \centerline{
        \includegraphics[width=0.8\linewidth]{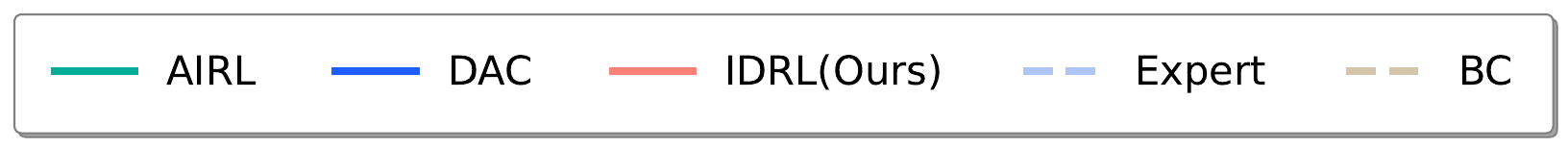}
    }
    \caption{Learning Curves on \texttt{Hopper-v4} with different delays and quantities of expert demonstrations.}
\end{figure}

\begin{figure}[h]
    \centering
    \centerline{
        \subfigure[Delay=5, $\#$Traj=10]{\includegraphics[width=0.33\linewidth]{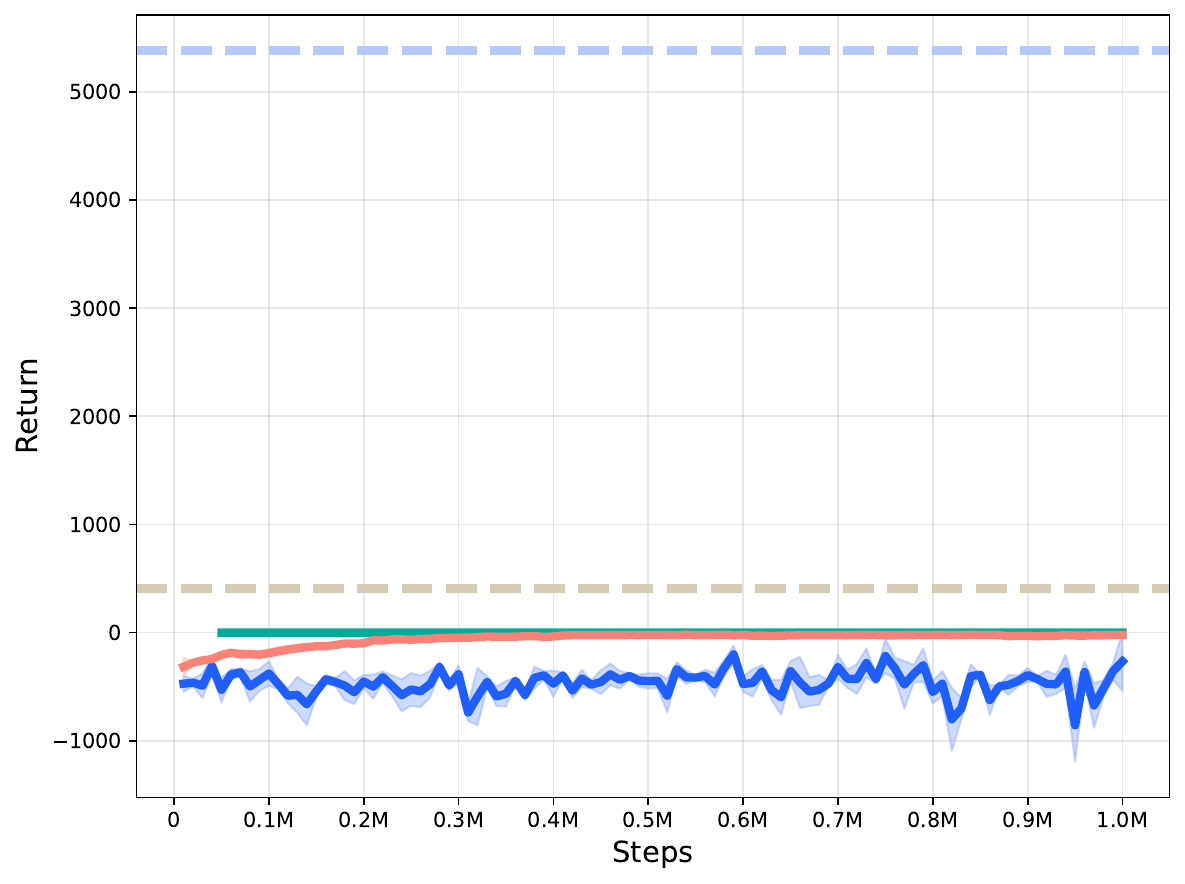}}
        \subfigure[Delay=5, $\#$Traj=100]{\includegraphics[width=0.33\linewidth]{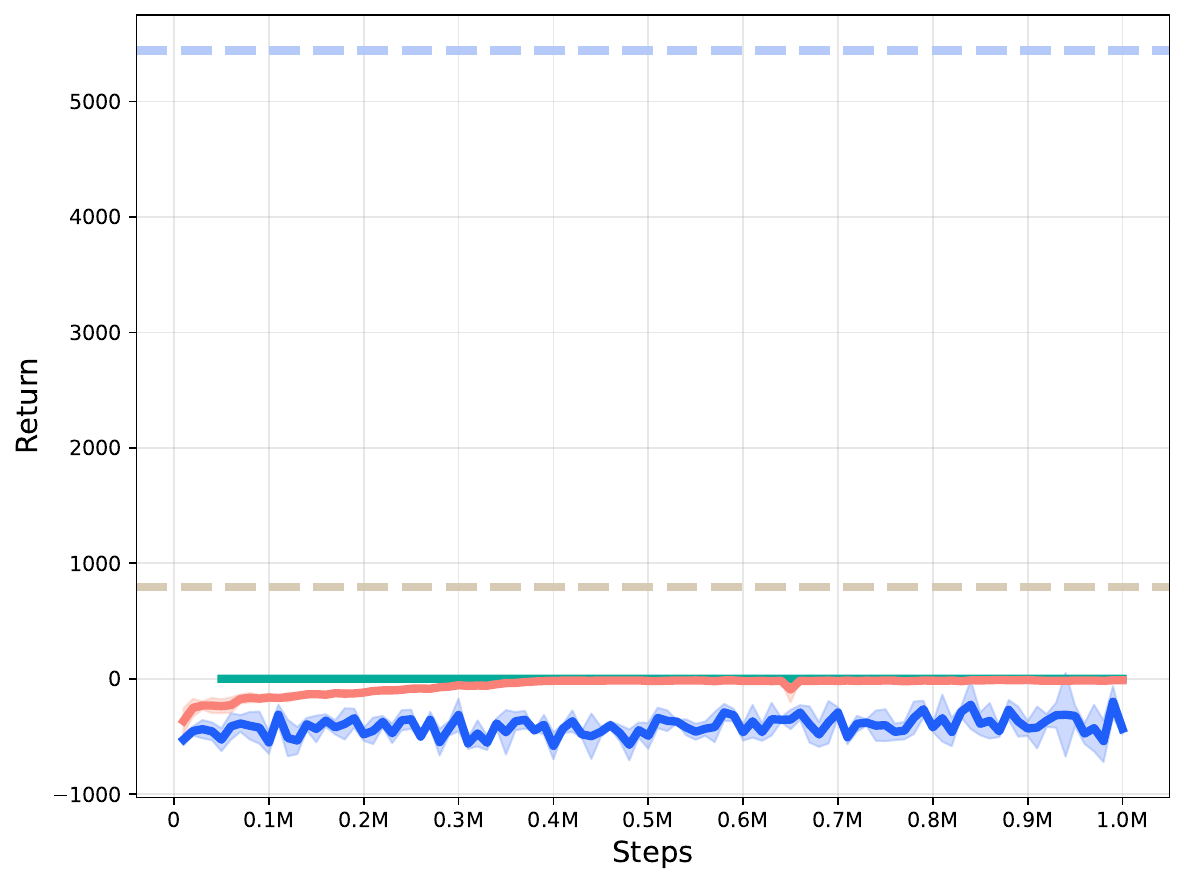}}
        \subfigure[Delay=5, $\#$Traj=1000]{\includegraphics[width=0.33\linewidth]{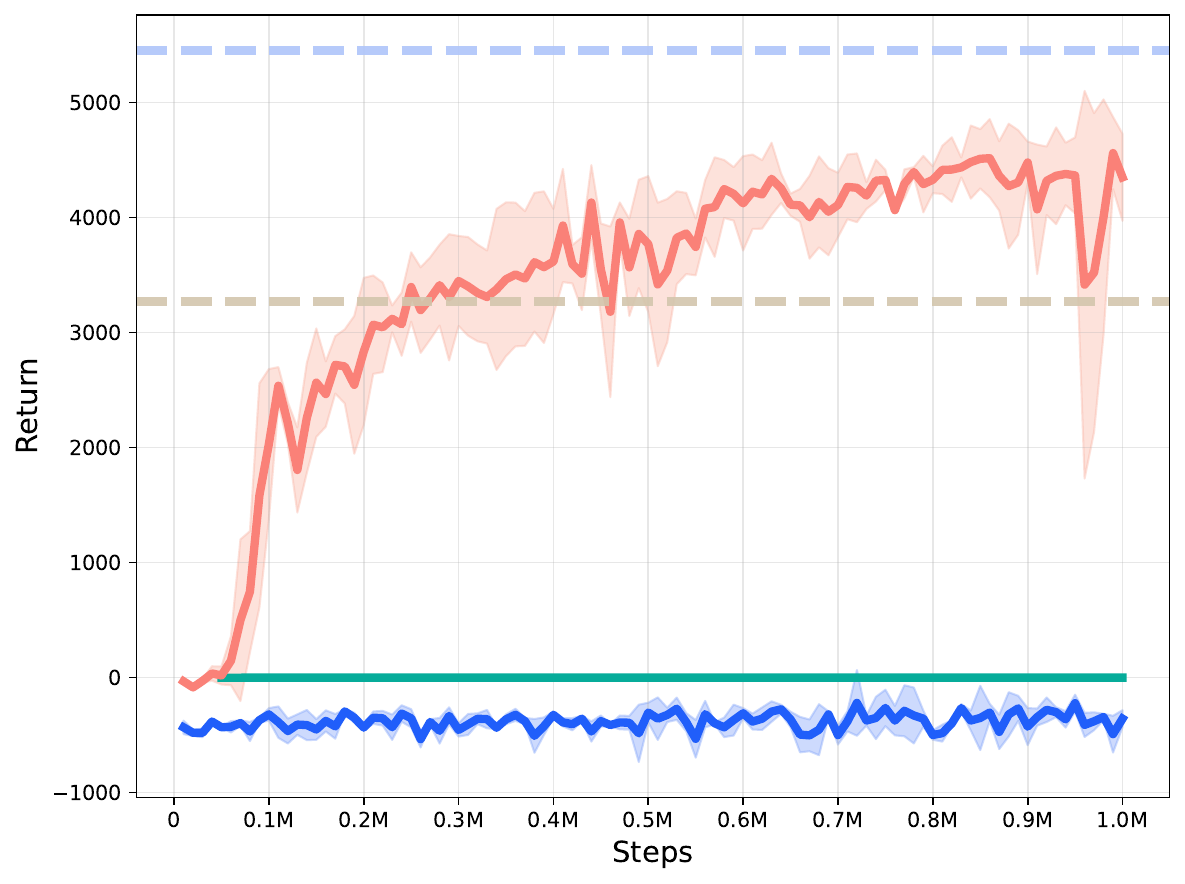}}
    }
    \centerline{
        \subfigure[Delay=10, $\#$Traj=10]{\includegraphics[width=0.33\linewidth]{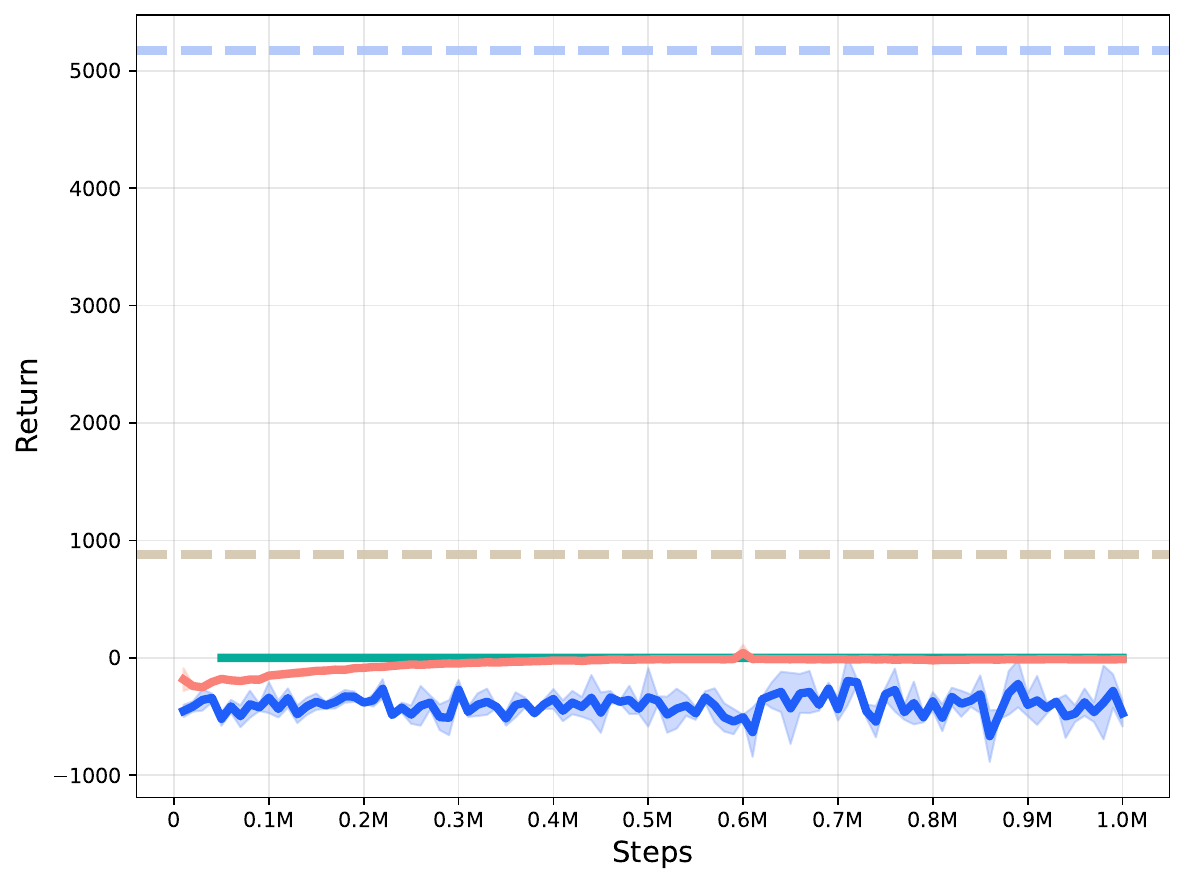}}
        \subfigure[Delay=10, $\#$Traj=100]{\includegraphics[width=0.33\linewidth]{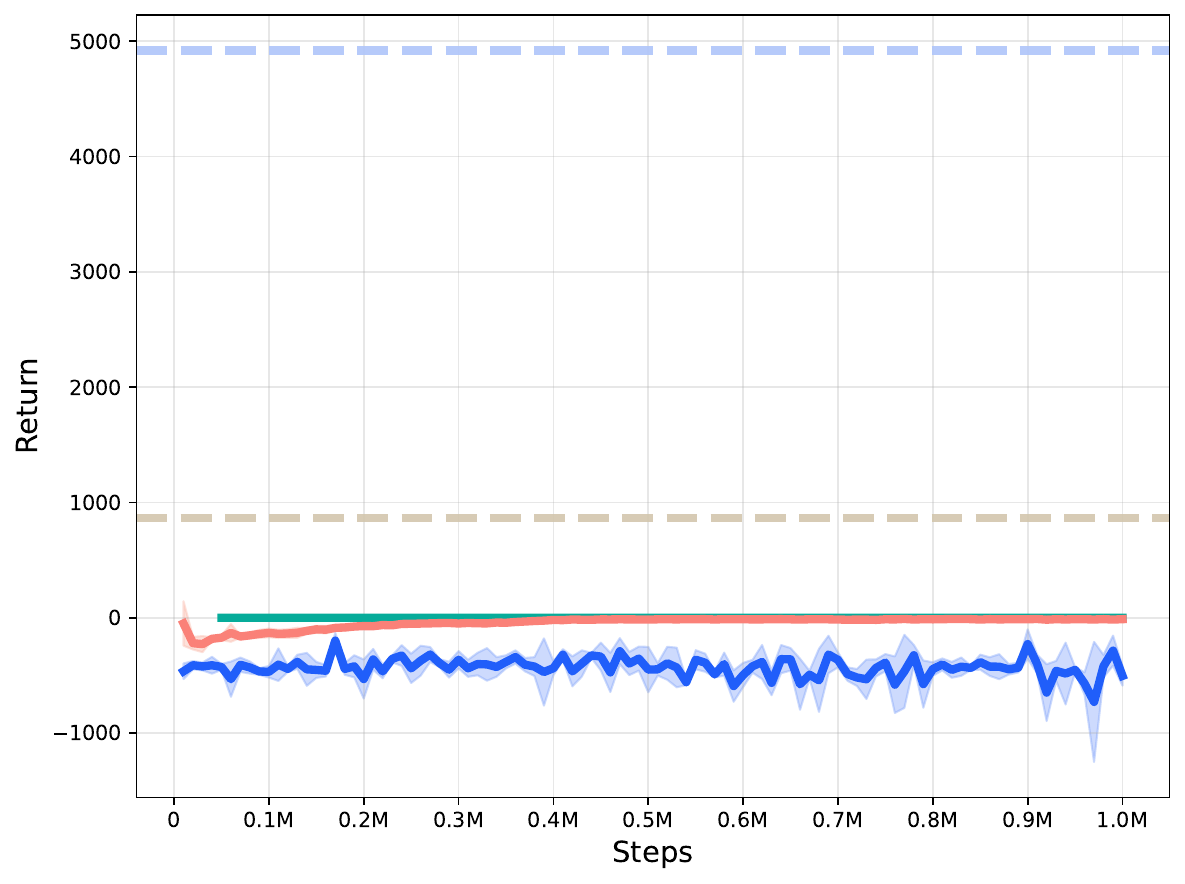}}
        \subfigure[Delay=10, $\#$Traj=1000]{\includegraphics[width=0.33\linewidth]{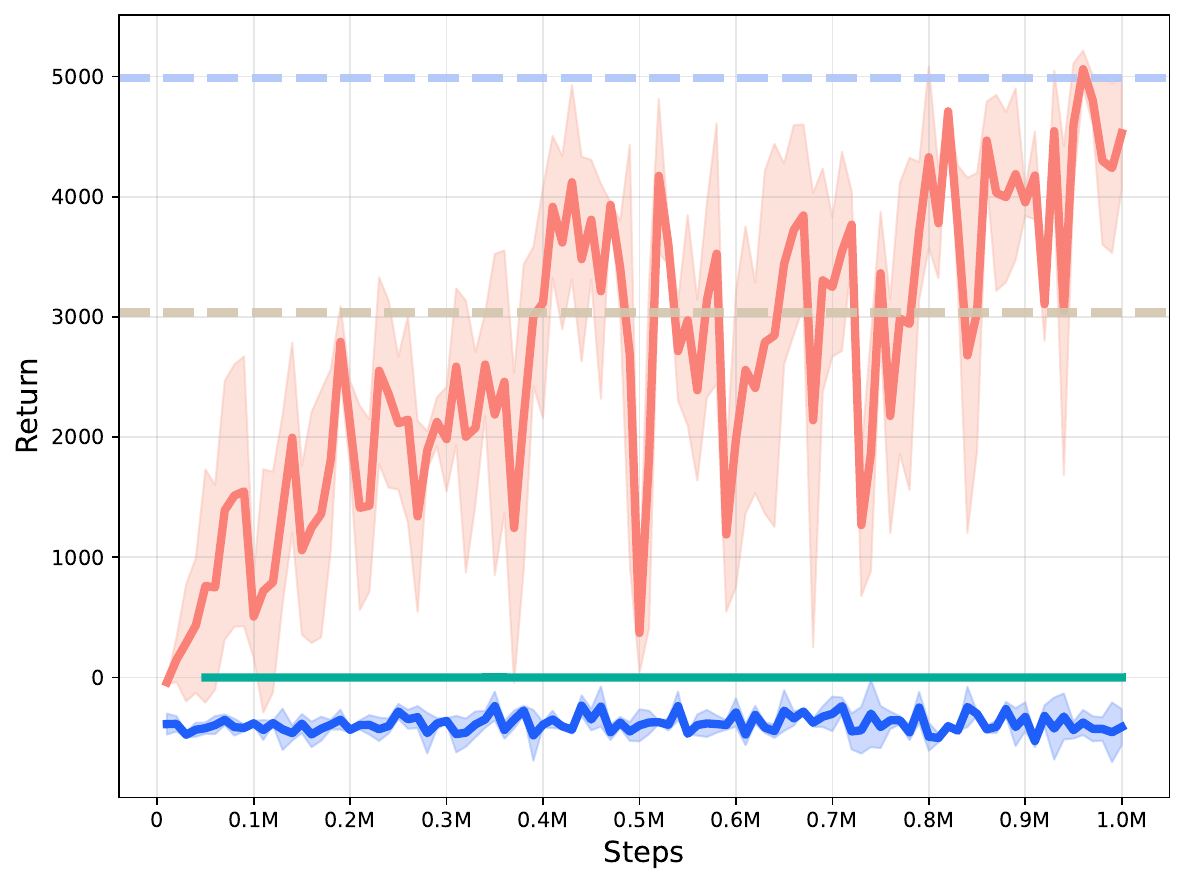}}
    }
    \centerline{
        \subfigure[Delay=25, $\#$Traj=10]{\includegraphics[width=0.33\linewidth]{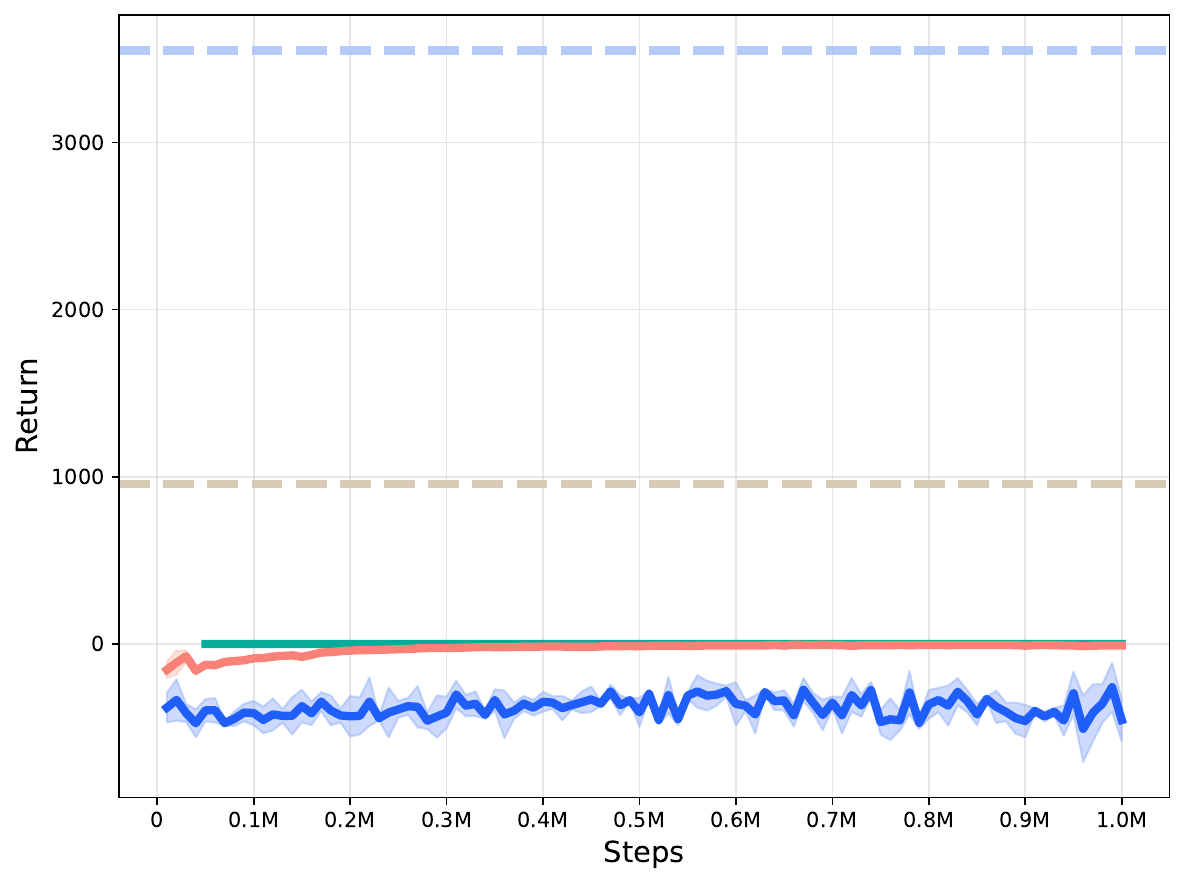}}
        \subfigure[Delay=25, $\#$Traj=100]{\includegraphics[width=0.33\linewidth]{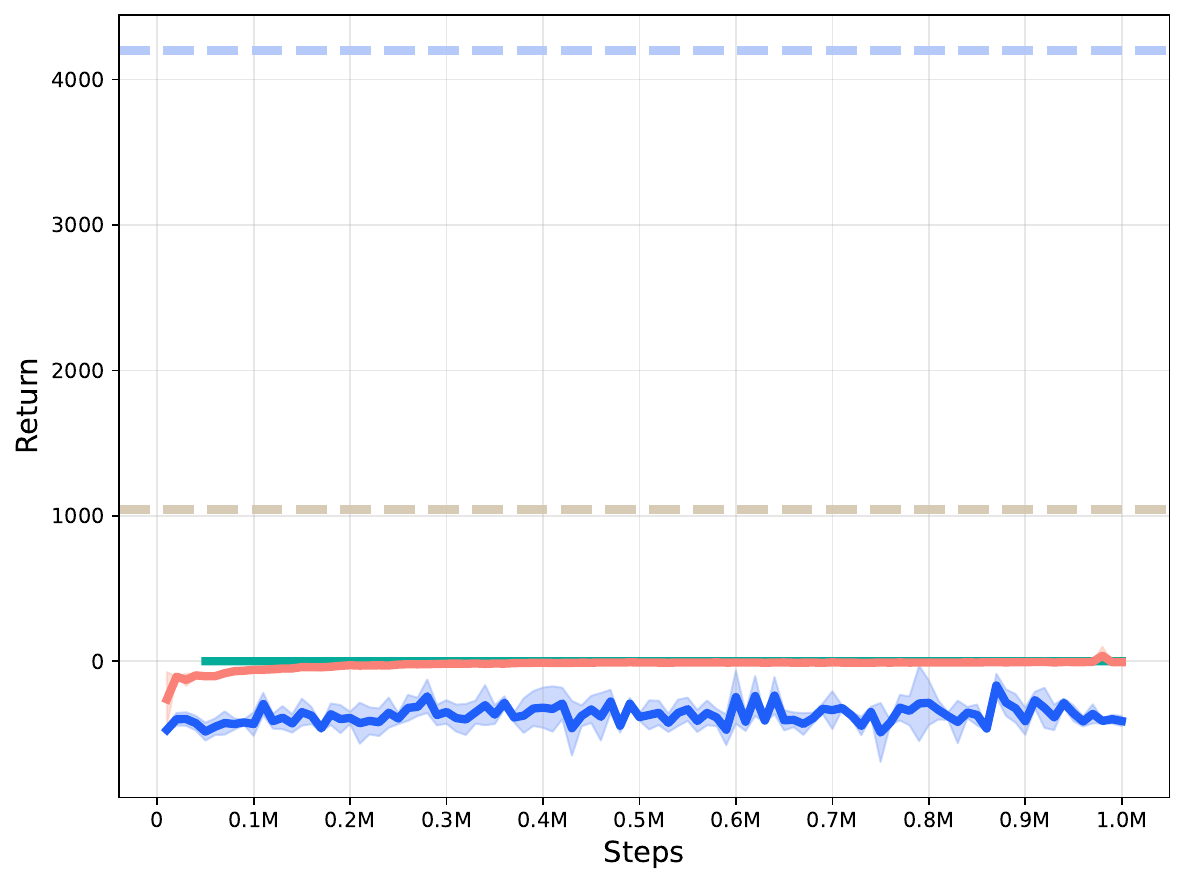}}
        \subfigure[Delay=25, $\#$Traj=1000]{\includegraphics[width=0.33\linewidth]{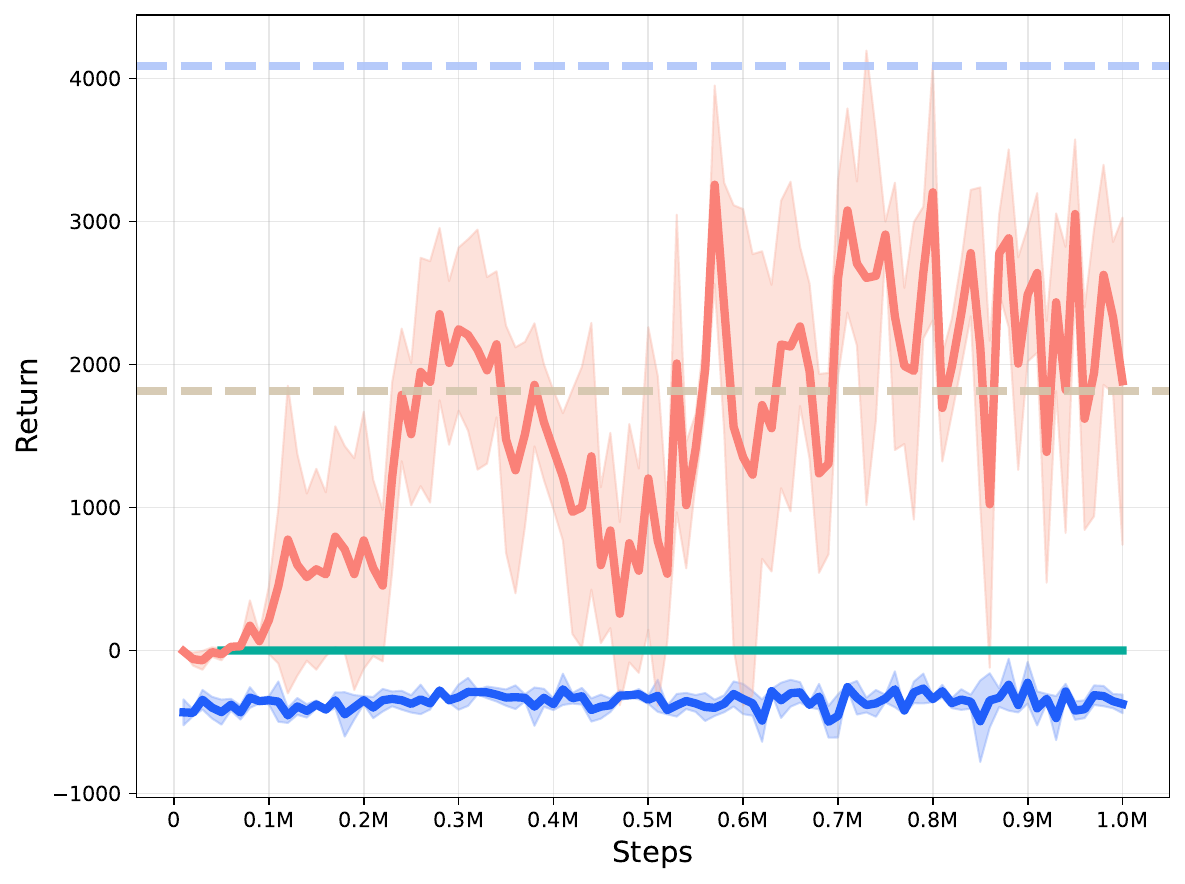}}
    }
    \centerline{
        \includegraphics[width=0.8\linewidth]{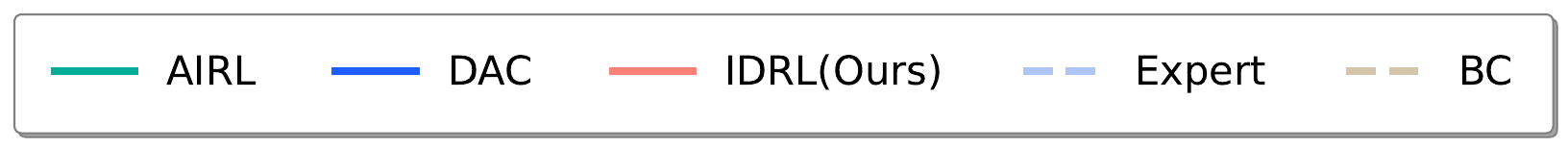}
    }
    \caption{Learning Curves on \texttt{HalfCheetah-v4} with different delays and quantities of expert demonstrations.}
\end{figure}

\begin{figure}[h]
    \centering
    \centerline{
        \subfigure[Delay=5, $\#$Traj=10]{\includegraphics[width=0.33\linewidth]{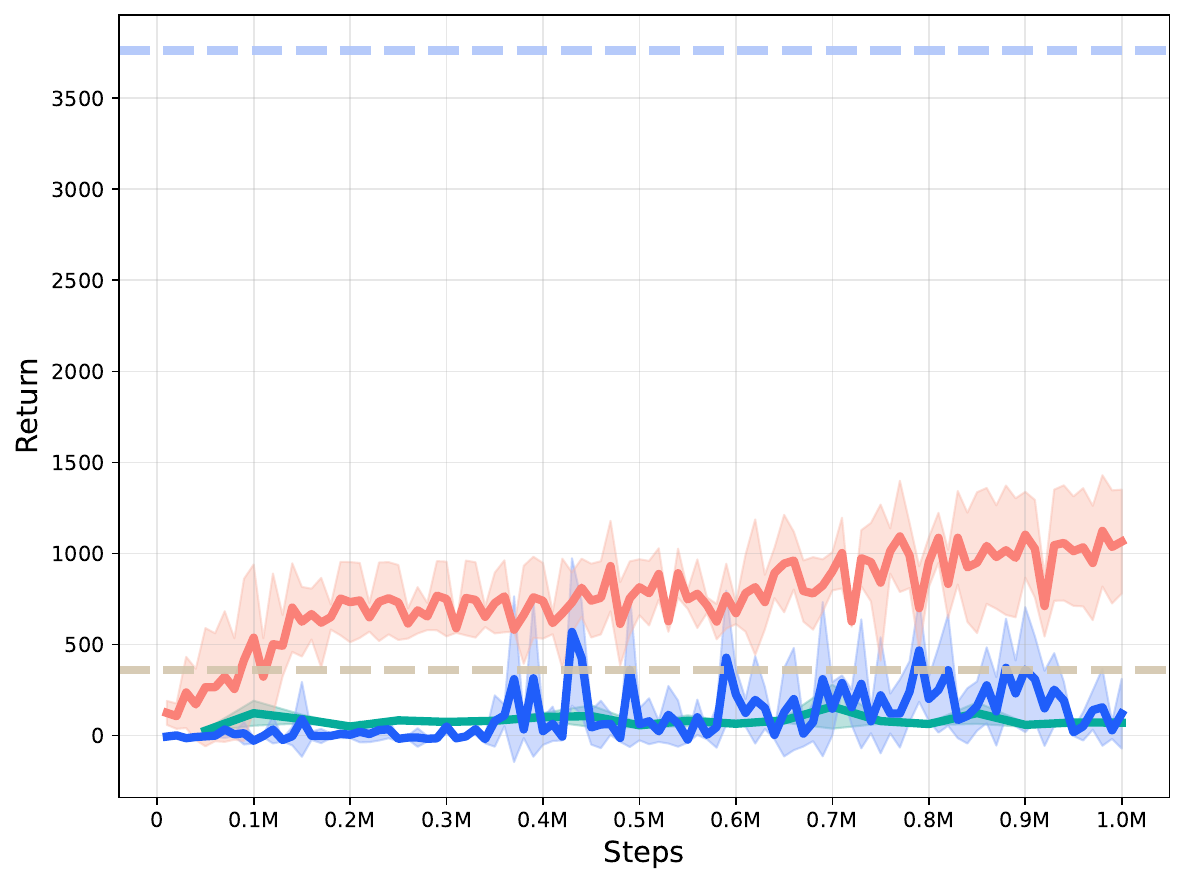}}
        \subfigure[Delay=5, $\#$Traj=100]{\includegraphics[width=0.33\linewidth]{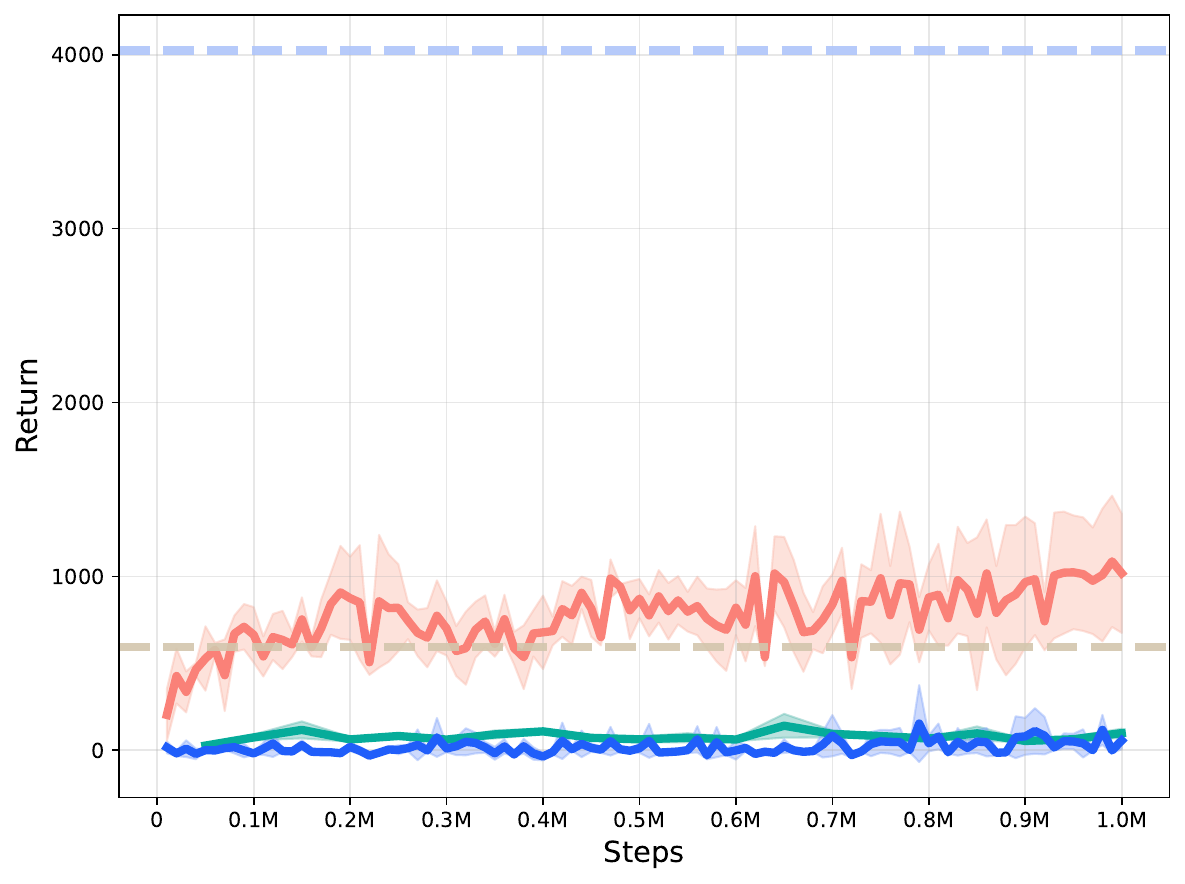}}
        \subfigure[Delay=5, $\#$Traj=1000]{\includegraphics[width=0.33\linewidth]{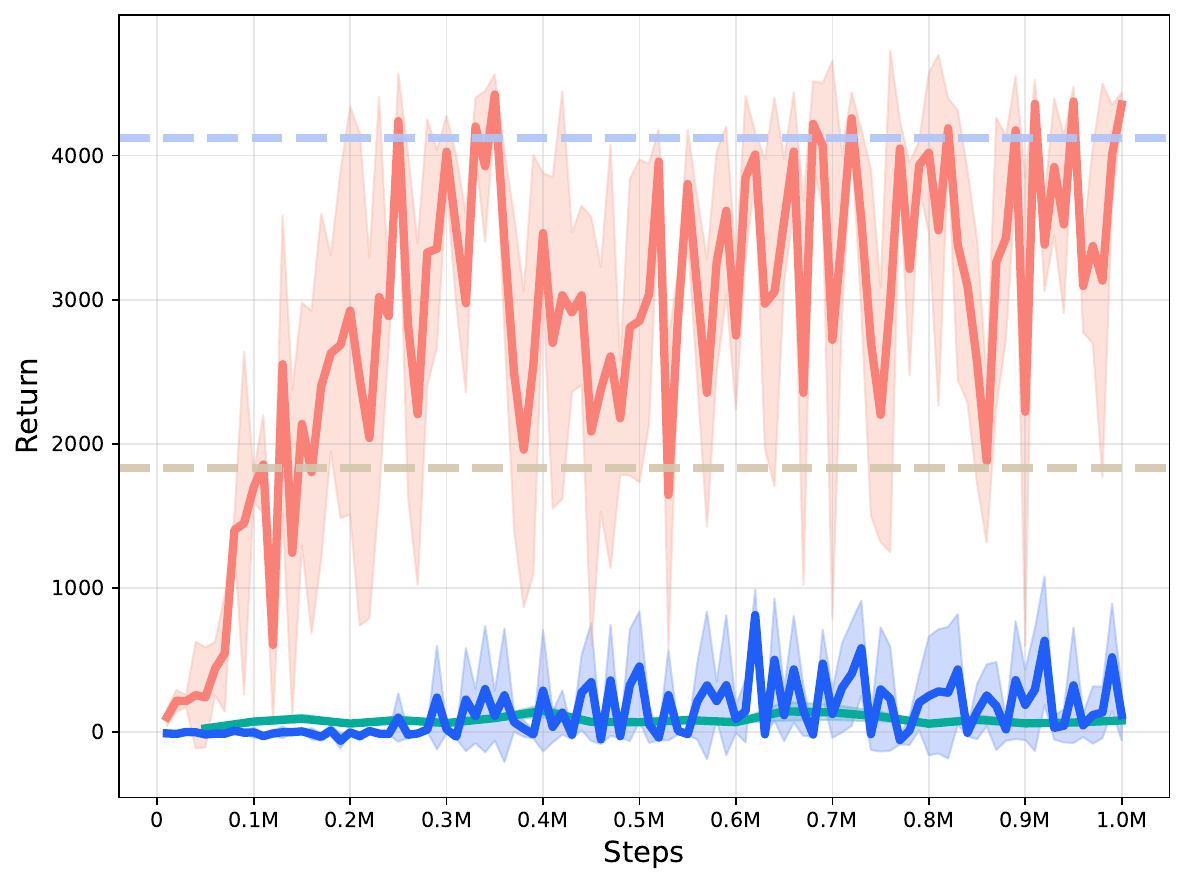}}
    }
    \centerline{
        \subfigure[Delay=10, $\#$Traj=10]{\includegraphics[width=0.33\linewidth]{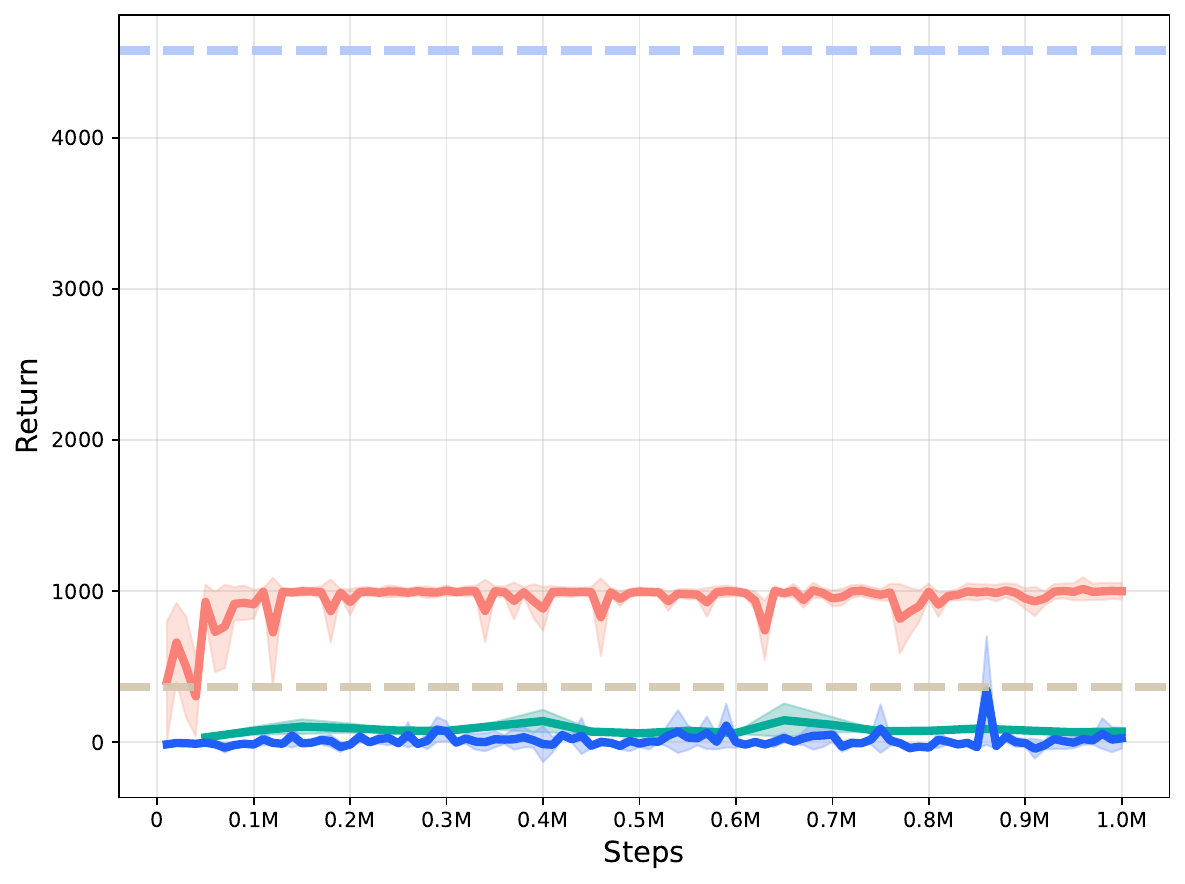}}
        \subfigure[Delay=10, $\#$Traj=100]{\includegraphics[width=0.33\linewidth]{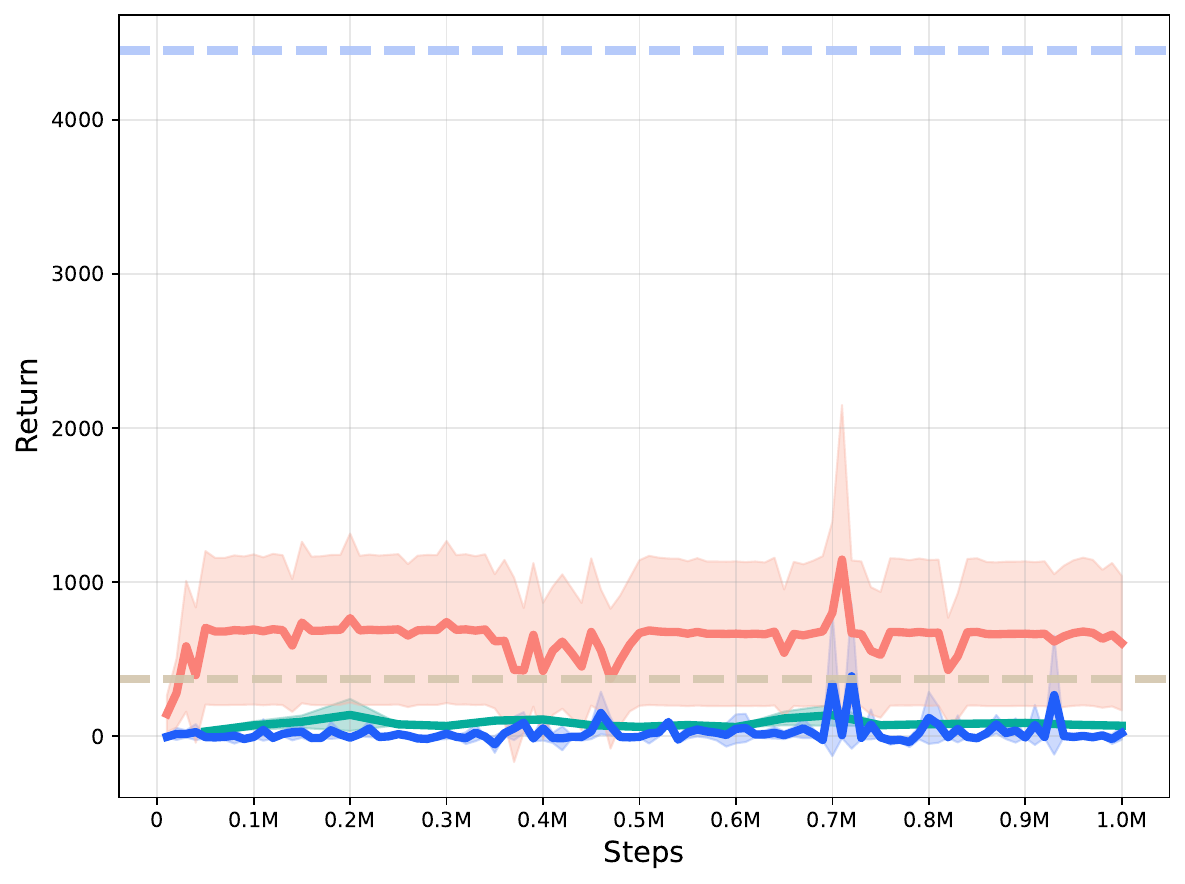}}
        \subfigure[Delay=10, $\#$Traj=1000]{\includegraphics[width=0.33\linewidth]{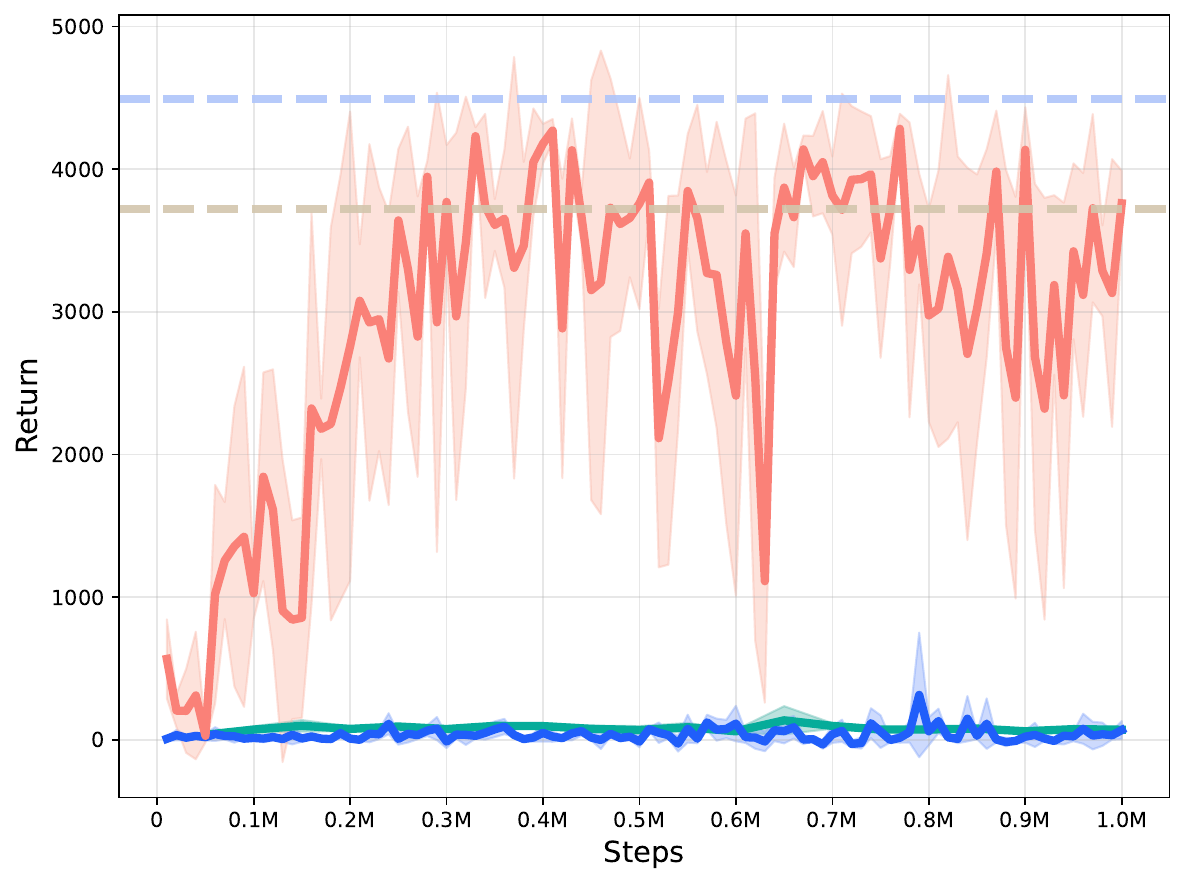}}
    }
    \centerline{
        \subfigure[Delay=25, $\#$Traj=10]{\includegraphics[width=0.33\linewidth]{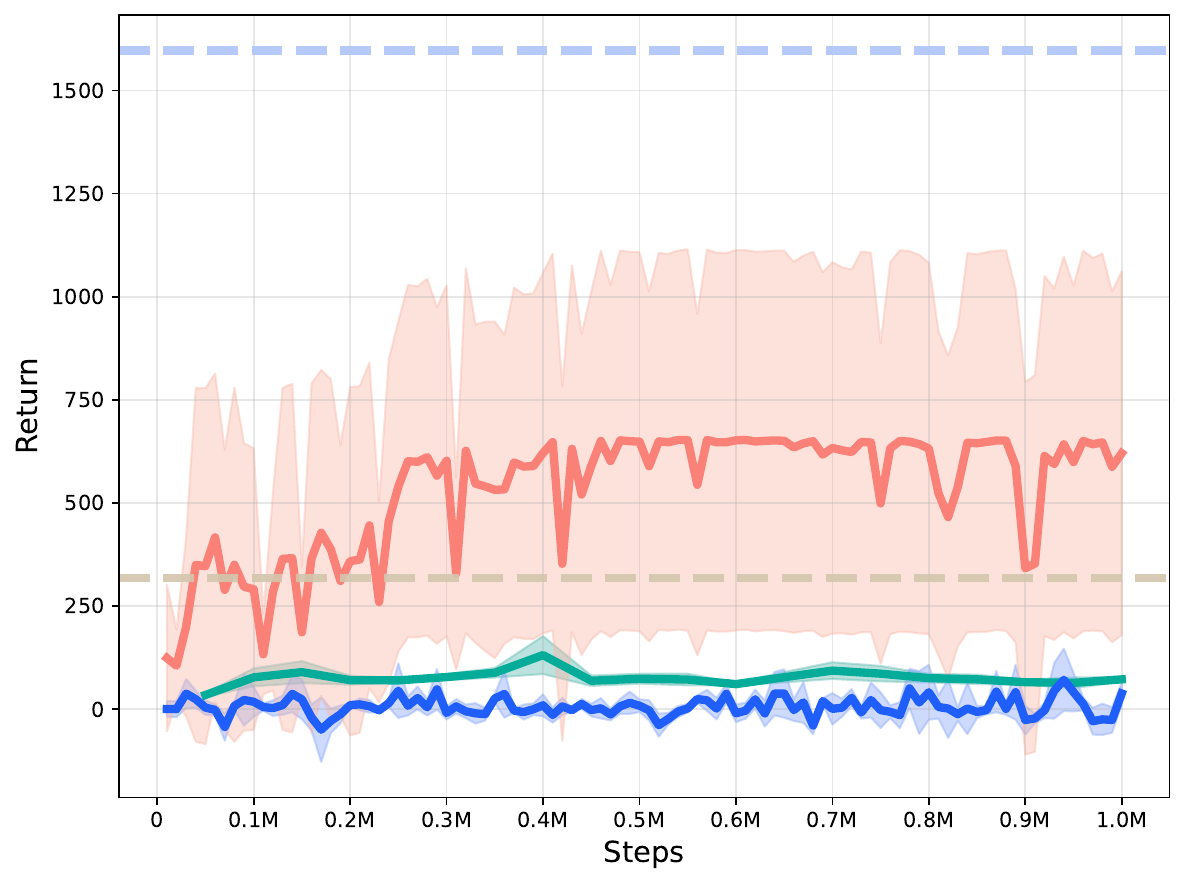}}
        \subfigure[Delay=25, $\#$Traj=100]{\includegraphics[width=0.33\linewidth]{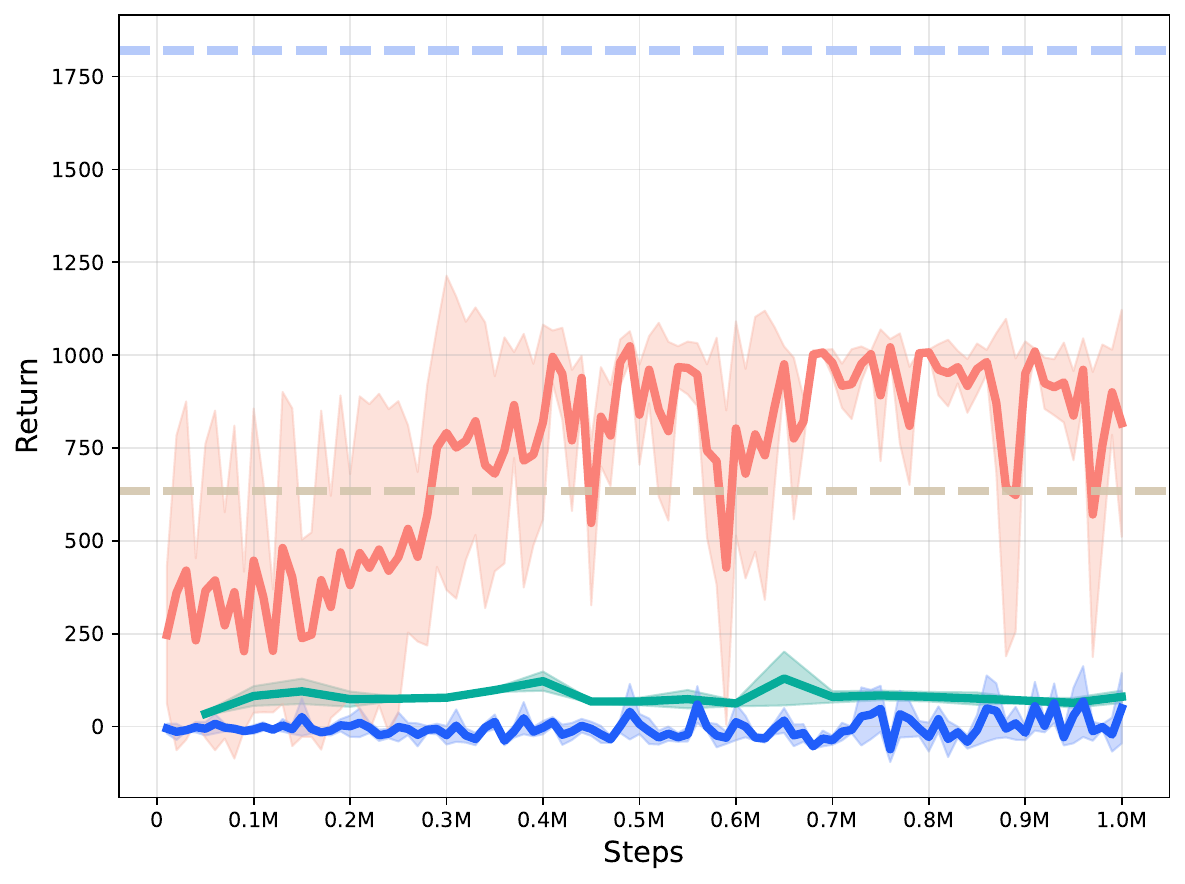}}
        \subfigure[Delay=25, $\#$Traj=1000]{\includegraphics[width=0.33\linewidth]{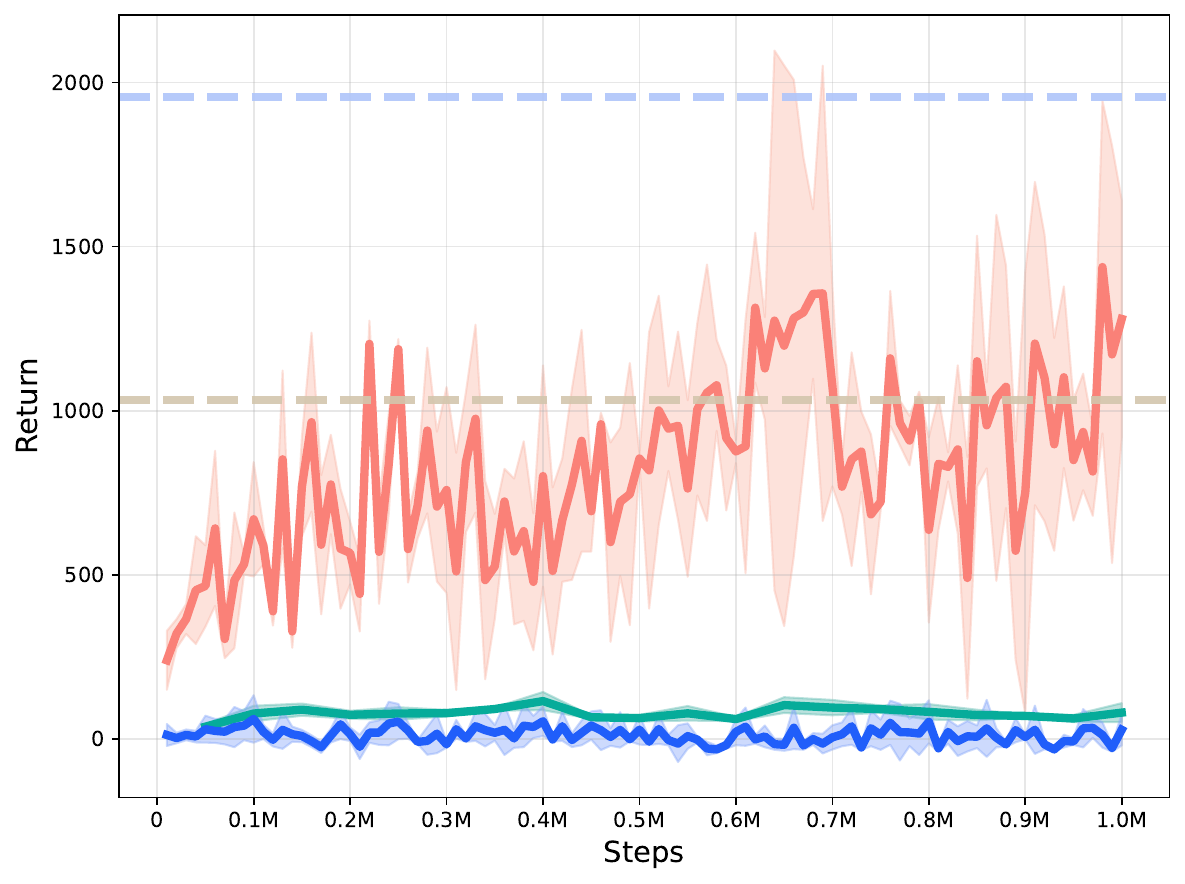}}
    }
    \centerline{
        \includegraphics[width=0.8\linewidth]{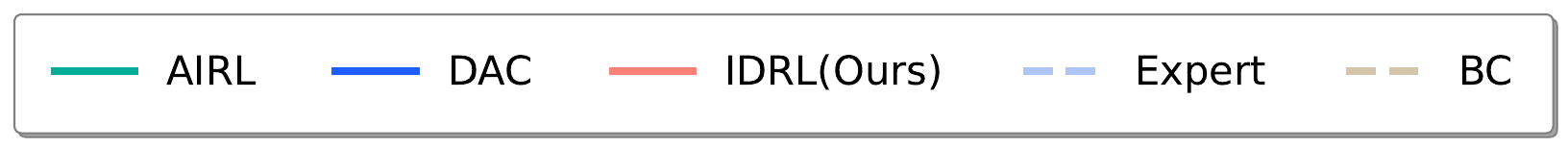}
    }
    \caption{Learning Curves on \texttt{Walker2d-v4} with different delays and quantities of expert demonstrations.}
\end{figure}

\begin{figure}[h]
    \centering
    \centerline{
        \subfigure[Delay=5, $\#$Traj=10]{\includegraphics[width=0.33\linewidth]{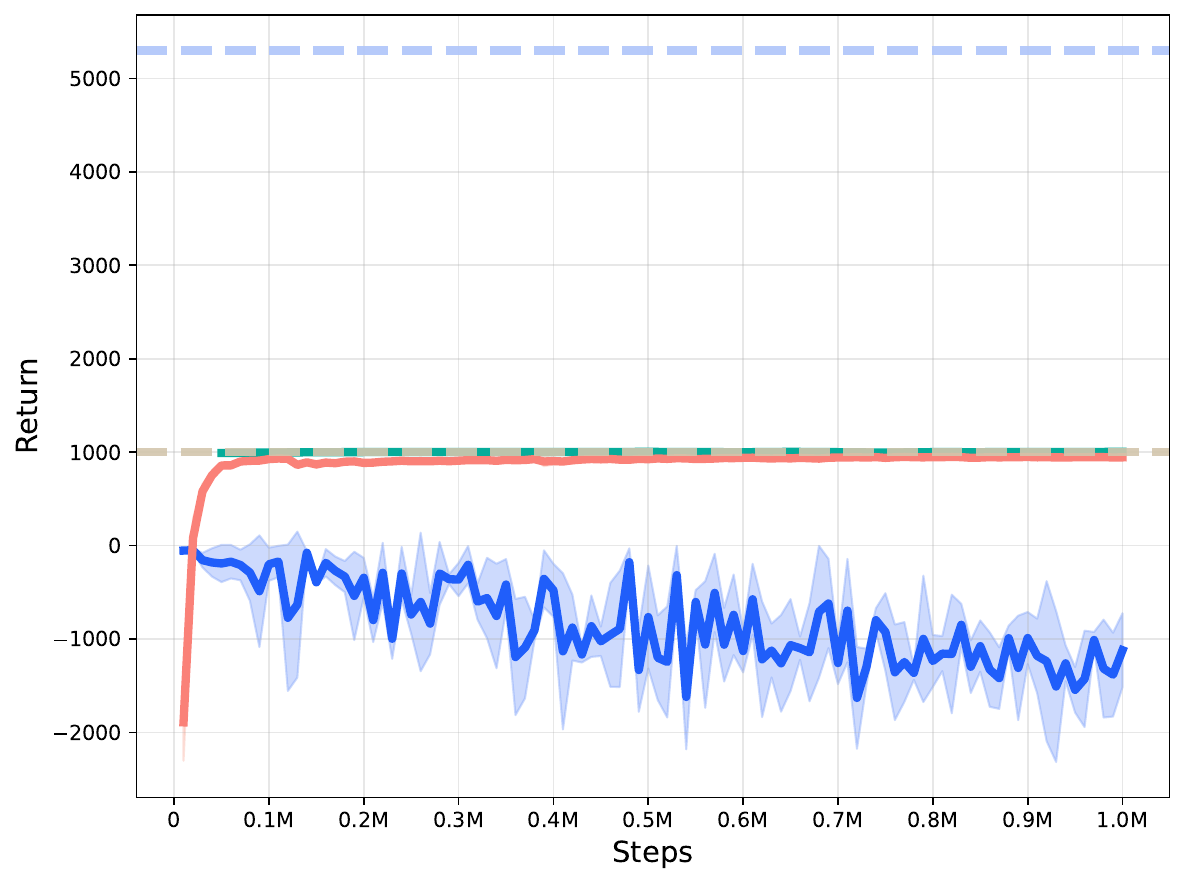}}
        \subfigure[Delay=5, $\#$Traj=100]{\includegraphics[width=0.33\linewidth]{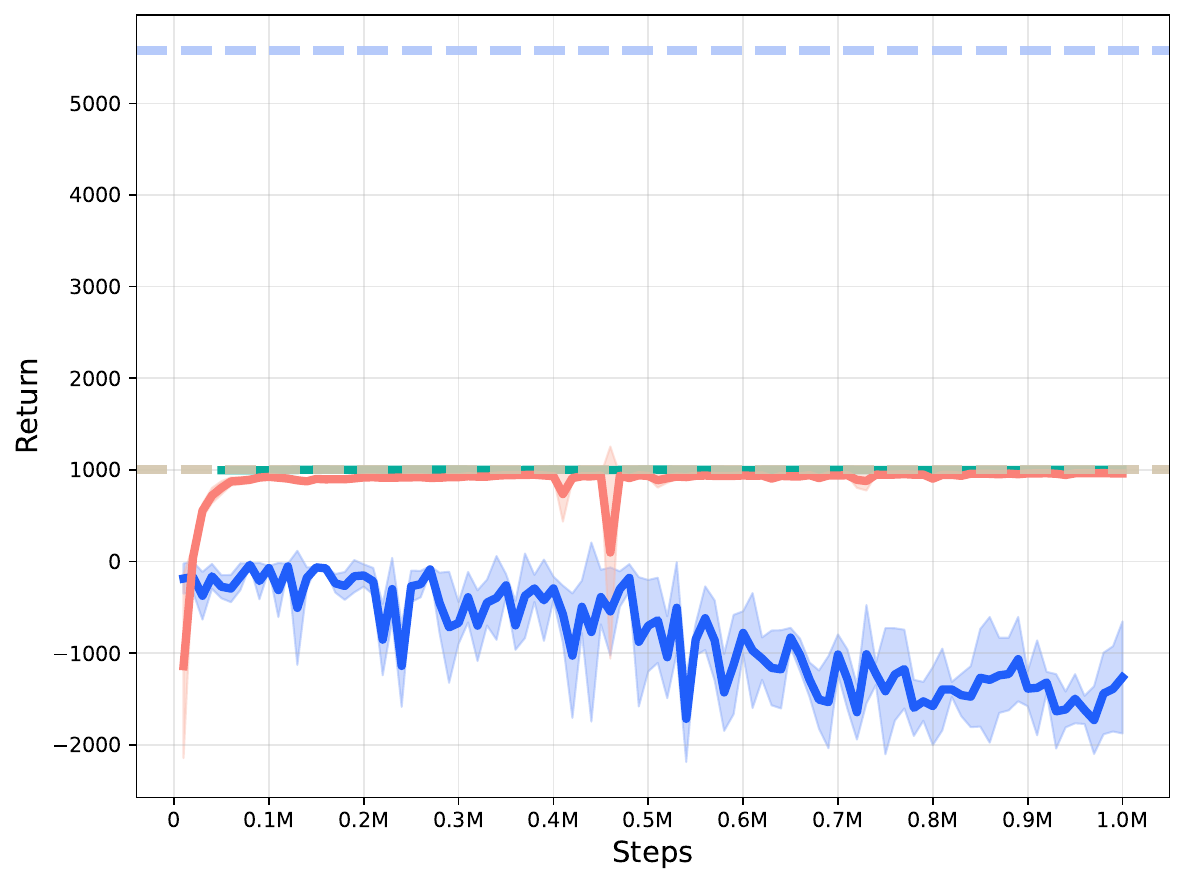}}
        \subfigure[Delay=5, $\#$Traj=1000]{\includegraphics[width=0.33\linewidth]{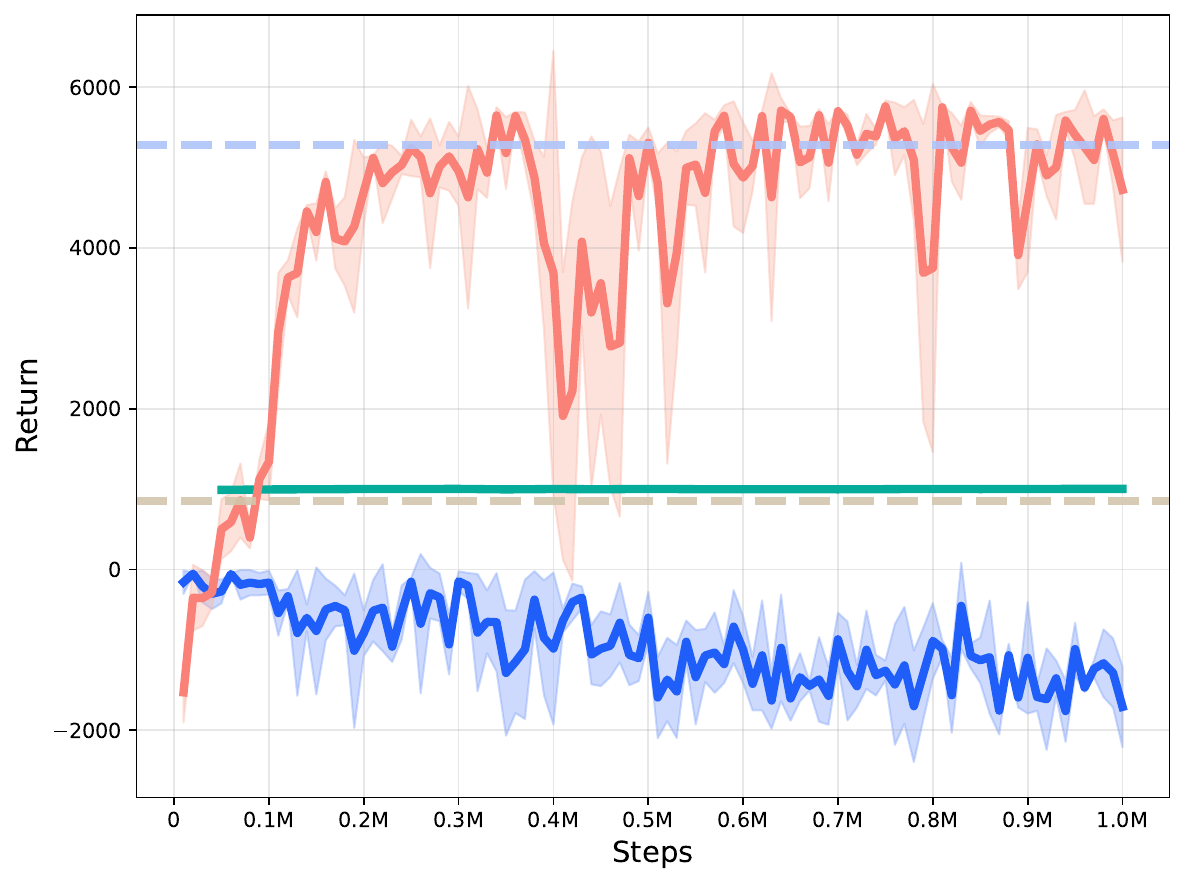}}
    }
    \centerline{
        \subfigure[Delay=10, $\#$Traj=10]{\includegraphics[width=0.33\linewidth]{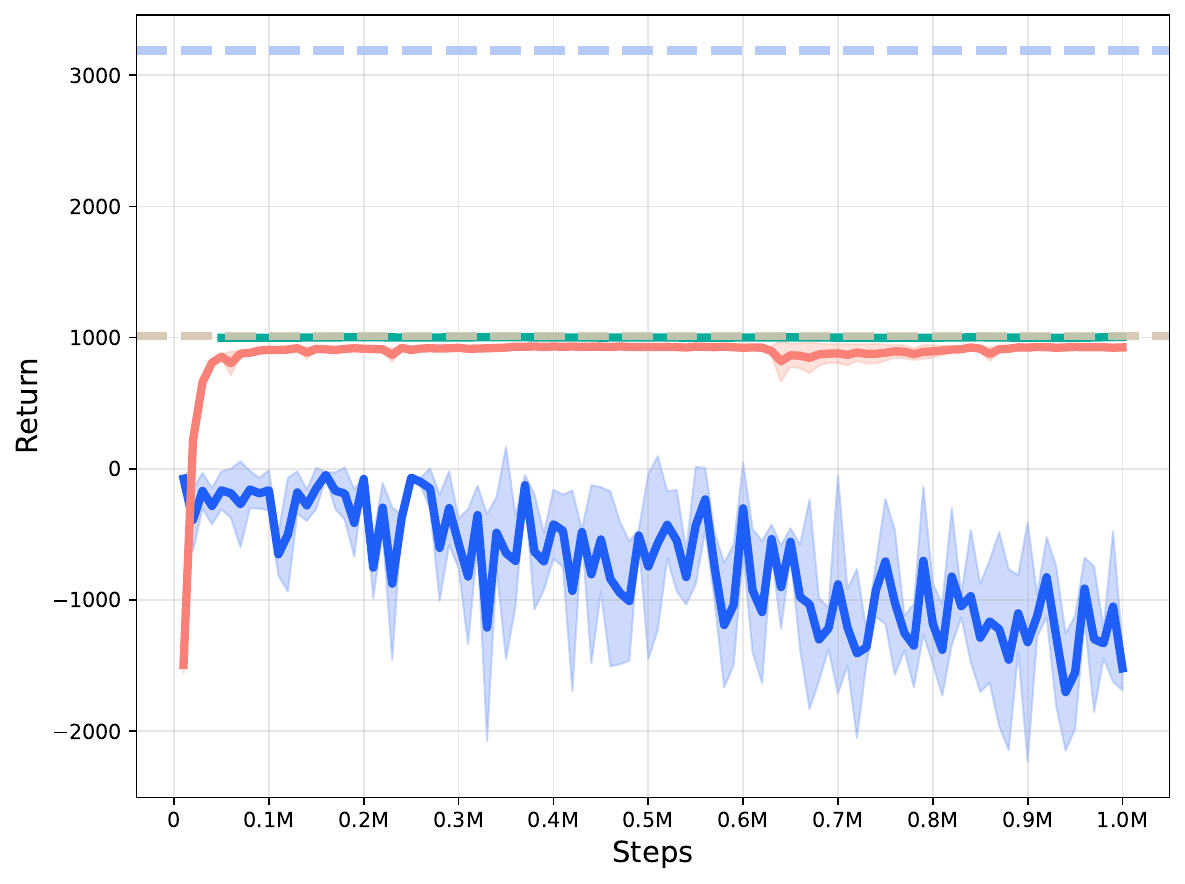}}
        \subfigure[Delay=10, $\#$Traj=100]{\includegraphics[width=0.33\linewidth]{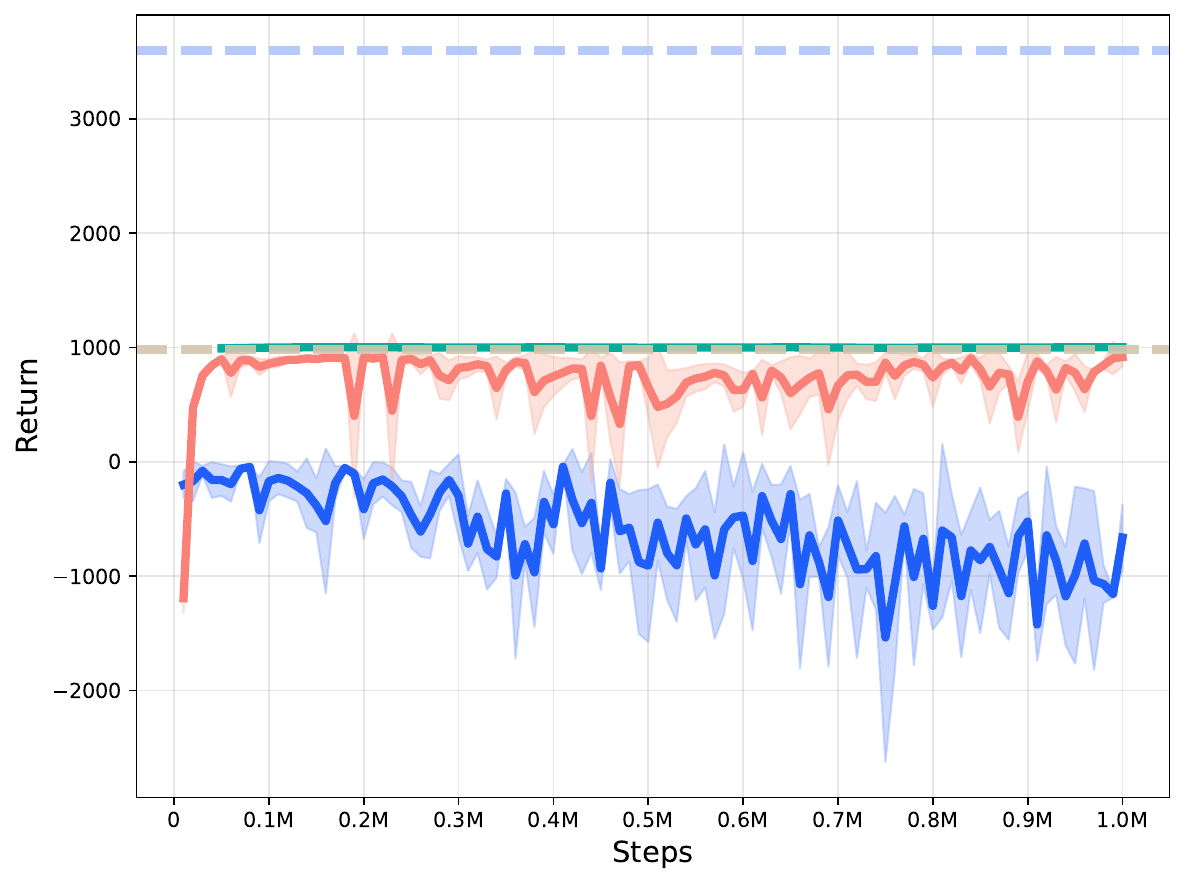}}
        \subfigure[Delay=10, $\#$Traj=1000]{\includegraphics[width=0.33\linewidth]{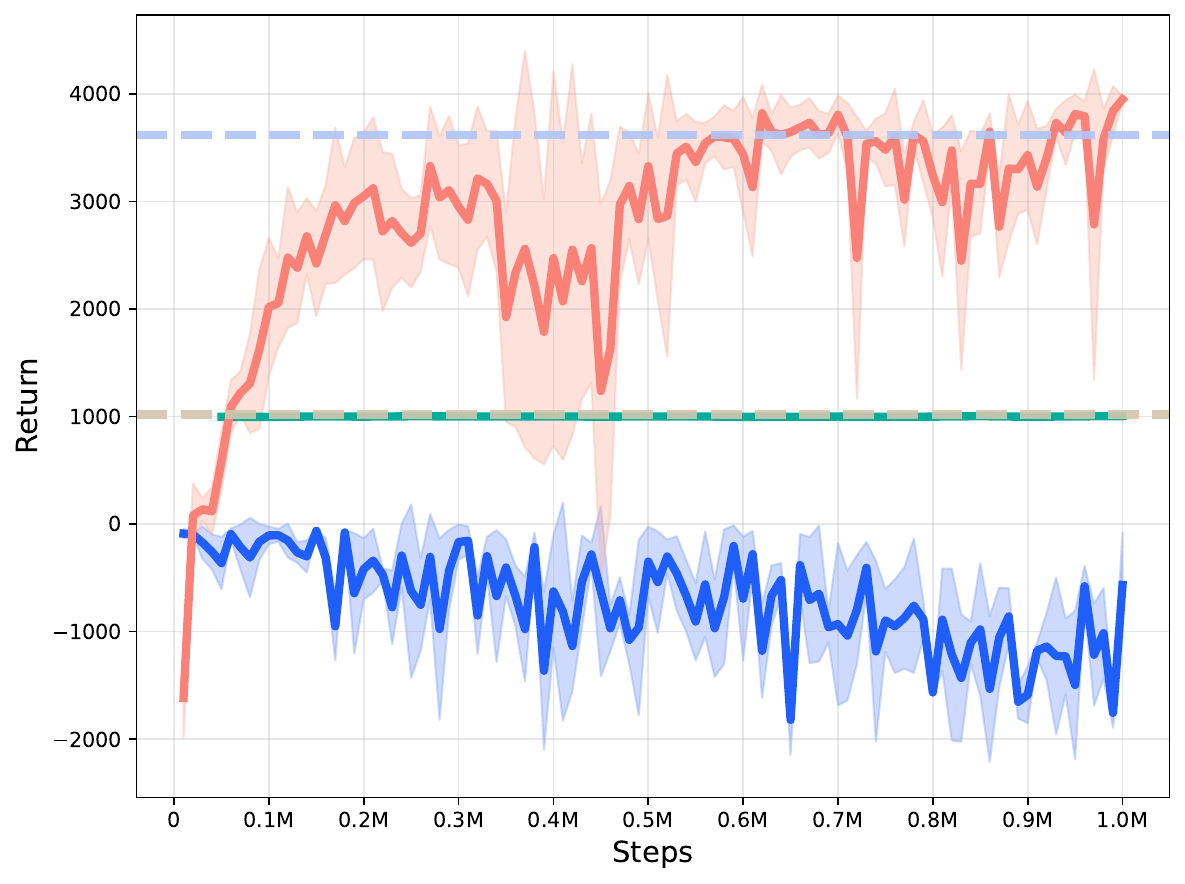}}
    }
    \centerline{
        \subfigure[Delay=25, $\#$Traj=10]{\includegraphics[width=0.33\linewidth]{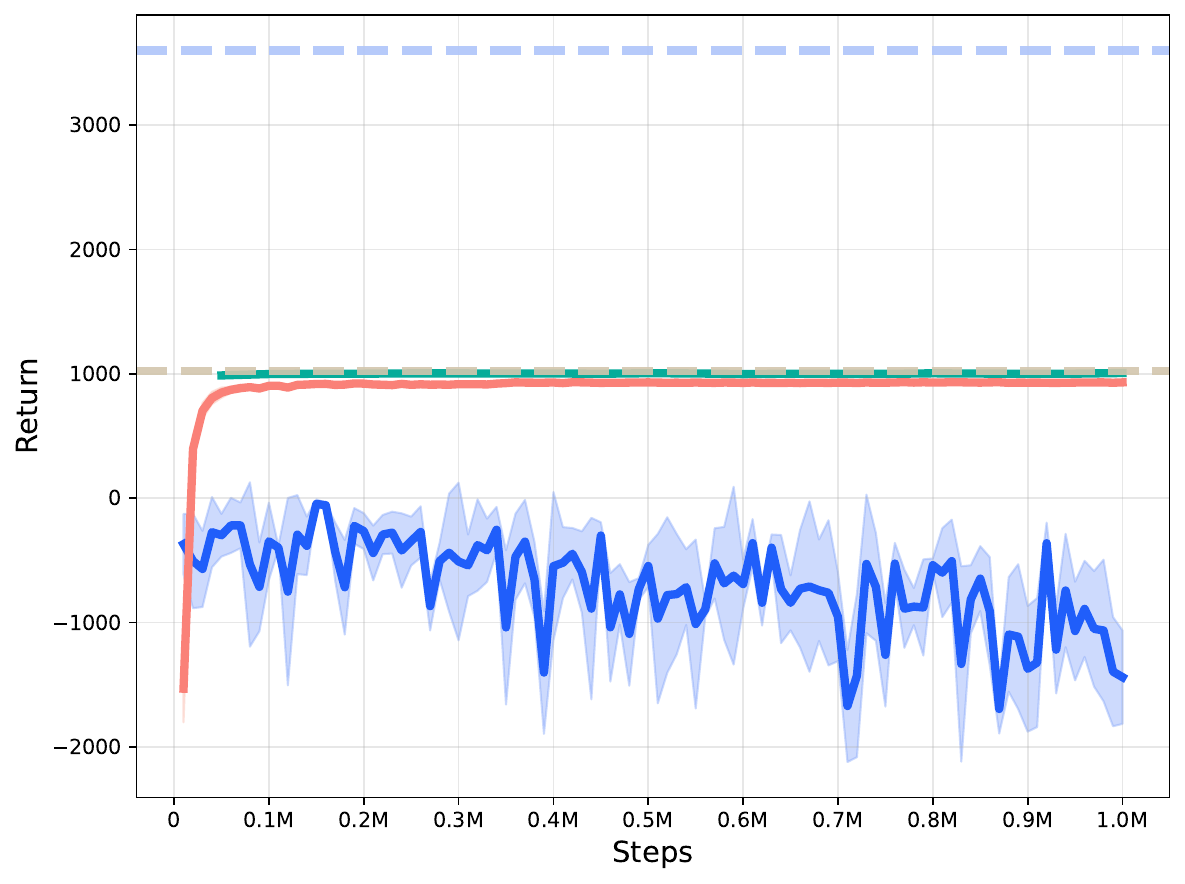}}
        \subfigure[Delay=25, $\#$Traj=100]{\includegraphics[width=0.33\linewidth]{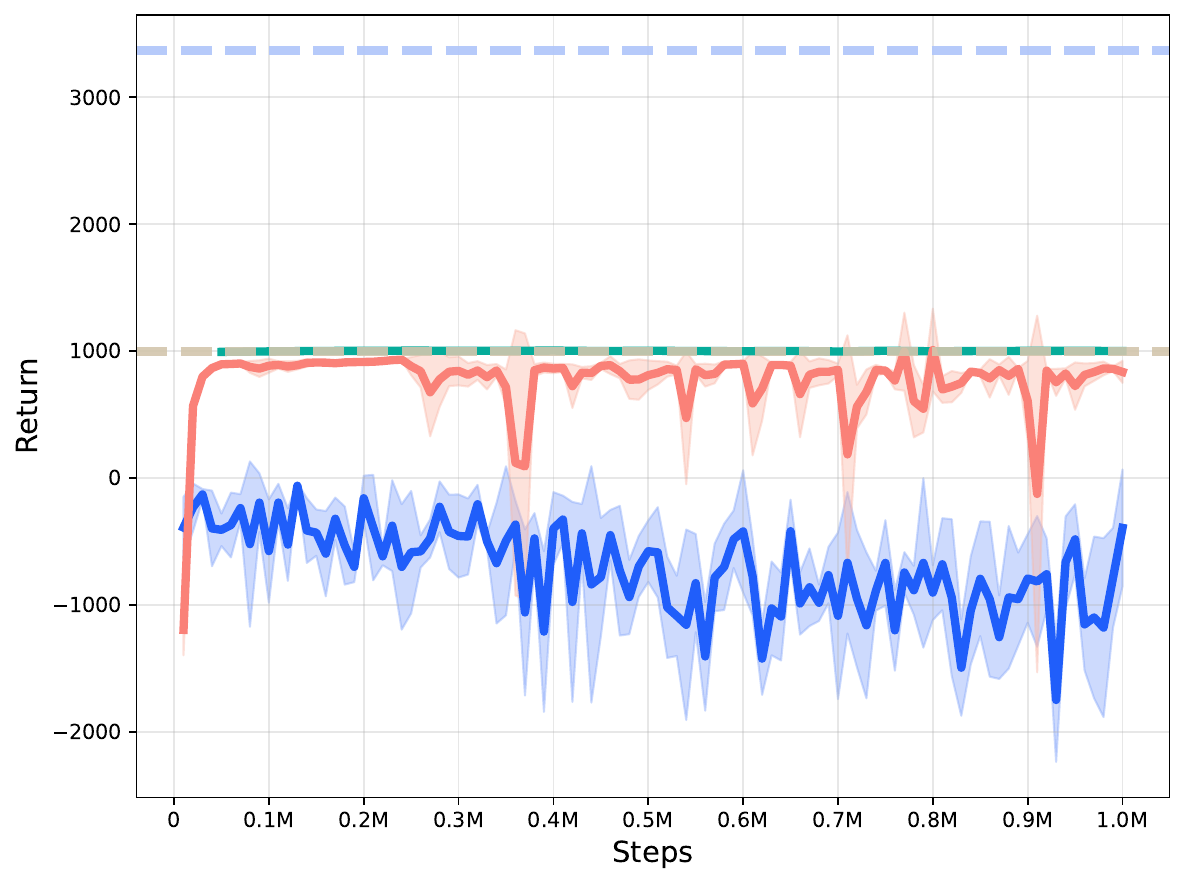}}
        \subfigure[Delay=25, $\#$Traj=1000]{\includegraphics[width=0.33\linewidth]{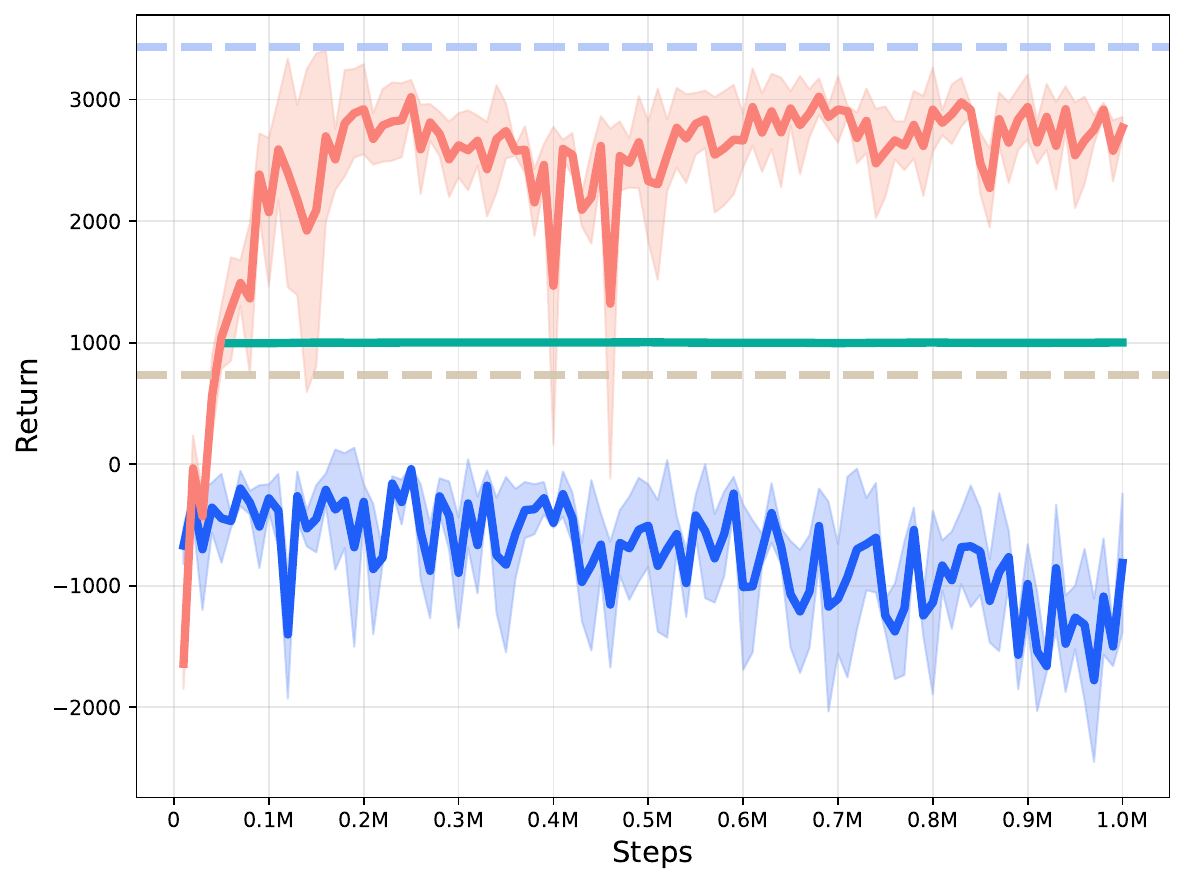}}
    }
    \centerline{
        \includegraphics[width=0.8\linewidth]{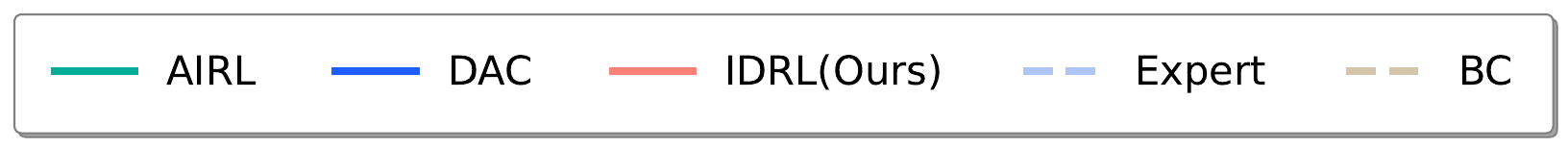}
    }
    \caption{Learning Curves on \texttt{Ant-v4} with different delays and quantities of expert demonstrations.}
\end{figure}

\clearpage
\section{Algorithm}
\label{appendix::algorithm}
\begin{algorithm*}[h]
\caption{Auxiliary Delay Policy Optimization~\citep{wu2024boosting}}
\label{algo:policy_optimization}
\begin{algorithmic}[1]
   \STATE {\bfseries Input:} actor \textcolor{lightblue}{$\pi_\psi$} for delay \textcolor{lightblue}{$\Delta$};
        actor \textcolor{lightred}{$\pi^{\tau}_\phi$}, critics \textcolor{lightred}{$Q^{\tau}_{\theta_1}, Q^{\tau}_{\theta_2}$} and target critics \textcolor{lightred}{$\hat{Q^{\tau}}_{\hat{\theta_1}}, \hat{Q^{\tau}}_{\hat{\theta_2}}$} for auxiliary delays \textcolor{lightred}{$\Delta^{\tau}$}; n-step $n$;
        replay buffer $\mathcal{D}$;
     \FOR{each batch $\{\textcolor{lightblue}{x_t}, \textcolor{lightred}{x^{\tau}_t}, a_t, r_t:r_{t+n-1}, \textcolor{lightblue}{x_{t+n}}, \textcolor{lightred}{x^{\tau}_{t+n}}\} \sim \mathcal{D}$}
            \STATE Compute TD Target $\mathbb{Y} = \sum_{i=0}^{n-1}\left[\gamma^i r_{t+i}\right] + \gamma^n \min\left(\mathbb{Y}_1, \mathbb{Y}_2\right)$, where
            $$
            \mathbb{Y}_1 =
            \mathop{\mathbb{E}}_{\textcolor{lightred}{\hat{a}\sim\pi^{\tau}_\phi(\cdot|x^{\tau}_{t+n})}
            }\left[
                \left(
                \textcolor{lightred}{Q^{\tau}_{\theta_1}(x^{\tau}_{t+n}, \hat{a})}-\log \textcolor{lightred}{\pi^{\tau}_\phi(\hat{a}|x^{\tau}_{t+n})}
                \right)
            \right],
            $$
            $$
            \mathbb{Y}_2 =
            \mathop{\mathbb{E}}_{\textcolor{lightblue}{\hat{a}\sim\pi_\psi(\cdot|x_{t+n})}
            }\left[
                \left(
                \textcolor{lightred}{Q^{\tau}_{\theta_2}(x_{t+n}, \textcolor{lightblue}{\hat{a}})}-\log \textcolor{lightblue}{\pi_\phi(\hat{a}|x_{t+1})}
                \right)
            \right].
            $$
            \STATE Update \textcolor{lightred}{$Q^{\tau}_{\theta_i}(i=1,2)$} via applying gradient descent\\
            $$    
            \begin{aligned}
                \triangledown_{\textcolor{lightred}{\theta_i}} \left[\textcolor{lightred}{Q^{\tau}_{\theta_i}(x^{\tau}_t, \textcolor{black}{a_t})}
                -
                \mathbb{Y}
                \right]. \\
            \end{aligned}
            $$
            \IF{$\text{Uniform}(0, 1) > 0.5$}
            \STATE Update \textcolor{lightred}{$\pi^{\tau}_\phi$} via applying gradient descent\\
            $$\triangledown_{\textcolor{lightred}{\phi}} \mathop{\mathbb{E}}_{\textcolor{lightred}{\hat{a}\sim\pi^{\tau}_\phi(\cdot|x^{\tau}_t)}}\left[\log \textcolor{lightred}{\pi^{\tau}_\phi(\hat{a}|x^{\tau}_t)}-\min_{i=1,2}\textcolor{lightred}{Q^{\tau}_{\theta_i}(x^{\tau}_t, \hat{a})}\right].$$
            \ELSE
            \STATE Update \textcolor{lightblue}{$\pi_\psi$} via applying gradient descent\\
            $$\triangledown_{\textcolor{lightblue}{\psi}} \mathop{\mathbb{E}}_{\textcolor{lightblue}{\hat{a}\sim\pi_\psi(\cdot|x_t)}}\left[\log \textcolor{lightblue}{\pi_\psi(\hat{a}|x_t)}-\min_{i=1,2}\textcolor{lightred}{Q^{\tau}_{\theta_i}}(\textcolor{lightred}{x^{\tau}_t}, \textcolor{lightblue}{\hat{a}})\right].$$
            \ENDIF
            \STATE Soft update target critics weights \textcolor{lightred}{$Q^{\tau}_{\theta_1}, Q^{\tau}_{\theta_2}$} via copying from \textcolor{lightred}{$\hat{Q^{\tau}}_{\hat{\theta_1}}, \hat{Q^{\tau}}_{\hat{\theta_2}}$}, respectively.\\
        \ENDFOR
       \STATE {\bfseries Output:} actor \textcolor{lightblue}{$\pi_\psi$};
\end{algorithmic}
\end{algorithm*}
\end{document}